\newtheorem{theorem}{Theorem}[section]
\newtheorem{remark}[theorem]{Remark}
\newtheorem{definition}[theorem]{Definition}
\newtheorem{lemma}[theorem]{Lemma}
\newtheorem{corollary}[theorem]{Corollary}
\newcommand\bSI[1]{{\small[\SI{}{#1}]}}
\newlength\unitwdth
\newlength\numwdth
\newlength\tdima
\newcommand\SIdescr[2]{
    \setlength\tdima{\linewidth}
    \addtolength\tdima{\@totalleftmargin}
    \addtolength\tdima{-\dimen\@curtab}
    \addtolength\tdima{-\unitwdth}
    \addtolength\tdima{-\numwdth}
    \parbox[t]{\tdima}{
        #1
        \leaders\hbox{$\m@th\mkern \@dotsep mu\hbox{\tiny.}\mkern \@dotsep mu$}
        \hfill
        \ifhmode\strut\fi
        \makebox[0pt][l]{
            \makebox[\unitwdth][l]{}
            \makebox[\numwdth][r]{#2}}}}
\newcommand{\Z}{\mathbb{Z}}
\newcommand{\N}{\mathbb{N}}
\newcommand{\R}{\mathbb{R}}
\newcommand{\cNN}{\mathcal{N}}
\newcommand{\supp}{\mathrm{supp}}
\renewcommand{\epsilon}{\varepsilon}
\newcommand{\eps}{\epsilon}
\newcommand{\M}{\mathcal{M}}
\renewcommand{\L}{\mathcal{L}}
\DeclarePairedDelimiter{\paren}{(}{)}
\DeclarePairedDelimiter{\bracket}{\{ }{\} }
\def\eqref#1{equation~\ref{#1}}
\def\1{\bm{1}}
\def\eps{{\epsilon}}
\DeclareMathAlphabet{\mathsfit}{\encodingdefault}{\sfdefault}{m}{sl}
\SetMathAlphabet{\mathsfit}{bold}{\encodingdefault}{\sfdefault}{bx}{n}
\DeclareMathOperator*{\argmax}{arg\,max}
\DeclareRobustCommand{\rprod}{{\mathpalette\irprod\relax}}
\newcommand{\irprod}[2]{\raisebox{\depth}{$#1\prod$}}
\DeclareMathSymbol{\Phi}{\mathalpha}{operators}{8}
\DeclareMathSymbol{\Psi}{\mathalpha}{operators}{9}
\pgfplotsset{compat=1.16}
\begin{document}
\title{High-Dimensional Distribution Generation Through Deep Neural Networks}
\author{Dmytro Perekrestenko, Léandre Eberhard, and Helmut Bölcskei

\thanks{D. Perekrestenko is with Google, Switzerland (email: dperekrestenko@google.com)}

\thanks{L. Eberhard is with Upstart Network Inc., USA (e-mail: leandre.eberhard@upstart.com)}

\thanks{H. B\"olcskei is with Chair for Mathematical Information Science, ETH Zurich, Switzerland (e-mail: hboelcskei@ethz.ch)}

\thanks{This paper builds on and significantly extends previous work by a subset of the authors reported in \cite{icml2020}.}

\thanks{The work of D.~Perekrestenko and L.~Eberhard was carried out while they were at ETH Zurich.}
}
\maketitle
\thispagestyle{plain}

\begin{abstract}
We show that every $d$-dimensional probability distribution of bounded support can be generated through deep ReLU networks out of a $1$-dimensional uniform input distribution. What is more, this is possible without incurring a cost---in terms of approximation error measured in Wasserstein-distance---relative to generating the $d$-dimensional target distribution from $d$ independent random variables. This is enabled by a vast generalization of the space-filling approach discovered in \cite{Bailey2019}. The construction we propose elicits the importance of network depth in driving the Wasserstein distance between the target distribution and its neural network approximation to zero. Finally, we find that, for histogram target distributions, the number of bits needed to encode the corresponding generative network equals the fundamental limit for encoding probability distributions as dictated by quantization theory.
\end{abstract}

\section{Introduction}

Deep neural networks have been employed very successfully as generative 
models for complex natural data such as images \cite{radford2015unsupervised,karras2018stylebased} and natural language \cite{bowman2015generating,xu2018dpgan}.
Specifically, the idea is to train deep networks so that they realize complex high-dimensional probability distributions by transforming samples taken from simple low-dimensional distributions such as uniform or Gaussian \cite{Welling2014:VAE,Goodfellow2014:Gan,Arjovsky17:Wgan}. 

Generative networks with output dimension higher than the input dimension occur, for instance, in language modelling where deep networks are used to predict the next word in a text sequence. Here, 
the input layer size is determined by the dimension of the word embedding (typically $\sim 100$) and the output layer, representing a vector of probabilities for each of the words in the vocabulary, is of the size of the vocabulary (typically $\sim 100k$). Another example where the dimension of the output distribution is mandated to be higher than that of the input distribution is given by explicit \cite{Welling2014:VAE,Tolstikhin2018Wassauto} and implicit \cite{Goodfellow2014:Gan,Arjovsky17:Wgan} density generative networks.

Notwithstanding the practical success of deep generative networks, a profound theoretical understanding of their representational capabilities is still lacking. First results along these lines appear in 
\cite{lee2017ability}, where it was shown that generative networks can  approximate distributions arising from the composition of Barron functions \cite{barron1993}. It remains unclear, however, which distributions can be obtained in such a manner. More recently, it was established \cite{lu2020universal} that for every given target distribution (of finite third moment) and source distribution, both defined on $\R^d$, there exists a ReLU network whose gradient pushes forward the source distribution to an arbitrarily accurate---in terms of Wasserstein distance---approximation of the target distribution.
The aspect of dimensionality increase was addressed in \cite{Bailey2019}, where it is shown 
that a uniform univariate source distribution can be transformed, by a ReLU network, into a uniform target distribution of arbitrary dimension through a space-filling approach. Besides, \cite{Bailey2019} also demonstrates how a univariate Gaussian target distribution can be obtained from a univariate uniform source distribution and vice versa. In a general context, the problem of optimal transport \cite{villani2008optimal} between source and target distributions on spaces of different dimensions was studied in \cite{mccann2019optimal}.

\iffalse
Bailey and Telgarsky \cite{Bailey2019} show how deep ReLU networks can be used to increase the dimensionality of uniform distributions and how a univariate uniform distribution can be turned into a univariate Gaussian distribution and vice versa.
\fi

The approximation of distributions through generative networks is inherently related to function approximation and hence to the expressivity of neural networks. A classical result along those lines is the universal approximation theorem \cite{Cybenko1989,HORNIK1989359}, which states that single-hidden-layer neural networks with sigmoidal activation function can approximate continuous functions on compact subsets of $\mathbb{R}^n$ arbitrarily well. More recent developments in this area are concerned with the influence of network depth on attainable approximation quality \cite{telgarsky2016benefits,daubechies2019nonlinear,eldan2015power}. A theory establishing the fundamental limits of deep neural network expressivity is provided in \cite{deep-approx-18,deep-it-2019}.

The aim of the present paper is to develop a universal approximation result for generative neural networks. Specifically, we show
that every target distribution supported on a bounded subset of $\R^d$ can be approximated arbitrarily well in terms of Wasserstein distance by pushing forward a $1$-dimensional uniform source distribution through a ReLU network. The result is constructive in the sense of explicitly identifying the corresponding generative network. Concretely, we proceed in two steps. Given a target distribution, we find the histogram distribution that best approximates it---for a given histogram resolution---in Wasserstein distance. This histogram distribution is then realized by a ReLU network driven by a uniform univariate input distribution. The construction of this ReLU network exploits a space-filling property, vastly generalizing the one discovered in \cite{Bailey2019,icml2020}. The main conceptual insight of the present paper is that generating arbitrary $d$-dimensional target distributions from a $1$-dimensional uniform distribution through a deep ReLU network does not come at a cost---in terms of approximation error measured in Wasserstein distance---relative to generating the target distribution from $d$ independent random variables through, e.g., (for arbitrary $d$) the normalizing flows method \cite{rezende15} and (for $d=2$) the Box-Muller method \cite{box1958}. We emphasize that the generating network has to be deep, in fact its depth has to go to infinity to obtain the same Wasserstein-distance error as a construction from $d$ independent random variables would yield. 
Finally, we find that, for histogram target distributions, the number of bits needed to encode the corresponding generative network equals the fundamental limit for encoding probability distributions as dictated by quantization theory \cite{10.5555/555805}.

\section{Definitions and notation} 

We start by introducing general notation.

$\log$ stands for the logarithm to base $2$. For $n_1,n_2 \in \N$, the set of integers in the range $[n_1,n_2]$ is designated as $[n_1\!:\!n_2]$. For $\mathbf{x} = (x_1,x_2, \dots, x_d) \in \mathbb{R}^d$, we denote the vector obtained by retaining the first $t, \ t\leq d$, entries by $\mathbf{x}_{[1:t]} := (x_1,x_2, \dots, x_t) \in \mathbb{R}^t$.
$U(\Delta)$ stands for the uniform distribution on the interval $\Delta$; when $\Delta=[0,1]$, we simply write $U$. Given a probability density function (pdf) $p$, we denote its push-forward under the function $f$ as $f \# p$. $\delta_{x}$ refers to the Dirac delta distribution. $\mathfrak{B}^d$ stands for the Borel $\sigma$-algebra on $\R^d$, i.e., the smallest $\sigma$-algebra on $\R^d$ that contains all open subsets of $\R^d$. For a vector $\mathbf{b}\in\R^d$, we let $\|\mathbf{b}\|_{\infty}:=\max_{i}|b_i|$, similarly we write $\|\mathbf{A}\|_{\infty}:=\max_{i,j}|\mathbf{A}_{i,j}|$ for the matrix $\mathbf{A}\in\R^{m\times n}$. The Cartesian product of the intervals $\mathcal{I}_{i},\,i \in [1\!:\!d]$, is denoted by $\bigtimes_{i=1}^{d} \mathcal{I}_i = \mathcal{I}_1\bigtimes \mathcal{I}_2 \bigtimes \cdots \bigtimes \mathcal{I}_d$.
Finally, $\chi_{\mathcal I}$ stands for the indicator function on the set $\mathcal I$.

We proceed to define ReLU neural networks.
\begin{definition}\label{def:NN}
Let $L\in \N$ and $N_0, N_1, \ldots, N_{L}\in \N$. A \textnormal{ReLU neural network} $\Phi$ is a map $\Phi: \R^{N_0} \to \R^{N_L}$ given by
\begin{equation*}
\Phi = \begin{cases}
\begin{array}{lc} W_1, \ & L=1\\
W_2\circ\rho\circ W_1, \ & L=2\\
W_L\circ\rho \circ W_{L-1} \circ \rho \circ \dots \circ \rho \circ W_{1}, \ & L\ge3 ,
\end{array} \end{cases}
\end{equation*}
where, for $\ell\in [1\!:\!L]$, $W_{\ell}\colon \R^{N_{\ell-1}} \to \R^{N_\ell},W_\ell(\mathbf{x}):=\mathbf{A}_\ell \mathbf{x} + \mathbf{b}_\ell$
are the associated affine transformations with (weight) matrices $\mathbf{A}_{\ell}\in \mathbb{R}^{N_{\ell}\times N_{\ell-1}}$ and (bias) vectors $\mathbf{b}_\ell\in \mathbb{R}^{N_\ell}$, and the ReLU activation function $\rho\colon\R\to\R,\ \rho(x) := \max(x,0)$ acts component-wise, i.e., $\rho(x_1,\dots,x_N):=(\rho(x_1),\dots,\rho(x_N))$.
We denote by $\cNN_{d,d'}$ the set of all ReLU networks with input dimension $N_0=d$ and output dimension $N_L=d'$. Moreover, we define the following quantities related to the notion of size of the network $\Phi$:
\begin{itemize}
    \item the \emph{connectivity} $\M(\Phi)$ is the total number of nonzero weights, i.e., entries in the matrices $\mathbf{A}_\ell$, $\ell\in [1\!:\!L]$, and the vectors $\mathbf{b}_\ell$, $\ell\in [1\!:\!L]$
    \item \emph{depth} $\L(\Phi):=L$
    \item \emph{width} $\mathcal{W}(\Phi):=\max_{\ell=0,\dots,L} N_\ell$
\end{itemize}
\end{definition}

The distance between probability measures will be quantified through Wasserstein distance defined as follows.

\begin{definition}\label{def:wasserstein-distance}
Let $\mu$ and $\nu$ be probability measures on $(\R^d, \mathfrak{B}^d)$.
A coupling between $\mu$ and $\nu$ is defined as a probability measure $\pi$ on $(\R^d \times \R^d, \mathfrak{B}^{2d})$ such that $\pi(A_1 \times \R^d) = \mu(A_1)$ and $\pi(\R^d \times A_2) = \nu(A_2)$, for all $A_1,A_2 \in \mathfrak{B}^d$.
Let $\scriptstyle \rprod \displaystyle (\mu,\nu)$ be the set of all couplings between $\mu$ and $\nu$.
The Wasserstein distance between $\mu$ and $\nu$ is defined as \[ W(\mu,\nu):= \inf_{\pi \in \scriptscriptstyle \rprod \scriptstyle (\mu,\nu)} \int_{\R^d \times \R^d} \|\mathbf{x}-\mathbf{y}\| d \pi(\mathbf{x},\mathbf{y}),  \]
where $\|\cdot\|$ denotes Euclidean norm.
\end{definition}

We will frequently use the concept of histogram distributions formalized as follows.

\begin{definition}
\label{gen_1_dim_dist}
A random variable $X$ is said to have a general histogram distribution of resolution $n$ on $[0,1]$, denoted as $X \sim \mathcal{G}[0,1]^1_n$, if for some 
$t_0,t_1,\dots,t_n \in \R$ such that $0=t_0 \leq t_1 \leq \dots \leq t_n=1$ and if $t_i=t_{i+1}$, then $t_{i+2}\neq t_i$, $\forall i \in [0\!:\!(n-2)]$, its pdf is given by
\begin{equation}
\begin{aligned}
p(x) &= \sum_{k=0}^{n-1} w_{k} \kappa_{[t_k,t_{k+1}]}(x), \,\, \mbox{with} \,\, \sum_{k=0}^{n-1} w_{k} d(t_{k},t_{k+1}) = 1, \label{eq:def-gen-hist}
\end{aligned}
\end{equation}
and $w_{k}>0$ for all $k \in [0\!:\!(n-1)]$. Here,
\[\kappa_{[t_k,t_{k+1}]}(x) = \begin{cases}
    \chi_{[t_k,t_{k+1}]}(x), & \mbox{ if } t_k<t_{k+1} \\
    \delta_{x-t_k}, & \mbox{ if } t_k=t_{k+1}\end{cases},\]
and
\begin{equation}
    \label{d_function}
d(t_{k},t_{k+1}) = \begin{cases}
    t_{k+1} - t_{k}, & \mbox{ if } t_k<t_{k+1} \\
    1, & \mbox{ if } t_k=t_{k+1}\end{cases}.
\end{equation}
\end{definition}

General histogram distributions allow for bins of arbitrary size and for point singularities, see the right-hand plot in Figure \ref{fig:gen_dist}. We will, however, mostly be
concerned with histogram distributions of uniform tile size, defined as follows.
\begin{definition}
\label{d_dim_dist}
A random vector $\mathbf{X} = (X_1, X_2, \dots, X_d)^\top$ is said to have a histogram distribution of resolution $n$ on the $d$-dimensional unit cube, denoted as $\mathbf{X} \sim \mathcal{E}[0,1]_n^d$, if its pdf is given by
\[
\begin{aligned}
p_{\mathbf{X}}(\mathbf{x}) = \sum_{\mathbf{k}} w_{\mathbf{k}} \chi_{c_{\mathbf{k}}}(\mathbf{x}), \,\, \mbox{with} \,\, \sum_{\mathbf{k}} w_{\mathbf{k}}  = n^d,
\end{aligned}
\]
and $w_{\mathbf{k}}>0$ for all vectors $\mathbf{k} = (i_1, i_2, \dots, i_d) \in [0\!:\!(n-1)]^d$, referred to as index vectors, and 
$c_{\mathbf{k}} = [i_1/n, (i_1+1)/n] \times [i_2/n, (i_2+1)/n] \times \dots \times [i_d/n, (i_d+1)/n]$.
\end{definition}

Example histogram distributions in the $1$- and $2$-dimensional case, respectively, are depicted in Figs.~\ref{fig:hist_1d} and \ref{fig:hist}. Throughout the paper, to indicate that the random vector $\mathbf{X}$ with distribution $p_{\mathbf{X}}(\mathbf{x})$ satisfies $\mathbf{X} \sim \mathcal{E}[0,1]_n^d$, we shall frequently also write $p_{\mathbf{X}}(\mathbf{x}) \in \mathcal{E}[0,1]_n^d$.

\begin{remark}
 For ease of exposition, in Definitions \ref{gen_1_dim_dist} and \ref{d_dim_dist},
 we let the intervals $[t_k,t_{k+1}]$ and the cubes $c_{\mathbf{k}}$, respectively, be closed, thus allowing the breakpoints to belong to different intervals/cubes. While this comes without loss of generality, for concreteness, it is understood that
 the value of the pdf at a breakpoint is given by the average across the intervals/cubes containing the breakpoint.
\end{remark}

\begin{figure}
\centering
\begin{minipage}{.6\textwidth}
  \begin{flushleft}
  \includegraphics[width=0.9\linewidth]{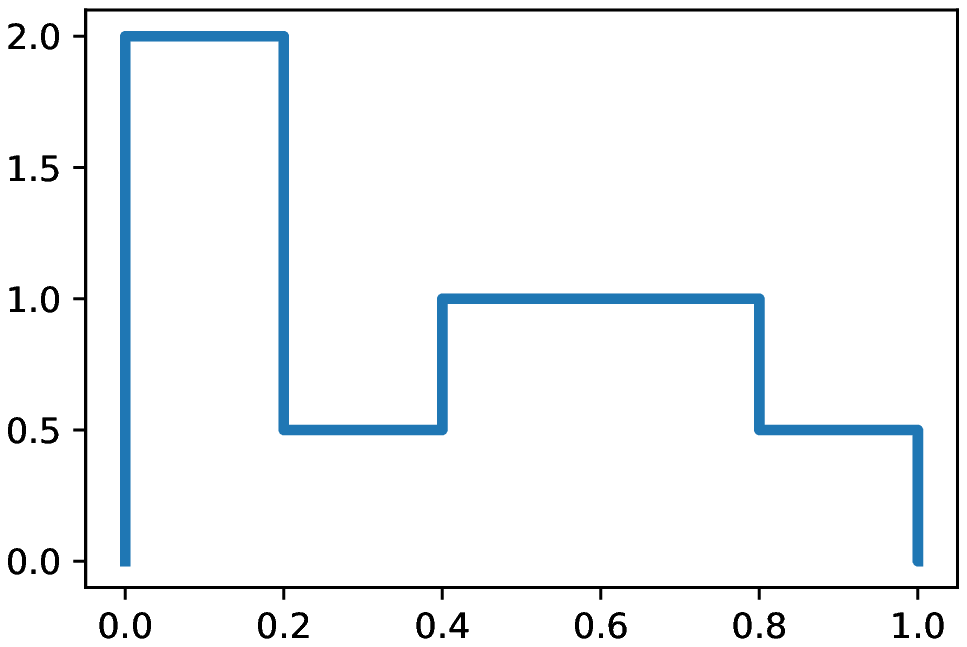}
  \captionof{figure}{Histogram distribution $\mathcal{E}[0,1]_5^1$}
  \label{fig:hist_1d}
  \end{flushleft}
\end{minipage}%
\begin{minipage}{.4\textwidth}
  \begin{flushright}
  \includegraphics[width=\linewidth]{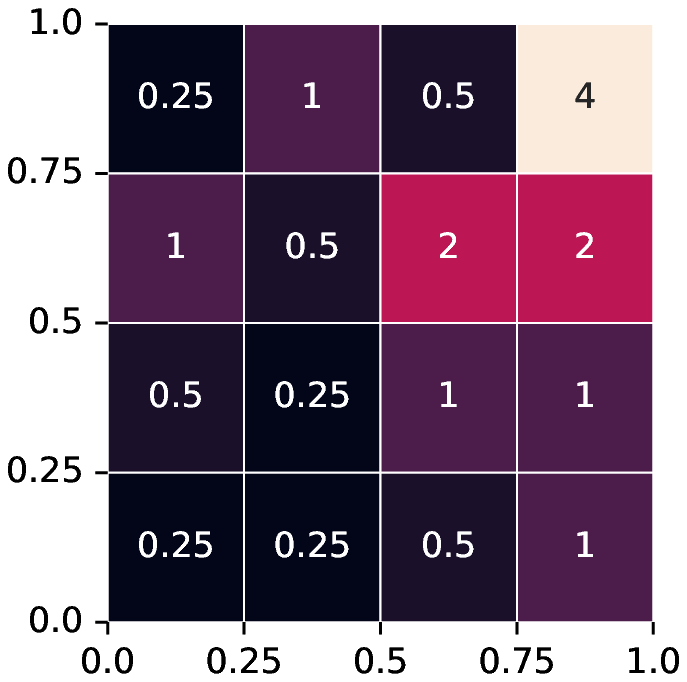}
  \captionof{figure}{Histogram distribution $\mathcal{E}[0,1]_4^2$}
  \label{fig:hist}
  \end{flushright}
\end{minipage}
\end{figure}

\section{Sawtooth functions} \label{sec:sawtooth-functions}

As mentioned above, our universal generative network construction is based on a new space-filling property of ReLU networks, vastly generalizing the one discovered in \cite{Bailey2019,icml2020}. At the heart of this
idea are
higher-order sawtooth functions obtained as follows. 
Consider the sawtooth function $g: \R \rightarrow [0,1]$,
\begin{equation*}
g(x) = \begin{cases} 
2x, &\mbox{if } x \in [0,1/2), \vspace{0.1cm}\\
2(1-x), &\mbox{if } x \in [1/2,1], \\
0, & \mbox{else},
\end{cases}
\end{equation*}
let $g_{1}(x)=g(x)$, and define the sawtooth function of order $s$ as the $s$-fold composition of $g$ with itself according to
\begin{equation*}
\label{g_s_def}
g_s := \underbrace{g \circ g \circ \dots \circ g}_{s},
\hspace{0.3cm} s \geq 2.
\end{equation*}
Figure \ref{fig:saw_def} depicts the sawtooth functions of orders $1,2,$ and $3$. Next, we note that $g$ can be realized by a $2$-layer ReLU network $\Phi_g \in \cNN_{1,1}$ of connectivity $\mathcal{M}(\Phi_g) = 8$ and depth $\mathcal{L}(\Phi_g) = 2$ according to 
$\Phi_g = W_2 \circ \rho \circ W_1=g$
with
\[W_1(x) =  \hspace{-0.1cm} 
 \begin{pmatrix}
  2\\
  4\\
  2
 \end{pmatrix}  
 \hspace{-0.1cm}x \
 -  
 \begin{pmatrix}
  0 \\
  2 \\
  2
 \end{pmatrix} \hspace{-0.1cm},
 \hspace{0.15cm}
 W_2(\mathbf{x}) = \begin{pmatrix}
  1\, & -1\, & 1
 \end{pmatrix}  \hspace{-0.1cm} \begin{pmatrix}
 x_1\\
 x_2\\
 x_3
 \end{pmatrix}\hspace{-0.1cm}.
\]
The $s$th-order sawtooth function $g_s$ can hence be realized by a ReLU network $\Phi^s_g \in \cNN_{1,1}$ of connectivity $\mathcal{M}(\Phi^s_g) = 11s - 3$ and depth $\mathcal{L}(\Phi^s_g) = s+1$ according to $\Phi^s_g = W_2 \circ \rho \circ \underbrace{W_g \circ \rho \circ \cdots \circ W_g \circ \, \rho}_{s-1} \circ\, W_1=g_s$ with
\[W_g(\mathbf{x}) = \begin{pmatrix}
  2\,\, & -2\,\, & 2\\
  4\,\, & -4\,\, & 4\\
  2\,\, & -2\,\, & 2
 \end{pmatrix} \begin{pmatrix}
 x_1\\
 x_2\\
 x_3
 \end{pmatrix}
 -
 \begin{pmatrix}
  0 \\
  2 \\
  2
 \end{pmatrix}\hspace{-0.1cm}.
\]

We close this section with an important technical ingredient of the generalized space-filling idea presented in Section~\ref{sec:distribution-dimensionality-increase}.

\begin{lemma}
\label{p_saw}
Let $f(x)$ be a continuous function on $[0,1]$, with $f(0)=0$. Then, for all $s \in \N$,
\begin{equation}
f(g_s(x)) = \sum^{2^{s-1}-1}_{k=0}  f\big(g(2^{s-1}x-k)\big), \label{eq:fg-decomposition}
\end{equation}
and for all $k \in [0\!:\!(2^{s-1}-1)]$,
\begin{equation}
\supp\big(f\big(g(2^{s-1}x-k)\big)\big) = \Bigg(\frac{k}{2^{s-1}}, \frac{k+1}{2^{s-1}}\Bigg). \label{eq:fg-support}
\end{equation}
\end{lemma}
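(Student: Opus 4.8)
The plan is to prove both identities by exploiting the key structural fact that the sawtooth function $g_s$ decomposes into $2^{s-1}$ scaled copies of the single tent $g$, each supported on a dyadic subinterval of $[0,1]$. I would proceed by induction on $s$, although it is also possible to argue directly once the right observation about $g_s$ is in place.

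First I would establish the elementary ``doubling'' identity that underlies everything: for $x\in[0,1]$ one has $g_{s}(x) = g_{s-1}(g(x))$, but more useful is the companion relation $g_s(x) = g(2^{s-1}x - k)$ whenever $x$ lies in the dyadic interval where this makes sense. Concretely, I claim that for each $k\in[0:(2^{s-1}-1)]$ and each $x\in\big(\tfrac{k}{2^{s-1}},\tfrac{k+1}{2^{s-1}}\big)$, we have $g_s(x) = g(2^{s-1}x-k)$, while $g(2^{s-1}x'-k)=0$ for $x'$ outside that interval. This is the classical description of the order-$s$ sawtooth as $2^{s-1}$ rescaled tents laid side by side, each rising from $0$ to $1$ and back to $0$; it follows by induction using $g(2x)=g_2(x)$ on $[0,1/2]$ and the reflection symmetry $g_s(1-x)=g_s(x)$, or simply by noting $g_{s} = g\circ g_{s-1}$ and tracking how $g_{s-1}$ maps each dyadic interval of length $2^{-(s-1)}$ affinely onto $[0,1]$ (alternately increasing and decreasing), so that composing with $g$ on the outside replaces each such affine piece by a tent.

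Given this, \eqref{eq:fg-support} is immediate: the argument $g(2^{s-1}x-k)$ is nonzero precisely when $2^{s-1}x-k\in(0,1)$, i.e. $x\in\big(\tfrac{k}{2^{s-1}},\tfrac{k+1}{2^{s-1}}\big)$, and since $f(0)=0$, the composition $f\big(g(2^{s-1}x-k)\big)$ vanishes outside this open interval and its support is contained in the closure; one checks it equals the open interval as stated (the endpoints give $g=0$ hence $f(g)=f(0)=0$, so the support, taken as the set where the function is nonzero, is exactly the open interval---consistent with the statement). For \eqref{eq:fg-decomposition}, I would fix $x\in[0,1]$. If $x$ is a dyadic breakpoint $k/2^{s-1}$, both sides vanish (left side: $g_s$ at a breakpoint is $0$ or $1$; at the interior breakpoints $g_s=0$... actually at $x=k/2^{s-1}$ we get $g_s(x)\in\{0,1\}$, need $f(1)$ handled---so more carefully, the even breakpoints give $g_s=0$ and odd give $g_s=1$; I will treat these by continuity or note the finitely many points do not affect the stated pointwise identity once one adopts the breakpoint-averaging convention, or simply observe both sides are continuous in $x$ so it suffices to check on the open dyadic intervals). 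For $x$ in the open interval $\big(\tfrac{k}{2^{s-1}},\tfrac{k+1}{2^{s-1}}\big)$, the right-hand sum has exactly one nonzero term, the $k$-th, which equals $f\big(g(2^{s-1}x-k)\big) = f(g_s(x))$ by the claim above, matching the left-hand side.

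The main obstacle is the careful bookkeeping in the inductive proof of the side-by-side-tent description of $g_s$: one must keep track of which dyadic pieces $g_{s-1}$ sends to increasing versus decreasing affine maps onto $[0,1]$, and verify that post-composing with $g$ (which is itself symmetric) turns each into a full tent regardless of orientation, so that the orientation bookkeeping actually collapses. A clean way to sidestep this is to prove directly by induction that $g_s$ restricted to $\big[\tfrac{k}{2^{s-1}},\tfrac{k+1}{2^{s-1}}\big]$ equals $x\mapsto g(2^{s-1}x-k)$, using $g_s=g_{s-1}\circ g$ together with the base case $s=1$ and the identity $g(x) = \tfrac12\big(\tfrac{\,\cdot\,}{}\big)$... more precisely using that $g$ maps $[0,1/2]$ affinely increasingly and $[1/2,1]$ affinely decreasingly onto $[0,1]$, so $g_{s-1}\circ g$ on $[0,1/2]$ is $g_{s-1}$ of an increasing rescaling and the induction hypothesis applies after a change of variables; the decreasing half is handled by the symmetry $g_{s-1}(1-u)=g_{s-1}(u)$, which itself is proved by an easy induction. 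Everything else is routine.
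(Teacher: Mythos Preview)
Your proposal is correct and follows essentially the same route as the paper: both rest on the decomposition of $g_s$ into $2^{s-1}$ disjointly supported copies of $g$, then use $f(0)=0$ together with pairwise disjoint supports to push $f$ through the sum. The only difference is that the paper simply cites this decomposition of $g_s$ from \cite{telgarsky2016benefits}, whereas you spend most of your effort re-deriving it by induction (and worrying about orientation bookkeeping that, as you correctly note, collapses because $g$ is symmetric).
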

\begin{proof}
We first note that the sawtooth functions $g_{s}(x)$ satisfy \cite{telgarsky2016benefits}
\[g_s(x) = \sum^{2^{s-1}-1}_{k=0} g(2^{s-1}x-k),\]
with $g(2^{s-1}x-k)$ supported on $\left(\frac{k}{2^{s-1}}, \frac{k+1}{2^{s-1}}\right)$. As $f(0)=0$, the support of $f(g(2^{s-1}x-k))$ coincides with that of $g(2^{s-1}x-k)$, which in turn yields (\ref{eq:fg-support}). To establish (\ref{eq:fg-decomposition}), we note that the supports of
$g(2^{s-1}x-k)$ are pairwise disjoint across $k$ and hence
\[
\hspace{1.15cm}f(g_s(x)) = f\Bigg(\sum^{2^{s-1}-1}_{k=0} g(2^{s-1}x-k)\Bigg) = \sum^{2^{s-1}-1}_{k=0}  f\big(g(2^{s-1}x-k)\big).\hspace{1.15cm}\qedhere
\]
\end{proof}

\begin{figure}
    \begin{center}
    \captionsetup{justification=centering}
    \hspace{-0.5cm} \includegraphics[width=\textwidth]{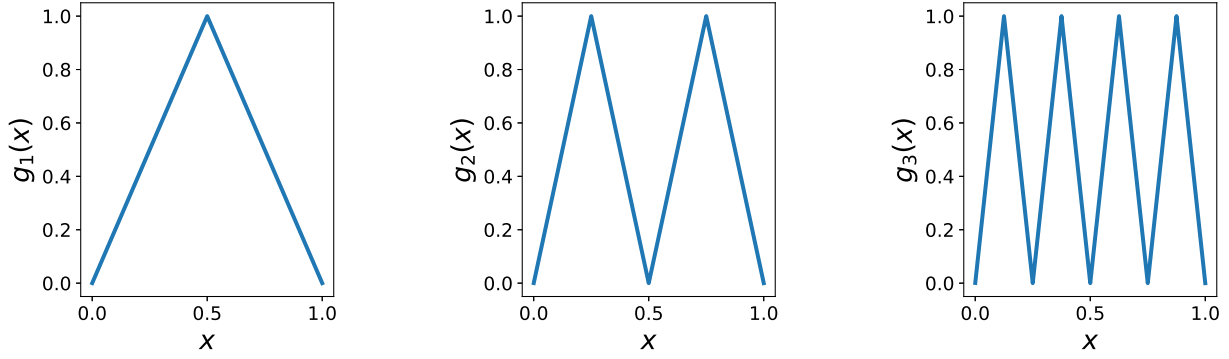}
     \caption{Sawtooth functions}
     \label{fig:saw_def}
     \end{center}
\end{figure}
\section{ReLU networks generate histogram distributions}

This section establishes a systematic connection between ReLU networks and histogram distributions. Specifically, we show that the pushforward of a uniform distribution under a piecewise linear function results in a histogram distribution. We also identify, for a given histogram distribution, the piecewise linear function generating it under pushforward of a uniform distribution. Combined with the insight that ReLU networks realize piecewise linear functions, the desired connection is established.

We start with an auxiliary result.

\begin{lemma}
\label{one-piece}
Let $a,b \in \R, a<b, \Delta = [a,b]$, and let $h(x)=mx+s$, for $x \in \R$, with $m \in \R, s \in \R$. Then, $Q = h\#U(\Delta)$ is uniformly distributed on $[ma+s,mb+s]$, for $m>0$, and on $[mb+s,ma+s]$, for $m<0$. For $m=0$, the pdf of $Q$ is given by $\delta_{\cdot-s}$.
\end{lemma}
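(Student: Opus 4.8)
The plan is to compute the pushforward measure $Q = h\#U(\Delta)$ directly from its definition, by evaluating $Q(B)$ for Borel sets $B$ via $Q(B) = U(\Delta)(h^{-1}(B))$, and to separate the three cases $m>0$, $m<0$, and $m=0$. Recall that $U(\Delta)$ has density $\frac{1}{b-a}\chi_{[a,b]}$, so for any Borel set $B \subseteq \R$ we have $Q(B) = \frac{1}{b-a}\,\lambda\big(h^{-1}(B) \cap [a,b]\big)$, where $\lambda$ is Lebesgue measure.

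For $m>0$, the map $h(x) = mx + s$ is an increasing affine bijection of $\R$ with inverse $h^{-1}(y) = (y-s)/m$, and it sends $[a,b]$ bijectively onto $[ma+s, mb+s]$. Taking $B = [ma+s, y]$ for $y \in [ma+s, mb+s]$, we get $h^{-1}(B) \cap [a,b] = [a, (y-s)/m]$, whose length is $(y-s)/m - a = (y - (ma+s))/m$; dividing by $b-a$ and noting $m(b-a) = (mb+s)-(ma+s)$ gives $Q([ma+s,y]) = \frac{y - (ma+s)}{(mb+s)-(ma+s)}$, which is exactly the CDF of $U([ma+s,mb+s])$. Since the CDFs agree, the distributions coincide. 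The case $m<0$ is handled the same way, except now $h$ is decreasing, so it maps $[a,b]$ onto $[mb+s, ma+s]$ with the orientation reversed; the analogous interval computation yields the CDF of $U([mb+s,ma+s])$. For $m=0$, we have $h(x) = s$ for every $x$, so $h^{-1}(B) = \R$ if $s \in B$ and $\emptyset$ otherwise; hence $Q(B) = \chi_B(s)$, i.e., $Q = \delta_{\,\cdot - s}$, the Dirac mass at $s$.

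I do not anticipate a genuine obstacle here — this is a routine change-of-variables computation for an affine map. The only points requiring a little care are bookkeeping the orientation reversal when $m<0$ (so that the resulting interval is written $[mb+s, ma+s]$ with the correct endpoints) and making sure the degenerate case $m=0$ is stated in the same notational convention for the Dirac delta as used elsewhere in the paper. An alternative, equally short route is to quote the standard density-transformation formula: for invertible affine $h$, the pushforward density is $p_Q(y) = \frac{1}{|m|}\,p_{U(\Delta)}\!\big((y-s)/m\big)$, which immediately gives the uniform density $\frac{1}{|m|(b-a)}$ on the image interval; I would present the CDF argument as the primary one since it also covers $m=0$ seamlessly.
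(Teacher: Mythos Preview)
Your proof is correct. The paper takes the alternative route you mention at the end: it applies the density-transformation formula $q(y)=p(h^{-1}(y))\,\bigl|\tfrac{d}{dy}h^{-1}(y)\bigr|$ directly for $m\neq 0$ to read off the uniform density $\tfrac{1}{|m|(b-a)}$ on the image interval, and treats $m=0$ separately by observing that all of $[a,b]$ is mapped to the single point $s$. Your CDF/measure-level computation is equally short and has the small advantage of handling all three cases within a single framework without invoking the density formula; the paper's version is marginally quicker for $m\neq 0$ since it avoids the interval bookkeeping. Either way the lemma is routine, and your argument is complete.
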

\begin{proof}
We start with the case $m \in \R \setminus \{0\}$. The pdf of the pushforward of a random variable with pdf $p(x)$ under the function $f(x)$ is given by
\begin{equation*}
    \label{push-forward_formula}
    q(y)= p(f^{-1}(y)) \left|\frac{d}{dy}f^{-1}(y)\right|.
\end{equation*}
Particularized to $f^{-1}(y) = h^{-1}(y) = \frac{y-s}{m}$ and $p(x) = \frac{1}{b-a} \chi_\Delta(x)$, this yields
\[ q(y) = \begin{cases} \frac{1}{m(b-a)}, & \text{if } y \in [ma+s, mb+s] \\
0, & \text{otherwise},
\end{cases}\]
for $m>0$, and 
\[ q(y) = \begin{cases} \frac{1}{m(a-b)}, & \text{if } y \in [mb+s, ma+s] \\
0, & \text{otherwise},
\end{cases}\]
for $m<0$. Finally, if $m=0$, then the entire interval $[a,b]$ is mapped to the point $y=s$ and the corresponding pdf is given by $q(y)=\delta_{y-s}$.
\end{proof}

We next show that the pushforward of a uniform distribution under a piecewise linear function results in a (general) histogram distribution.

\begin{theorem}
\label{pwl2histogram}
For every piecewise linear continuous function $f:\R \rightarrow \R$, such that $f(x) \in [0,1], \forall x \in [0,1]$, and $f(0)=0, f(1)=1$, there exists an $n$ so that $f\# U \sim \mathcal{G}[0,1]^1_n$.
\end{theorem}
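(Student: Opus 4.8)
The plan is to exploit the fact that a piecewise linear continuous $f$ has finitely many breakpoints, so $[0,1]$ decomposes into finitely many closed subintervals on each of which $f$ is affine. First I would let $0 = x_0 < x_1 < \dots < x_N = 1$ be the breakpoints of $f$ restricted to $[0,1]$ (including $0$ and $1$), so that on each piece $\Delta_j = [x_{j-1}, x_j]$ we have $f(x) = m_j x + s_j$ for constants $m_j, s_j \in \R$. Since the source is $U = U([0,1])$, conditioning on the event $\{X \in \Delta_j\}$ gives a random variable uniform on $\Delta_j$, which occurs with probability $|\Delta_j| = x_j - x_{j-1}$. Thus the pushforward pdf is the mixture $\sum_{j=1}^N (x_j - x_{j-1})\, q_j$, where $q_j$ is the pdf of $h_j \# U(\Delta_j)$ with $h_j(x) = m_j x + s_j$.

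Next I would invoke Lemma \ref{one-piece}: each $q_j$ is either a uniform density on a subinterval of $[0,1]$ (when $m_j \neq 0$; the image lies in $[0,1]$ by the hypothesis $f([0,1]) \subseteq [0,1]$) or a Dirac mass $\delta_{\cdot - s_j}$ at an interior-or-boundary point (when $m_j = 0$). Summing these mixture components, the resulting density is piecewise constant with finitely many jumps plus finitely many point masses. The remaining work is bookkeeping: collect all endpoints of the uniform-image intervals together with all the Dirac locations and all the $x_j$-images, sort them into a single increasing sequence $0 = t_0 \le t_1 \le \dots \le t_n = 1$ (using $f(0) = 0$, $f(1) = 1$ to pin the endpoints), and verify that on each resulting bin $[t_k, t_{k+1}]$ the accumulated density is a constant $w_k > 0$, while a repeated breakpoint $t_k = t_{k+1}$ corresponds exactly to a surviving Dirac mass of positive weight. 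One should also check the non-degeneracy condition from Definition \ref{gen_1_dim_dist} that one cannot have $t_i = t_{i+1} = t_{i+2}$: two consecutive coincidences would mean two point masses at the same location, which can be merged, so after merging this never happens. Finally, total mass $1$ is automatic since pushforward preserves probability, giving the normalization $\sum_k w_k\, d(t_k, t_{k+1}) = 1$.

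The positivity $w_k > 0$ on every bin deserves a word: a priori the sorted list of candidate breakpoints could create a bin on which \emph{no} affine piece's image lands, giving weight zero, which is not allowed by the definition. The fix is to only introduce a breakpoint $t_k$ where the density actually changes value (or where a Dirac sits), i.e.\ to take the coarsest partition compatible with the piecewise-constant structure rather than the finest; equivalently, one deletes any candidate breakpoint across which the density is continuous and constant. Since $f([0,1]) \subseteq [0,1]$ and $f$ is continuous with $f(0)=0$, $f(1)=1$, the union of the images of the pieces is all of $[0,1]$ up to the Dirac locations, so after this cleanup every genuine bin carries strictly positive constant weight.

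The main obstacle is the combinatorial argument that the superposition of finitely many overlapping uniform densities on subintervals (coming from different affine pieces, possibly with different slopes and traversed in either direction) is genuinely piecewise constant with the right bin/Dirac structure and strictly positive weights — in other words, translating "finite mixture of uniforms and point masses" into the precise normal form demanded by Definition \ref{gen_1_dim_dist}, including the no-three-consecutive-equal-breakpoints condition. The analytic content (each piece's contribution) is entirely handled by Lemma \ref{one-piece}; everything else is careful indexing, so I expect the proof to be short, with the partition-cleanup step being where one must be most careful.
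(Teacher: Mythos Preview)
Your proposal is correct and takes essentially the same approach as the paper: decompose $[0,1]$ into the affine pieces of $f$, apply Lemma~\ref{one-piece} to each piece via the law of total probability, and recognize the resulting mixture of uniforms and Diracs as a general histogram distribution. The paper's proof is terser, simply declaring $n = t$ (the number of pieces) and writing down the weights without the explicit sorting/merging/positivity bookkeeping you outline; your treatment of that combinatorial step is in fact more careful than the paper's.
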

\begin{proof}
We split the domain of $f$ into $t\in \N$ pairwise disjoint intervals $I_i=[a_i, b_i], i \in [0\!:\!(t-1)]$, each of which $f$ is linear on, 
specifically $f(x)=m_i x + s_i,\,x \,\in\,I_i$.
Using the law of total probability and the chain rule, the pdf of $q = f \#U$ can accordingly be represented as
\begin{equation}
    q(y) = \sum_{j=0}^{t-1} q(y|u \in I_j) \mathbb{P}(u \in I_j). \label{eq:rep-y-density}
\end{equation}
As $U$ is uniform, it is also uniform conditional on being in a given interval $I_j$.
By Lemma \ref{one-piece} it therefore follows that $q(y|u \in I_j)$
can be written as 
\begin{equation*}
    q(y|u\in I_j) = \begin{cases}
         \frac{\chi_{R_j}(y)}{|R_j|}, &\mbox{ if } m_j \neq 0,\\[1mm]
         \delta_{y-s_j}, &\mbox{ if } m_j=0,
    \end{cases}
\end{equation*} 
where $R_j=[m_j a_j+s_j,m_j b_j+s_j]$ if $m_j>0$, and $R_j=[m_j b_j+s_j,m_j a_j+s_j]$ if $m_j<0$.
Noting that by continuity of $f$ and the boundary conditions $f(0)=0,f(1)=1$, we have $\bigcup_j R_j = [0,1]$, it follows that
$q(y)$ in (\ref{eq:rep-y-density}) corresponds to a general histogram distribution according to (\ref{eq:def-gen-hist}) with $n=t$ and
\begin{equation*}
    w_j=\begin{cases}
        \frac{\mathbb{P}(u \in I_j)}{|R_j|}, &\mbox{ if } m_j \neq 0,\\[1mm]
        \mathbb{P}(u \in I_j), &\mbox{ if } m_j =0.
    \end{cases} \label{eq:w-cases} \hspace{1.15cm}\qedhere
\end{equation*}
\end{proof}

We will also need the converse to the result just established, in particular a constructive version thereof explicitly identifying the 
piecewise linear function that leads to a given general histogram distribution under pushforward of a uniform distribution on the interval $[0,1]$.

\begin{theorem}
\label{1d_theorem}
Let $p(x)$ be the pdf of $X \sim \mathcal{G}[0,1]^1_n$ with weights $w_k$, $k \in [0\!:\!(n-1)]$, and breakpoints $0=t_0\leq t_1 \leq \dots \leq t_n=1$, and set $b_0=0$, $b_i = \sum_{k=0}^{i-1} w_k\, d(t_{k}, t_{k+1})$, $i \in [1\!:\!n]$, where 
\[
d(t_{k},t_{k+1}) = \begin{cases}
    t_{k+1} - t_{k}, & \mbox{ if }\,\, t_k<t_{k+1} \\
    1, & \mbox{ if } \,\, t_k=t_{k+1}\end{cases}.
\]
Further, define $a_i$, $i \in [0\!:\!(n-1)]$, as follows:
If $t_0=t_{1}$, then $a_0 = 0$ and $a_1 = \frac{1}{w_1}$. If $t_0\neq t_{1}$, then $a_0 = \frac{1}{w_0}$.
For $k\in[1:(n-2)]$, if $t_k=t_{k+1}$, then $a_k = -\frac{1}{w_{k-1}}$ and $a_{k+1} = \frac{1}{w_{k+1}}$, and, if $t_{k-1}\neq t_k\neq t_{k+1}$, then $a_k = \frac{1}{w_{k}} - \frac{1}{w_{k-1}}$. Finally, if $t_{n-1}\neq 1$, then $a_{n-1} = \frac{1}{w_{n-1}} - \frac{1}{w_{n-2}}$.
Then,
\begin{equation}
\label{eq:f-piecewise-linear}
    f(x) = \sum_{i=0}^{n-1} a_i \rho (x-b_i)
\end{equation}
is the piecewise linear function satisfying $f\#U = p$.
\end{theorem}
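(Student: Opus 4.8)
The plan is to verify directly that the piecewise linear function $f$ defined in \eqref{eq:f-piecewise-linear} has the correct slopes on each interval $[b_i, b_{i+1}]$ and satisfies the boundary conditions $f(0) = 0$, $f(1) = 1$, and then invoke the pushforward formula (or Lemma~\ref{one-piece} piecewise) to conclude that $f \# U = p$. First I would observe that $f$ in \eqref{eq:f-piecewise-linear} is a sum of shifted ReLUs with breakpoints exactly at the $b_i$, so on the interval $(b_i, b_{i+1})$ its slope equals the cumulative sum $\sum_{j=0}^{i} a_j$ and its value at $b_i$ equals $\sum_{j=0}^{i-1} a_j (b_i - b_j)$. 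The key bookkeeping step is to show, by telescoping the definitions of the $a_k$, that on the interval corresponding to bin $k$ — i.e.\ for $x$ in the appropriate $(b_{i(k)}, b_{i(k)+1})$ — the slope of $f$ equals $1/w_k$ when $t_k \neq t_{k+1}$, and that the "jump" segments collapse a whole interval to a point (slope $0$, or rather the shifted-ReLU sum produces a flat piece) exactly when $t_k = t_{k+1}$. The $b_i$ are by construction the partial sums of the $w_k\, d(t_k, t_{k+1})$, so the width of the $k$-th domain interval is $w_k\, d(t_k,t_{k+1})$, and mapping it with slope $1/w_k$ produces an image interval of width $d(t_k, t_{k+1}) = t_{k+1} - t_k$, which is precisely the width of the $k$-th bin $[t_k, t_{k+1}]$ in the target histogram.

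Concretely, I would proceed case by case following the definition of the $a_i$: (i) the "generic" case $t_{k-1} \neq t_k \neq t_{k+1}$, where $a_k = 1/w_k - 1/w_{k-1}$ is precisely the slope increment needed so that the running slope goes from $1/w_{k-1}$ to $1/w_k$; (ii) the point-mass case $t_k = t_{k+1}$, where the two increments $a_k = -1/w_{k-1}$ and $a_{k+1} = 1/w_{k+1}$ first bring the slope to $0$ (flat piece, which under pushforward of $U$ produces the Dirac mass $\delta_{x - t_k}$ with the right weight since $d(t_k,t_{k+1}) = 1$ and the domain interval has width $w_k$, contributing probability $w_k$... wait, one must check the weight matches; here the conditional pushforward of a uniform on an interval of length $w_k \cdot 1$ onto a point, times $\mathbb{P}(u \in I) = w_k / (\text{total} = 1)$, gives mass $w_k$, matching $\kappa_{[t_k,t_k]} = \delta_{x-t_k}$ scaled by $w_k$); and then restores the slope to $1/w_{k+1}$ after the jump; (iii) the boundary cases at $k=0$ and $k = n-1$, handled by the special definitions of $a_0, a_1$ and $a_{n-1}$, which initialize and terminate the slope sequence correctly and ensure $f(0) = 0$ (immediate, since all ReLUs vanish at $x \le b_0 = 0$) and $f(1) = 1$ (which follows because $b_n = \sum_k w_k\, d(t_k,t_{k+1}) = 1$ by \eqref{eq:def-gen-hist}, and $f$ maps $[0, b_n]$ onto $[0,1]$).

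Once the slopes and breakpoints of $f$ are confirmed, the final step is short: $f$ restricted to each $[b_i, b_{i+1}]$ is affine, $U$ conditioned on $[b_i, b_{i+1}]$ is uniform on that interval, and by Lemma~\ref{one-piece} the conditional pushforward is uniform on the image interval $[t_k, t_{k+1}]$ (or a Dirac at $t_k$ in the degenerate case) with the correct normalization; assembling via the law of total probability exactly as in the proof of Theorem~\ref{pwl2histogram} yields $f \# U = p$. The main obstacle I anticipate is purely the combinatorial/index bookkeeping: the index $i$ in $a_i$ does \emph{not} run in lockstep with the bin index $k$ because each point-mass bin ($t_k = t_{k+1}$) consumes a slope-increment without advancing a "real" interval, so one has to set up a careful correspondence between the breakpoints $b_i$, the $a_i$, and the bins, and track the running slope through the three regimes above. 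Getting the edge cases at $k=0$ and $k = n-1$ consistent with the interior recursion — and checking that the stated hypothesis "if $t_i = t_{i+1}$ then $t_{i+2} \neq t_i$" (no two consecutive degenerate bins) is exactly what makes the recursion well-defined — is where the real care is needed; the analytic content is entirely supplied by Lemmas~\ref{one-piece} and the pushforward formula already established.
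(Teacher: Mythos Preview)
Your proposal is correct and follows essentially the same approach as the paper: verify that on each interval $I_i=[b_i,b_{i+1}]$ the function $f$ is linear with slope $\sum_{j\le i}a_j$ equal to $1/w_i$ (or $0$ in the degenerate case), check that $f$ maps $I_i$ onto $[t_i,t_{i+1}]$, and then read off the weights exactly as in the proof of Theorem~\ref{pwl2histogram}. One small correction: the indexing obstacle you anticipate does not actually arise---there are $n$ coefficients $a_0,\dots,a_{n-1}$, $n$ intervals $[b_i,b_{i+1}]$, and $n$ bins, and they are in one-to-one correspondence (a degenerate bin $t_k=t_{k+1}$ still occupies its own domain interval $[b_k,b_{k+1}]$ of positive length $w_k$); the case split in the definition of the $a_i$ merely records how the slope changes at each breakpoint, not a shift in indexing.
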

\begin{proof}
Let $I_i:=[b_i, b_{i+1}]$, $i \in [0\!:\!(n-1)]$. 
Then, $\bigcup_{i\in[0:(n-1)]} I_i = [0,1]$ and, for all $i \in [0\!:\!(n-1)]$, the function $f(x)$ in (\ref{eq:f-piecewise-linear}) is linear on $I_i$ with slope 
given by
\begin{equation*}
    \sum_{j=0}^{i} a_j= \begin{cases}
        1/w_i, &\mbox{ if } t_i\neq t_{i+1}\\
        0, &\mbox{ if } t_i = t_{i+1}
    \end{cases}.
\end{equation*}
Next, note that under $f(x)$ the interval $I_{i}$ is mapped to the interval $[t_i,t_{i+1}]$ if $t_i\neq t_{i+1}$ and to the singleton $\{t_i\}$ if $t_i = t_{i+1}$.
The proof is finalized upon using $\mathbb{P}(u \in I_i)=b_{i+1}-b_{i}$ to conclude that (cf. the proof of Theorem~\ref{pwl2histogram})
$\mathbb{P}(u \in I_i)/|(1/w_i)(b_{i+1}-b_i)|=w_i$, for $t_i\neq t_{i+1}$, and $\mathbb{P}(u \in I_i)=b_{i+1}-b_{i}=w_i \, d(t_i,t_{i+1})=w_i$ in the case
$t_i=t_{i+1}$.
\end{proof}

An example of a piecewise linear function and the corresponding general histogram distribution according to Theorems~\ref{pwl2histogram} and \ref{1d_theorem} is provided in Figure \ref{fig:gen_dist}.
Theorems~\ref{pwl2histogram} and \ref{1d_theorem} are of independent interest as they allow to conclude that ReLU networks, by virtue of always realizing piecewise linear functions, produce general histogram distributions when pushing forward uniform distributions. In the remainder of the paper, we shall, however, work with histogram distributions $\mathcal{E}[0,1]^d_n$ only, in particular for $d=1$, in which case Theorem~\ref{1d_theorem} takes on a simpler form spelled out next.
\begin{corollary}
\label{1d_theorem_hist}
Let $p(x)$ be the pdf of $X \sim \mathcal{E}[0,1]^1_n$ with weights $w_k$, $k \in [0\!:\!n]$, and let $a_0 = \frac{1}{w_{0}}, \ a_i = \frac{1}{w_{i}} - \frac{1}{w_{i-1}}$, $i \in [1\!:\!(n-1)]$, $b_0=0$, $b_i = \frac{1}{n} \sum_{k=0}^{i-1} w_k$, $i \in [1\!:\!n]$. Then, $p=f\#U$ with the piecewise linear function
\[f(x) = \sum_{i=0}^{n-1} a_i \rho(x-b_i).\]
\end{corollary}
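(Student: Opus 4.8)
The plan is to obtain Corollary~\ref{1d_theorem_hist} as a direct specialization of Theorem~\ref{1d_theorem}. The first step is to observe that a random variable $X \sim \mathcal{E}[0,1]^1_n$ is the particular instance of $X \sim \mathcal{G}[0,1]^1_n$ obtained by choosing equispaced breakpoints $t_i = i/n$, $i \in [0\!:\!n]$. One checks that this identification is consistent: the pdf $p(x) = \sum_{k=0}^{n-1} w_k \chi_{[k/n,(k+1)/n]}(x)$ of Definition~\ref{d_dim_dist} (for $d=1$) is of the form (\ref{eq:def-gen-hist}), since $t_k < t_{k+1}$ forces $\kappa_{[t_k,t_{k+1}]} = \chi_{[t_k,t_{k+1}]}$, and the normalization $\sum_{k=0}^{n-1} w_k = n$ coincides with $\sum_{k=0}^{n-1} w_k\, d(t_k,t_{k+1}) = 1$ because $d(t_k,t_{k+1}) = 1/n$ for every $k$.

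The second step is to evaluate the quantities in Theorem~\ref{1d_theorem} under $t_i = i/n$. Since $t_k \neq t_{k+1}$ for all $k$, we have $d(t_k,t_{k+1}) = t_{k+1}-t_k = 1/n$, whence $b_i = \sum_{k=0}^{i-1} w_k\, d(t_k,t_{k+1}) = \tfrac1n \sum_{k=0}^{i-1} w_k$, as asserted. For the coefficients, the absence of coincident consecutive breakpoints means only the ``generic'' branches of the case distinction in Theorem~\ref{1d_theorem} are active: $t_0 \neq t_1$ gives $a_0 = 1/w_0$; $t_{k-1} \neq t_k \neq t_{k+1}$ gives $a_k = 1/w_k - 1/w_{k-1}$ for $k \in [1\!:\!(n-2)]$; and $t_{n-1} \neq 1$ gives $a_{n-1} = 1/w_{n-1} - 1/w_{n-2}$. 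Collecting these yields $a_0 = 1/w_0$ and $a_i = 1/w_i - 1/w_{i-1}$ for $i \in [1\!:\!(n-1)]$, and Theorem~\ref{1d_theorem} then delivers $f\#U = p$ with $f(x) = \sum_{i=0}^{n-1} a_i\rho(x-b_i)$.

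There is essentially no obstacle here — the argument is a bookkeeping reduction, the only point worth a brief remark being that the point-singularity branches of Theorem~\ref{1d_theorem} (those with $t_k = t_{k+1}$, producing the $\delta$-type $\kappa$-tiles) are never triggered for $\mathcal{E}[0,1]^1_n$ because the breakpoints $i/n$ are strictly increasing. Should a self-contained proof be preferred, one can instead rerun the computation from the proof of Theorem~\ref{1d_theorem} with $t_i = i/n$: the function $f$ is piecewise linear with breakpoints $b_i$, has slope $\sum_{j=0}^i a_j = 1/w_i$ on $[b_i,b_{i+1}]$, maps $[b_i,b_{i+1}]$ onto $[i/n,(i+1)/n]$, and $\mathbb{P}(u \in [b_i,b_{i+1}]) = b_{i+1}-b_i = w_i/n$; Lemma~\ref{one-piece} together with the law of total probability then gives that $f\#U$ has constant density $w_i$ on $[i/n,(i+1)/n]$, i.e.\ $f\#U = p$.
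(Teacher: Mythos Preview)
Your proof is correct and matches the paper's approach exactly: the paper presents Corollary~\ref{1d_theorem_hist} without a separate proof, simply noting that it is the form Theorem~\ref{1d_theorem} takes when specialized to $\mathcal{E}[0,1]^1_n$, which is precisely the reduction you carry out. Your added remark on the self-contained verification (slopes $1/w_i$ on $[b_i,b_{i+1}]$ mapping onto $[i/n,(i+1)/n]$) is also the content of the paper's explicit formula stated immediately after the corollary.
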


We shall often need the explicit form of $f(x)$ in Corollary~\ref{1d_theorem_hist} on its intervals of linearity $[b_{\ell}, b_{\ell+1}]$. A direct calculation
reveals that
\[f(x) = \frac{x}{w_{\ell}}-\frac{\sum_{i=0}^{\ell-1}w_i}{nw_{\ell}} + \frac{\ell}{n}, \quad x \in [b_{\ell},b_{\ell+1}].\]

\begin{figure}
    \begin{center}
    \captionsetup{justification=centering}
    \hspace{-0.5cm} \includegraphics[width=0.7\textwidth]{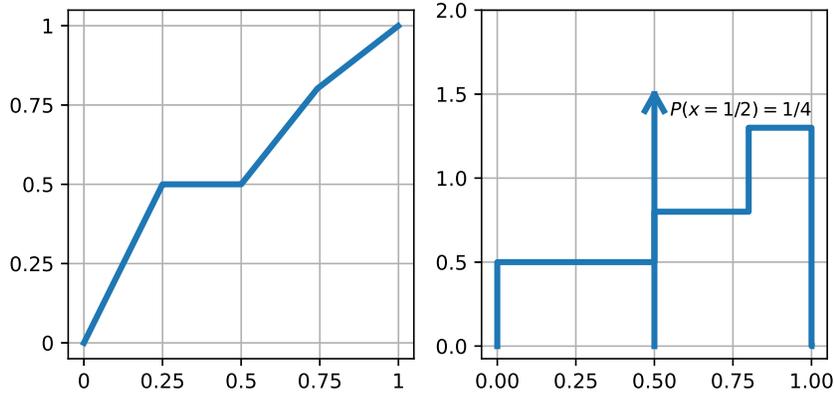}
     \caption{A piecewise linear function $f$ (left) and the corresponding general histogram distribution $f\# U$ (right).}
     \label{fig:gen_dist}
     \end{center}
\end{figure}

\section{Increasing distribution dimensionality} \label{sec:distribution-dimensionality-increase}

This section is devoted to the aspect of distribution dimensionality increase through deep ReLU networks.
Specifically, for a given random vector $\mathbf{X} \sim \mathcal{E}[0,1]_n^d$ with (histogram) distribution $p_{\mathbf{X}}(\mathbf{x})$ of resolution $n$, we construct a piecewise linear map $M: [0,1] \rightarrow [0,1]^d$ such that the pushforward $M \# U$ approximates $p_{\mathbf{X}}(\mathbf{x})$ arbitrarily well.
The main ingredient of our construction is a vast generalization of the space-filling property of sawtooth functions discovered in \cite{icml2020}. Informally speaking, the novel space-filling property we describe allows to completely fill the $d$-dimensional target space according to a prescribed target histogram distribution by transporting probability mass from the $1$-dimensional uniform distribution $U$ to $d$-dimensional space.

We first develop some intuition behind this construction. Specifically, we consider the $2$-dimensional case and visualize the idea of
approximating a $2$-dimensional target distribution through pushforward of $U$ by the sawtooth functions $g_s(x)$ (depicted in Figure \ref{fig:saw_def}) as painting the curve $g_s(x)$ with probability mass taken from $U$.
The geodesic distance traveled by the brush distributing probability mass along $g_s(x)$ goes to infinity according to $2^s$ as $s\rightarrow \infty$. This follows by noting that the number of teeth is $2^s$ and for $s \rightarrow \infty$, the length of the individual teeth (in fact their halves) approaches $1$. Therefore, as $s \rightarrow \infty$ the square $[0,1]^2$ will be filled with paint completely. Moreover, as $x$ traverses from $0$ to $1$, the speed at which probability mass is allocated to the marginal dimensions, i.e., along the $x_1$-and $x_2$-axes, is constant. To see this, simply note that along the $x_2$-axis
the speed of the brush is given by the derivative of $g_s(x)$, which by virtue of $g_s(x)$ consisting of piecewise linear segments, is constant. Likewise, as the inverse of a linear function is again a linear function, the brush moves with constant speed along the $x_1$-axis as well. This guarantees that the resulting $2$-dimensional probability distribution along with its marginals and conditional distributions are all uniform. The rate at which the joint distribution approaches a $2$-dimensional uniform distribution can be quantified in terms of Wasserstein distance by defining the transport map $M:x\rightarrow(x,g_s(x))$ and noting that $W(M \#U, U([0,1]^2) ) \leq \frac{\sqrt{2}}{2^s}$ \cite{Bailey2019}. What is noteworthy here is that the map $M$ takes probability mass from $\mathbb{R}$ to $\mathbb{R}^2$ in a space-filling fashion, i.e., we get a dimensionality increase as $s \rightarrow \infty$.

By adjusting the ``paint plan'', this idea can now be generalized to $2$-dimensional histogram target distributions that are constant with respect to one of the dimensions, here, for concreteness, the $x_1$-dimension. Specifically, we replace $g_s(x)$ in the construction above by $f(g_s(x))$, where the piecewise linear function $f(x)$ determines the paint plan resulting in the desired weights (across the $x_2$-axis) according to Corollary~\ref{1d_theorem_hist}. We refer to Figure~\ref{fig:map} for an illustration of the idea. While the outer function $f(x)$ determines how much time the paint brush spends in a given interval along the $x_2$-axis, the inner function $g_s(x)$ takes care of filling the unit square as $s \rightarrow \infty$. The larger the slope of $f(x)$ on a given interval along the $x_2$-axis, the less time the brush spends in that interval and the smaller the amount of probability mass allocated to the interval. Concretely, by Corollary~\ref{1d_theorem_hist} the amount of probability mass deposited in a given interval is inversely proportional to the slope of $f(x)$ on that interval.

\iffalse
Figure~\ref{fig:map} visualizes the corresponding approach, which works by replacing the space-filling curve $g_s(x)$ in the construction of the $2$-dimensional uniform distribution by the space-filling curve $f(g_s(x))$ implementing the desired paint plan. Inspection of Figure~\ref{fig:map} shows that the amount of probability mass allocated across the $y$-axis in the right-most plot is inversely proportional to the slope of $f(x)$ corresponding to that section. As before $g_s(x)$ in the paint plan $f(g_s(x))$ takes care of filling the unit square for $s \rightarrow \infty$. The composition $f(g_s(x))$ hence has teeth in the form of the desired paint plan. A higher slope  on a given interval, the less probability mass will be delivered to that interval thereby lowering the mass of that interval in the marginal distribution along $y$ axis ($y$-marginal). Contrary, if we move along $x$ axis faster than along $y$, the mass we deliver to $y$-marginal is higher. This process is epitomized in the generalized space-filling construction $f(g_s(x))$, which has ``teeth'' in the form of ``irregular painting plan'' $f(x)$ and generates a $2$-dimensional histogram distribution that is constant with respect to dimension $x$, see Figure \ref{fig:map}.
\fi

Finally, consider the $2$-dimensional histogram distribution $p_{X_1,X_2}(x_1, x_2) \in \mathcal{E}[0,1]_n^2$ and note that it can be represented as follows
\begin{equation}
\begin{aligned}
\label{chain_rule}
    p_{X_1,X_2}(x_1,x_2) & =  \sum_{k_1,k_2} w_{k_1,k_2} \chi_{c_{k_1,k_2}}(x_1,x_2) =  \sum_{k_1,k_2} w_{k_1}\,w_{k_2|k_1} \chi_{c_{k_1}}(x_1) \chi_{c_{k_2}}(x_2)\\
    & = \sum_{k_1} w_{k_1} \chi_{c_{k_1}}(x_1) \sum_{k_2} w_{k_2|k_1} \chi_{c_{k_2}}(x_2)\\
    & = \sum_{i=0}^{n-1} p_{X_1}\big(x_1 \in [i/n,(i+1)/n]\big) \, p_{X_2|X_1}\big(x_2| x_1 \in [i/n,(i+1)/n]\big), 
\end{aligned} 
\end{equation}
where $w_{k_1}=\frac{1}{n}\sum_{k_2}w_{k_1,k_2}$, $w_{k_2|k_1}=w_{k_1,k_2}/w_{k_1}$, $p_{X_1}(x_1 \in [i/n,(i+1)/n])=w_{i}\chi_{c_{i}}(x_1)$ denotes the restriction of the marginal histogram distribution $p_{X_1}$ (see Lemma~\ref{marginals_pw} below) to the interval $[i/n,(i+1)/n]$,
and $p_{X_2|X_1}\big(x_2| x_1 \in [i/n,(i+1)/n]\big) = \sum_{k_2} w_{k_2|i} \chi_{c_{k_2}}(x_2)$, for each $i\in[0\!:\!(n-1)]$, can be viewed as a $2$-dimensional histogram distribution that is constant with respect to $x_1$, and which we assume to be generated by $f_{X_2}^{(i)}(g_s(x))$ according to the procedure described in the previous paragraph.
 
Now, in order to ``paint'' the general $2$-dimensional histogram distribution in (\ref{chain_rule}), we have to ``squeeze'' the space-filling curves $f_{X_2}^{(i)}(g_s(x))$ into the respective boxes $[i/n,(i+1)/n] \times [0,1]$. This is effected by exploiting that $g_s(x)$ is compactly supported on $[0,1]$ for all $s \in \mathbb{N}$, which allows us to realize the desired localization according to $f_{X_2}^{(i)}(g_s(nx-i))$. The resulting localized space-filling curves are then stitched together by adding them up according to $\sum_{i=0}^{n-1}f_{X_2}^{(i)}(g_s(nx-i))$. Denoting the piecewise linear function that generates the marginal histogram distribution $p_{X_{1}}$ according to Corollary~\ref{1d_theorem_hist} as\footnote{The choice of the superscript $\mathbf{z}_1$ in $f_{X_1}^{\mathbf{z}_1}$ will become clear in Definition~\ref{Z-functions} below.}  $f_{X_1}^{\mathbf{z}_1}$, i.e., $f_{X_1}^{\mathbf{z}_1}\#U = p_{X_1}$, the transport map $M: x \rightarrow \Big(f_{X_1}^{\mathbf{z}_1}(x),\sum_{i=0}^{n-1}f_{X_2}^{(i)}(g_s(nf_{X_1}^{\mathbf{z}_1}(x)-i))\Big)$ when applied to $U$ generates $p_{X_1,X_2}$ in (\ref{chain_rule}) asymptotically in $s$. To see this, we first note that the second component of $M$ pushes
forward, by $\sum_{i=0}^{n-1}f_{X_2}^{(i)}(g_s(nx-i))$, the random variable $f_{X_1}^{\mathbf{z}_1}\#U$ resulting from the pushforward of $U$ by the first component of $M$. This allows us to read off the conditional distributions $p_{X_2|X_1}$. Specifically, thanks to the
individual components in $\sum_{i=0}^{n-1}f_{X_2}^{(i)}(g_s(nx-i))$ being disjointly suppported on $[i/n,(i+1)/n]$, we can conclude that the distribution of the $2$-dimensional random variable $M\#U$ satisfies $p_{X_2|X_1}\big(x_2| x_1 \in [i/n,(i+1)/n]\big) = \sum_{k_2} w_{k_2|i} \chi_{c_{k_2}}(x_2), i \in [0\!:\!(n-1)],$ as desired. Next, noting that the distribution $p_{X_1}$ of $f_{X_1}^{\mathbf{z}_1}\#U$ has components $p_{X_1}\big(x_1 \in [i/n,(i+1)/n]\big)=w_{i}\chi_{c_{i}}(x_1)$ supported disjointly on $[i/n,(i+1)/n]$, it follows from $p_{X_1,X_2}(x_1,x_2)=p_{X_1}(x_1)\,p_{X_2|X_1}(x_2|x_1)$ that the distribution of $M\#U$ is given by
\begin{equation*}
    p_{X_1,X_2}(x_1,x_2) = \sum_{i=0}^{n-1} p_{X_1}\big(x_1 \in [i/n,(i+1)/n]\big) \,  p_{X_2|X_1}\big(x_2| x_1 \in [i/n,(i+1)/n]\big),
\end{equation*}
which is (\ref{chain_rule}). An example illustrating the overall construction can be found in Figure \ref{fig:map_comb}. 

\begin{figure}
  \includegraphics[width=\linewidth]{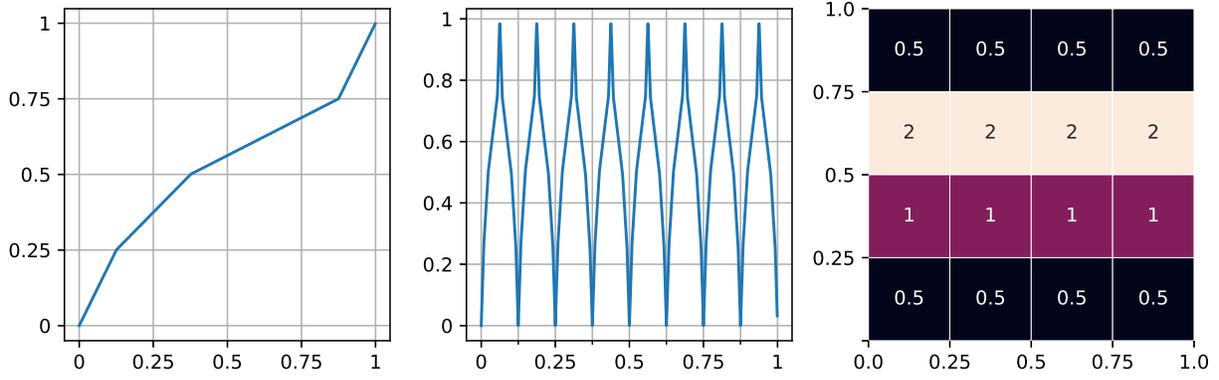}
  \caption{Generating a histogram distribution via the transport map $x \rightarrow (x,f(g_s(x)))$. Left---the function $f(x)$, center---$f(g_4(x))$,
  right---a heatmap of the resulting histogram distribution.}
  \label{fig:map}
\end{figure}

\begin{figure}
  \includegraphics[width=\linewidth]{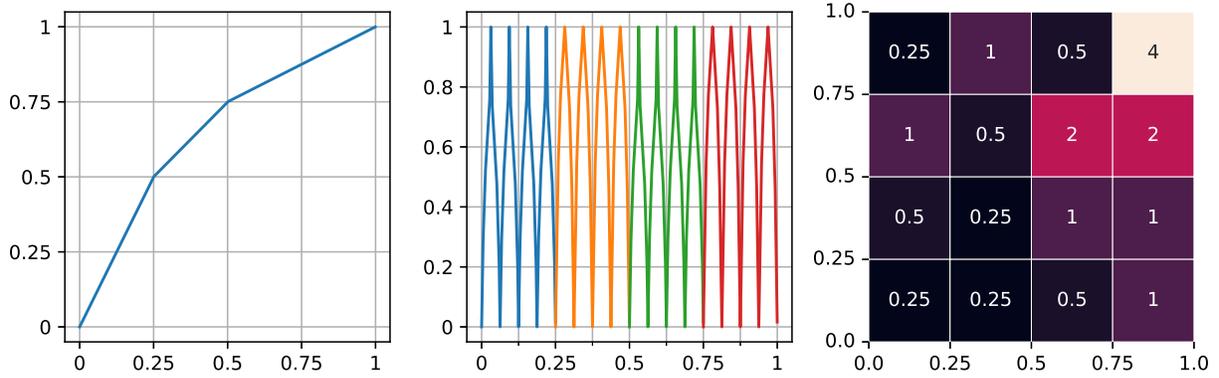}
  \caption{Generating a $2$-D histogram distribution via the transport map $\, x \, \rightarrow (f_{X_1}^{\mathbf{z}_1}(x), \sum_{i=0}^{3} f_{X_2}^{(i)} (g_3 (4f_{X_1}^{\mathbf{z}_1}(x)-i))).$ Left---the function $f_{X_2}^{(1)}=f_{X_2}^{(3)}=f_{X_1}^{\mathbf{z}_1}$, center---$\sum_{i=0}^{3} f_{X_2}^{(i)} (g_3 (4f_{X_1}^{\mathbf{z}_1}(x)-i))$, right---a heatmap of the resulting histogram distribution.
  We took $f_{X_2}^{(0)}=f_{X_2}^{(2)}$ to be given by the function depicted on the left in Figure \ref{fig:map}.}
  \label{fig:map_comb}
\end{figure}

We are now ready to formalize the idea just described and generalize it to target distributions of arbitrary dimension. To this end, we start with a technical lemma stating that all marginal and conditional distributions of a $d$-dimensional histogram distribution are themselves histogram distributions, a result that
was already used implicitly in the description of our main idea in the $2$-dimensional case above.
\begin{lemma}
\label{marginals_pw}
Let $p_{\mathbf{X}}(\mathbf{x})$ be the pdf of the random vector $\mathbf{X} \sim \mathcal{E}[0,1]_n^d$. Then, for all $t \in [1\!:\!(d-1)]$, its
marginal distributions satisfy $p_{X_1,\dots,X_t}(\mathbf{x}_{[1:t]}) \in \mathcal{E}[0,1]_n^t$. Moreover, for all $t \in [1\!:\!(d-1)]$ and $\mathbf{z} = (z_1, z_2, \dots, z_t) \in [0\!:\!(n-1)]^t$, defining
$c_{\mathbf{z}} = [z_1/n, (z_1+1)/n] \times [z_2/n, (z_2+1)/n] \times \dots \times [z_t/n, (z_t+1)/n]$,
the conditional distributions $p_{X_{t+1}|X_1,\dots,X_t}(x_{t+1}|\mathbf{x}_{[1:t]} \in c_{\mathbf{z}})$ are independent of the specific value of $\mathbf{x}_{[1:t]} \in c_{\mathbf{z}}$ and obey $p_{X_{t+1}|X_1,\dots,X_t}(x_{t+1}|\mathbf{x}_{[1:t]} \in c_{\mathbf{z}}) \in \mathcal{E}[0,1]_n^1$.
\end{lemma}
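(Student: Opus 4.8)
The plan is to obtain both claims by direct manipulation of the defining expression $p_{\mathbf{X}}(\mathbf{x}) = \sum_{\mathbf{k}} w_{\mathbf{k}}\chi_{c_{\mathbf{k}}}(\mathbf{x})$: first compute the lower-dimensional marginals by integrating out the trailing coordinates, and then read off the conditionals as ratios of two such marginals. Throughout, the key structural fact exploited is that each tile factorizes, $\chi_{c_{\mathbf{k}}}(\mathbf{x}) = \prod_{i=1}^{d}\chi_{[k_i/n,(k_i+1)/n]}(x_i)$, so that partial integration decouples across coordinates.

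First I would establish the marginal statement. Fix $t \in [1\!:\!(d-1)]$ and integrate $p_{\mathbf{X}}$ against $dx_{t+1}\cdots dx_d$ over $[0,1]^{d-t}$. For each index vector $\mathbf{k}$, the trailing $d-t$ factors $\chi_{[k_i/n,(k_i+1)/n]}(x_i)$ each integrate to $1/n$, leaving the factor $\chi_{c_{\mathbf{k}_{[1:t]}}}(\mathbf{x}_{[1:t]})$ multiplied by $n^{-(d-t)}$. Grouping the resulting summands according to the value $\mathbf{z} = \mathbf{k}_{[1:t]} \in [0\!:\!(n-1)]^t$ of the leading indices yields $p_{X_1,\dots,X_t}(\mathbf{x}_{[1:t]}) = \sum_{\mathbf{z}} \widetilde{w}_{\mathbf{z}}\,\chi_{c_{\mathbf{z}}}(\mathbf{x}_{[1:t]})$ with $\widetilde{w}_{\mathbf{z}} := n^{-(d-t)}\sum_{\mathbf{k}:\,\mathbf{k}_{[1:t]} = \mathbf{z}} w_{\mathbf{k}}$. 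Each $\widetilde{w}_{\mathbf{z}}$ is strictly positive as a finite sum of strictly positive terms, and $\sum_{\mathbf{z}}\widetilde{w}_{\mathbf{z}} = n^{-(d-t)}\sum_{\mathbf{k}} w_{\mathbf{k}} = n^{-(d-t)}\,n^d = n^t$, so $p_{X_1,\dots,X_t} \in \mathcal{E}[0,1]_n^t$.

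Next, for the conditional statement I would invoke the marginal identity twice, for the first $t$ and the first $t+1$ coordinates, and form $p_{X_{t+1}|X_1,\dots,X_t}(x_{t+1}|\mathbf{x}_{[1:t]}) = p_{X_1,\dots,X_{t+1}}(\mathbf{x}_{[1:t+1]}) / p_{X_1,\dots,X_t}(\mathbf{x}_{[1:t]})$; the denominator equals $\widetilde{w}_{\mathbf{z}} > 0$ whenever $\mathbf{x}_{[1:t]} \in c_{\mathbf{z}}$, so the ratio is well defined there. For such $\mathbf{x}_{[1:t]}$ only tiles with $\mathbf{k}_{[1:t]} = \mathbf{z}$ contribute to the numerator, and grouping them by $j := k_{t+1} \in [0\!:\!(n-1)]$ gives $p_{X_{t+1}|X_1,\dots,X_t}(x_{t+1}|\mathbf{x}_{[1:t]} \in c_{\mathbf{z}}) = \sum_{j=0}^{n-1} \widehat{w}_j\,\chi_{[j/n,(j+1)/n]}(x_{t+1})$ with $\widehat{w}_j := n\big(\sum_{\mathbf{k}:\,\mathbf{k}_{[1:t]} = \mathbf{z},\,k_{t+1} = j} w_{\mathbf{k}}\big)\big/\big(\sum_{\mathbf{k}:\,\mathbf{k}_{[1:t]} = \mathbf{z}} w_{\mathbf{k}}\big)$. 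The right-hand side depends on $\mathbf{x}_{[1:t]}$ only through the tile $c_{\mathbf{z}}$ containing it, which is the asserted independence; the $\widehat{w}_j$ are strictly positive and sum to $n$, so $p_{X_{t+1}|X_1,\dots,X_t}(\cdot|\mathbf{x}_{[1:t]} \in c_{\mathbf{z}}) \in \mathcal{E}[0,1]_n^1$.

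I do not expect a genuine obstacle here: the argument is finite-sum bookkeeping together with one Fubini-type integration that separates because of the product structure of the tiles. The only points needing mild care are keeping track of the powers of $n$ introduced by integrating out $d-t$ coordinates (so as to recover the normalizations $n^t$ and $n$ exactly), and the fact that conditional densities are defined only up to $p_{X_1,\dots,X_t}$-null sets, so the clean ``independent of the specific value'' phrasing should be read as referring to the canonical representative given by the ratio of the everywhere-defined piecewise-constant marginals — consistent with the breakpoint-averaging convention already adopted in the Remark. If preferred, one can instead prove the marginal and conditional claims simultaneously by induction on $d-t$, peeling off one trailing coordinate at a time, but the direct computation above is shorter.
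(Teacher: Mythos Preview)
Your proposal is correct and follows essentially the same route as the paper: integrate out the trailing coordinates using the product structure of the tile indicators to obtain the marginal weights $\widetilde{w}_{\mathbf{z}}=n^{-(d-t)}\sum w_{\mathbf{k}}$ with the correct normalization $n^t$, then form the conditional as the ratio of the $(t+1)$- and $t$-dimensional marginals and read off the one-dimensional histogram weights, checking positivity and that they sum to $n$. The only additions relative to the paper are your remarks on the null-set/breakpoint convention and the optional induction alternative, neither of which changes the argument.
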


\begin{proof}
The proof of the first statement follows by noting that, for all $t \in [1\!:\!(d-1)]$,
\begin{equation}
\label{eq:marginal-distribution}
\begin{aligned}p_{X_1,\dots,X_t}(\mathbf{x}_{[1:t]}) &= \int_{[0,1]^{d-t}} \sum_{\mathbf{k}} w_{\mathbf{k}} \chi_{c_{\mathbf{k}}}(\mathbf{x}) dx_{t+1} dx_{t+2} \dots dx_d \\
&= \sum_{\mathbf{z}} w_{\mathbf{z}} \chi_{c_{\mathbf{z}}}(\mathbf{x}_{[1:t]}),  
\end{aligned}
\end{equation}
where $\mathbf{z} = \mathbf{k}_{[1:t]} = (z_1, z_2, \dots, z_t)$ with $\mathbf{k}$ according to Definition~\ref{d_dim_dist}, and $w_{\mathbf{z}} = (1/n)^{d-t}\,\sum_{i_{t+1},\dots, i_d} w_{\mathbf{k}}\,>\,0$. With $\sum_{\mathbf{k}}w_{\mathbf{k}}=n^{d}$ from Definition~\ref{d_dim_dist}, we get
$\sum_{\mathbf{z}}w_{\mathbf{z}}=n^{t-d}\sum_{\mathbf{k}}w_{\mathbf{k}}=n^{t}$, which establishes that (\ref{eq:marginal-distribution}) constitutes a valid histogram distribution in $\mathcal{E}[0,1]_n^t$.

To prove the second statement, we first note that for all $t \in [1\!:\!(d-1)]$,
\[p_{X_{t+1}|X_1,\dots,X_t}(x_{t+1}|\mathbf{x}_{[1:t]}) = \frac{p_{X_1,\dots,X_{t+1}}(\mathbf{x}_{[1:(t+1)]})}{p_{X_1,\dots,X_t}(\mathbf{x}_{[1:t]})} = \frac{\sum_{(\mathbf{z},z_{t+1})} w_{(\mathbf{z},z_{t+1})} \chi_{c_{(\mathbf{z},z_{t+1})}}(\mathbf{x}_{[1:(t+1)]})}{\sum_{\mathbf{z}} w_{\mathbf{z}} \chi_{c_{\mathbf{z}}}(\mathbf{x}_{[1:t]})}. \]
Next, for a given $\mathbf{z}' \in [1\!:\!(n-1)]^t$, we have
\begin{align*}
 p_{X_{t+1}|X_1,\dots,X_t}(x_{t+1}|\mathbf{x}_{[1:t]} \in c_{\mathbf{z}'}) &= \frac{\sum_{z_{t+1}} w_{(\mathbf{z}',z_{t+1})} \chi_{c_{(\mathbf{z}',z_{t+1})}}(\mathbf{x}_{[1:(t+1)]})}{w_{\mathbf{z}'}} \\ &= \sum_{z_{t+1}} \frac{ w_{(\mathbf{z}',z_{t+1})}}{w_{\mathbf{z}'}} \chi_{c_{z_{t+1}}}(x_{t+1}),   
\end{align*}
which allows us to conclude that, for all $t \in [1\!:\!(d-1)]$ and $\mathbf{z} = (z_1, z_2, \dots, z_t) \in [0\!:\!(n-1)]^t$, the conditional distribution $p_{X_{t+1}|X_1,\dots,X_t}(x_{t+1}|\mathbf{x}_{[1:t]} \in c_{\mathbf{z}})$ is independent of the specific value of $\mathbf{x}_{[1:t]} \in c_{\mathbf{z}}$ 
and, thanks to $\sum_{z_{t+1}} \frac{ w_{(\mathbf{z}',z_{t+1})}}{w_{\mathbf{z}'}}=n$, belongs to the class $\mathcal{E}[0,1]_n^1$.
\end{proof}

As a consequence of Lemma~\ref{marginals_pw} it follows---from the chain rule---that the joint distribution 
of a histogram distribution $p_{\mathbf{X}}(\mathbf{x}) \in\mathcal{E}[0,1]_n^d$ is fully specified by the conditional distributions
$p_{X_{t+1}|X_1,\dots,X_t}(x_{t+1}|\mathbf{x}_{[1:t]} \in c_{\mathbf{z}}),t \in [1\!:\!(d-1)], \mathbf{z} = (z_1, z_2, \dots, z_t) \in [0\!:\!(n-1)]^t$, and the marginal distribution $p_{X_1}(x_1)$.

We next define the auxiliary functions $F_{r}$ and $Z_r$ needed in the construction of the $d$-dimensional generalization of the $2$-dimensional transport map 
$M: x \rightarrow \Big(f_{X_1}^{\mathbf{z}_1}(x),\sum_{i=0}^{n-1}f_{X_2}^{(i)}(g_s(nf_{X_1}^{\mathbf{z}_1}(x)-i))\Big)$. 
\begin{definition}
\label{Z-functions}
For $\mathbf{k} =[k_1,\dots,k_t] \in [0\!:\!(n-1)]^t$, $t \in \N,$ 
define $c_{\mathbf{k}} = \Big[\frac{k_1}{n}, \frac{k_1+1}{n}\Big] \times \Big[\frac{k_2}{n}, \frac{k_2+1}{n}\Big] \times \dots \times \Big[\frac{k_t}{n}, \frac{k_t+1}{n}\Big]$.
Let $\mathbf{z} = (z_1,z_2,\dots,z_d) \in [0\!:\!(n-1)]^d$, set $\mathbf{z}_i = \mathbf{z}_{[1:(i-1)]}$, for $i \in [2\!:\!d]$, and fix a histogram distribution $p_{\mathbf{X}}(\mathbf{x}) \in\mathcal{E}[0,1]_n^d$ 
specified by $p^{\mathbf{z}_i}_{X_i} = p_{X_i|X_1,\dots,X_{i-1}}\big(x_i| \mathbf{x}_{[1:(i-1)]} \in c_{\mathbf{z}_i}\big)$, for $i \in [2\!:\!d]$, and\footnote{Formally, $\mathbf{z}_1$, albeit not defined, would correspond to a $0$-dimensional quantity. It is used throughout the paper only for notational convenience.}
$p^{\mathbf{z}_1}_{X_1}(x_1) = p_{X_1}(x_1)$.
For $i \in [1\!:\!d]$, let $f^{\mathbf{z}_i}_{X_i}$ be the piecewise linear function that, according to Corollary \ref{1d_theorem_hist}, satisfies
$f^{\mathbf{z}_i}_{X_i}\#U=p^{\mathbf{z}_i}_{X_i}$, and define recursively, for all $s \in \N$,
\begin{equation}
    F_r(x,\mathbf{z}_{r+1},s) := g_s\big(nf^{\mathbf{z}_r}_{X_r}\big(F_{r-1}(x,\mathbf{z}_r,s)\big) - z_{r}\big), \,\, r\,\in\, [1\!:\!(d-1)],  \label{eq:recursive-def-F}
\end{equation} 
with the initialization 
\[F_0(x,\mathbf{z}_1,s) := x.\]
Further, define the functions $Z_r$ according to
\[Z_r(x,s):= \sum_{\mathbf{z}_{r}} f^{\mathbf{z}_r}_{X_r}\big(F_{r-1}(x,\mathbf{z}_r,s)\big), \,\, r\,\in\, [2\!:\!d], \]
and
\[ Z_1(x,s):= f^{\mathbf{z}_1}_{X_1}(x).\]
\end{definition}

With the quantities just defined, we can write
\[
M: x \rightarrow \Big(f_{X_1}^{\mathbf{z}_1}(x),\sum_{i=0}^{n-1}f_{X_2}^{(i)}(g_s(nf_{X_1}^{\mathbf{z}_1}(x)-i))\Big)=(Z_1(x,s),Z_2(x,s)).
\]
In the $d$-dimensional case, the space-filling transport map that takes a $1$-dimensional uniform distribution into 
a given $d$-dimensional histogram distribution, or more precisely a sufficiently accurate approximation thereof, will be seen to be given by
\begin{equation}
M: x \rightarrow (Z_1(x,s), Z_2(x,s), \dots, Z_d(x,s)). \label{eq:m-dim-map}
\end{equation}
Theorem~\ref{general_theorem} below, the central result of this section, makes this formal. The material from here on up to Theorem~\ref{general_theorem} is all
preparatory and technical. We recommend that it be skipped at first reading and suggest to proceed to Theorem~\ref{general_theorem}, in particular the intuition behind the construction of (\ref{eq:m-dim-map}) provided right after the proof of Theorem~\ref{general_theorem}. We do recommend, however, to first visit Figure~\ref{fig:F_functions}, which illustrates the $F_r$-functions and their role in generating the target histogram distribution.

The following lemma establishes support properties of the $F_r$-functions and corresponding consequences for the $Z_r$-functions.

\begin{figure}
  \includegraphics[width=\linewidth]{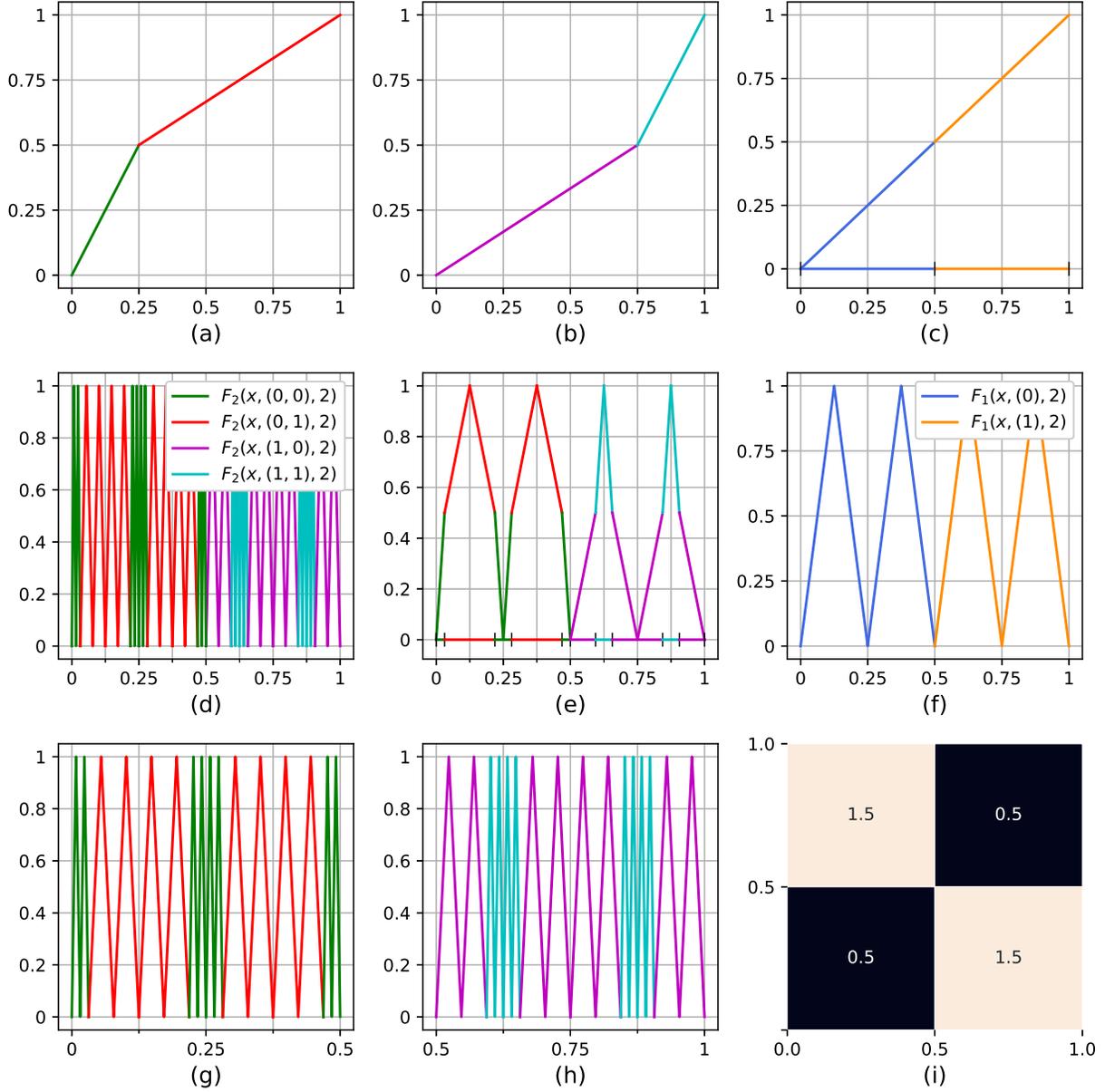}
  \caption{An example illustrating the functions $F_r(x,\mathbf{z}_{r+1},s)$ for $r=1$ and $r=2$. The corresponding target histogram distribution $p_{X_1,X_2}(x_1,x_2)$ is visualized in subplot (i). The function $Z_1(x,s) = f_{X_1}^{\mathbf{z}_1}(x) = x$ characterizing the marginal $p_{X_1} = U$ is shown in subplot (c). The functions $f_{X_2}^{(0)}(x)$ and $f_{X_2}^{(1)}(x)$ characterizing $p^{(0)}_{X_2}$ and $p^{(1)}_{X_2}$, respectively, are
  depicted in subplots (a) and (b). Subplot (e) shows $Z_2(x,2)$ and subplot (f) visualizes $F_1(x,(0),2)$ and $F_1(x,(1),2)$. The functions $F_2(x,\mathbf{z}_{3},2)$ are shown in subplot (d). Subplots (g) and (h) depict zoomed-in versions of subplot (d). The functions $F_2(x,\mathbf{z}_{3},2)$ have disjoint support sets, but, in contrast to $F_1(x,(0),2)$ and $F_1(x,(1),2)$, their support sets are not connected. The support sets of the colored pieces in subplot (d) match those of the respective colored pieces in subplot (e), likewise for subplots (f) and (c).
  }
  \label{fig:F_functions}
\end{figure}

\begin{lemma}
\label{separation_property}
Let $F_i, i \in [0\!:\!(d-1)],  \mathbf{z}_i,  i\in [2\!:\!d],  Z_i, f^{\mathbf{z}_i}_{X_i}, i\in [1\!:\!d]$, be as in Definition~\ref{Z-functions}. Then, for all $r \in [2\!:\!d]$, we have
\[ \bigcap_{\mathbf{z}_r} \supp \big(F_{r-1}(x,\mathbf{z}_{r},s)\big) = \varnothing,\]
and hence for every $r \in [2\!:\!d]$, for all $\mathbf{t}_r \in [0\!:\!(n-1)]^{r-1}$, it holds that
\[ f^{\mathbf{t}_r}_{X_r}\big(F_{r-1}(x,\mathbf{t}_r,s)\big)>0 \, \Longrightarrow \, Z_r(x,s) = 
f^{\mathbf{t}_r}_{X_r}\big(F_{r-1}(x,\mathbf{t}_r,s)\big), \quad x \in [0,1].\]
\end{lemma}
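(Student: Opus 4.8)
The plan is to establish the slightly stronger fact that for each $r\in[2\!:\!d]$ the supports $\supp\big(F_{r-1}(\cdot,\mathbf{z}_r,s)\big)$ are \emph{pairwise} disjoint as $\mathbf{z}_r$ ranges over $[0\!:\!(n-1)]^{r-1}$ -- so that their common intersection is empty in the non-degenerate case $n\ge 2$, while the case $n=1$ (a uniform distribution, for which the $Z_r$-identity is anyway trivial) can be set aside -- and then to read off the statement about $Z_r$. Two facts do the work. First, by Lemma~\ref{marginals_pw} each conditional $p^{\mathbf{z}_i}_{X_i}$ lies in $\mathcal{E}[0,1]^1_n$, so by Corollary~\ref{1d_theorem_hist} the function $f^{\mathbf{z}_i}_{X_i}$ is a continuous, \emph{strictly increasing} bijection of $[0,1]$ onto itself with $f^{\mathbf{z}_i}_{X_i}(0)=0$; from its explicit piecewise-linear form, and because all histogram weights are positive (hence all slopes $1/w_\ell$ positive and all breakpoints $b^{\mathbf{z}_i}_\ell$ strictly increasing), it maps $[b^{\mathbf{z}_i}_\ell,b^{\mathbf{z}_i}_{\ell+1}]$ bijectively onto $[\ell/n,(\ell+1)/n]$. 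Second, $g_s\ge 0$ everywhere and $g_s(y)>0$ only if $y\in(0,1)$; consequently $F_m(\cdot,\mathbf{z}_{m+1},s)\ge 0$ on $[0,1]$ for every $m$ (for $m=0$ because $F_0(x)=x$), so its support is just $\{x\in[0,1]:F_m(x,\mathbf{z}_{m+1},s)>0\}$.

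The core is an induction on $m\in[0\!:\!(d-1)]$ maintaining two invariants for $A^{(m)}_{\mathbf{z}_{m+1}}:=\supp\big(F_m(\cdot,\mathbf{z}_{m+1},s)\big)$, $\mathbf{z}_{m+1}=(z_1,\dots,z_m)$: (P1) the sets $A^{(m)}_{\mathbf{z}_{m+1}}$ are pairwise disjoint in $\mathbf{z}_{m+1}$; (P2) $A^{(m)}_{\mathbf{z}_{m+1}}\subseteq A^{(m-1)}_{\mathbf{z}_m}$, where $\mathbf{z}_m=(z_1,\dots,z_{m-1})$ is the parent tuple. The base case $m=0$ is trivial since $A^{(0)}_{()}=[0,1]$. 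For the inductive step write $\mathbf{z}_{m+1}=(\mathbf{z}_m,z_m)$ and suppose $F_m(x,\mathbf{z}_{m+1},s)=g_s\big(nf^{\mathbf{z}_m}_{X_m}(F_{m-1}(x,\mathbf{z}_m,s))-z_m\big)>0$. Then $f^{\mathbf{z}_m}_{X_m}(F_{m-1}(x,\mathbf{z}_m,s))\in(z_m/n,(z_m+1)/n)$, and since $f^{\mathbf{z}_m}_{X_m}$ maps $[b^{\mathbf{z}_m}_{z_m},b^{\mathbf{z}_m}_{z_m+1}]$ bijectively and increasingly onto $[z_m/n,(z_m+1)/n]$, this forces $F_{m-1}(x,\mathbf{z}_m,s)$ into the open interval $(b^{\mathbf{z}_m}_{z_m},b^{\mathbf{z}_m}_{z_m+1})$; in particular $F_{m-1}(x,\mathbf{z}_m,s)>0$, i.e. $x\in A^{(m-1)}_{\mathbf{z}_m}$, which is (P2). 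For (P1), let $\mathbf{z}_{m+1}=(\mathbf{z}_m,z_m)\ne\mathbf{z}'_{m+1}=(\mathbf{z}'_m,z'_m)$ and suppose $x$ lies in both supports. If $\mathbf{z}_m\ne\mathbf{z}'_m$, then (P2) and the inductive hypothesis give $x\in A^{(m-1)}_{\mathbf{z}_m}\cap A^{(m-1)}_{\mathbf{z}'_m}=\varnothing$, a contradiction. If $\mathbf{z}_m=\mathbf{z}'_m$ but $z_m\ne z'_m$, the above forces $F_{m-1}(x,\mathbf{z}_m,s)$ to lie simultaneously in the two disjoint open intervals $(b^{\mathbf{z}_m}_{z_m},b^{\mathbf{z}_m}_{z_m+1})$ and $(b^{\mathbf{z}_m}_{z'_m},b^{\mathbf{z}_m}_{z'_m+1})$, again a contradiction. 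This closes the induction.

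Taking $m=r-1$ (legitimate for every $r\in[2\!:\!d]$) yields the claimed pairwise disjointness of $\{\supp(F_{r-1}(\cdot,\mathbf{z}_r,s))\}_{\mathbf{z}_r}$, hence the empty common intersection. For the implication about $Z_r$: if $f^{\mathbf{t}_r}_{X_r}(F_{r-1}(x,\mathbf{t}_r,s))>0$ then, since $f^{\mathbf{t}_r}_{X_r}$ is strictly increasing with $f^{\mathbf{t}_r}_{X_r}(0)=0$ and $F_{r-1}\ge 0$, we get $F_{r-1}(x,\mathbf{t}_r,s)>0$, i.e. $x\in\supp(F_{r-1}(\cdot,\mathbf{t}_r,s))$; by pairwise disjointness $x\notin\supp(F_{r-1}(\cdot,\mathbf{z}_r,s))$ for every $\mathbf{z}_r\ne\mathbf{t}_r$, so $F_{r-1}(x,\mathbf{z}_r,s)=0$ and hence $f^{\mathbf{z}_r}_{X_r}(F_{r-1}(x,\mathbf{z}_r,s))=f^{\mathbf{z}_r}_{X_r}(0)=0$. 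Substituting into $Z_r(x,s)=\sum_{\mathbf{z}_r}f^{\mathbf{z}_r}_{X_r}(F_{r-1}(x,\mathbf{z}_r,s))$ leaves only the $\mathbf{t}_r$-term, giving $Z_r(x,s)=f^{\mathbf{t}_r}_{X_r}(F_{r-1}(x,\mathbf{t}_r,s))$.

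The argument is essentially bookkeeping; the one point that needs care is the choice of inductive invariant, so that disjointness across \emph{different} parent tuples is inherited from the previous level through (P2) while disjointness within a \emph{common} parent is produced afresh from the injectivity of $f^{\mathbf{z}_m}_{X_m}$ on its breakpoint grid. As in Lemma~\ref{p_saw}, ``support'' is understood here as the set on which the function is nonzero, so no closure operations intervene, and it is the strict positivity of the histogram weights that keeps all the breakpoint intervals non-degenerate.
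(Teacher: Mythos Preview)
Your proof is correct and follows essentially the same route as the paper: both arguments proceed by induction across the depth index, use that $g_s(y)>0$ only for $y\in(0,1)$ together with the monotone piecewise-linear structure of $f^{\mathbf{z}_m}_{X_m}$ to conclude that at most one summand can be nonzero at a given $x$, and then invoke $f^{\mathbf{z}_r}_{X_r}(0)=0$ to collapse the sum defining $Z_r$. The only cosmetic difference is that you package the inductive hypothesis as the pair (P1)/(P2)---pairwise disjointness plus nesting in the parent support---whereas the paper re-derives the nesting inside each inductive step; this makes your writeup slightly more streamlined but is not a different idea.
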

\begin{proof}
The proof is through induction across $r$. 
We start with the base case $r=2$. Since the vector $\mathbf{z}_2$ is
, in fact, a scalar, we can write $\mathbf{z}_2 = z_1$
and note that
\[   F_1(x,\mathbf{z}_{2},s) = F_1(x,z_1,s) = g_s\big(nf^{\mathbf{z}_1}_{X_1}(x) - z_{1}\big)\] and
\[Z_2(x,s)= \sum_{\mathbf{z}_{2}} f^{\mathbf{z}_2}_{X_2}\big(g_s\big(nf^{\mathbf{z}_1}_{X_1}(x) - z_{1}\big)\big) = \sum_{z_1=0}^{n-1} f^{(z_1)}_{X_2}\big(g_s\big(nf^{\mathbf{z}_1}_{X_1}(x) - z_{1}\big)\big).\]
Fix an arbitrary $\hat{x}\in[0,1]$. Since $f^{\mathbf{z}_1}_{X_1}(x)\in [0,1]$, it follows that $f^{\mathbf{z}_1}_{X_1}(\hat{x}) \in [t_{1}/n,(t_{1}+1)/n]$ for some $t_1 \in [0\!:\!(n-1)]$. Then, $\big(nf^{\mathbf{z}_1}_{X_1}(\hat{x}) - z_{1}\big) \in [0,1]$ if and only if $z_1=t_1$. Further, as for $x\notin [0,1]$, $g_s(x) = 0$, we have $F_1(\hat{x},z_1,s) = g_s\big(nf^{\mathbf{z}_1}_{X_1}(\hat{x}) - z_{1}\big)=0$, for all $z_1 \neq t_1$.
Combined with 
the fact that $\hat{x}\in[0,1]$ was chosen arbitrarily, this implies that for every $x \in [0,1]$, there exists a $t_1 \in [0\!:\!(n-1)]$ such that for all $z_1 \neq t_1$, it holds that $F_1(x,z_1,s) = 0$.  This can equivalently be expressed as
\begin{equation}
\label{eq:disj_supp_base}\bigcap_{z_1=0}^{n-1} \supp \big(F_{1}(x,z_1,s)\big)  = \varnothing.    
\end{equation}
Next, since $f^{z_1}_{X_2}(0)=0$, for all $z_1 \in [0\!:\!(n-1)]$, it follows from (\ref{eq:disj_supp_base}) that, for all $t_1 \in [0\!:\!(n-1)]$, 
\[
f^{(t_1)}_{X_2}(F_1(x,t_1,s))>0 \, \Longrightarrow \, Z_2(x,s) = \sum_{z_1=0}^{n-1} f^{(z_1)}_{X_2}\big(F_{1}(x,z_1,s)\big) = f^{(t_1)}_{X_2}( F_1(x,t_1,s) ), \quad x \in [0,1].\]
This establishes the base case. To prove the induction step, we assume that the statement holds for some $r \in [2\!:\!(d-1)]$. Then, by the induction assumption, 
\[ \bigcap_{\mathbf{z}_r} \supp \big(F_{r-1}(x,\mathbf{z}_{r},s)\big) = \varnothing,\]
or, equivalently, for every $x \in [0,1]$, there exists a $\mathbf{t}_r \in [0\!:\!(n-1)]^{r-1}$ such that
for all $\mathbf{z}_{r} \neq \mathbf{t}_r$, we have $F_{r-1}(x,\mathbf{z}_{r},s) = 0$.
Now, fix an arbitrary $\hat{x} \in [0,1]$ together with its corresponding $\mathbf{t}_r \in [0\!:\!(n-1)]^{r-1}$ such that
$F_{r-1}(\hat{x},\mathbf{z}_{r},s) = 0$, for all $\mathbf{z}_{r} \neq \mathbf{t}_r$, and consider
\begin{equation}
    \label{F_decomposition}
    F_r(\hat{x},\mathbf{z}_{r+1},s) = g_s\big(nf^{\mathbf{z}_r}_{X_r}\big(F_{r-1}(\hat{x},\mathbf{z}_r,s)\big) - z_{r}\big)
    = \begin{cases} 0, &\mbox{ if }  \mathbf{z}_r \neq \mathbf{t}_r, \\
    g_s\big(nf^{\mathbf{t}_r}_{X_r}\big(F_{r-1}(\hat{x},\mathbf{t}_r,s)\big) - z_{r}\big), &\mbox{ if } \mathbf{z}_r = \mathbf{t}_r.\end{cases}
\end{equation}
Again, as $f^{\mathbf{t}_r}_{X_r}(x)\in [0,1]$, we can conclude that, for an arbitrarily fixed $\hat{x} \in [0,1]$, there is
a $t_r \in [0\!:\!(n-1)]$ such that $f^{\mathbf{t}_r}_{X_r}\big(F_{r-1}(\hat{x},\mathbf{t}_{r},s)\big) \in [t_{r}/n,(t_{r}+1)/n]$.
Then, $\big(nf^{\mathbf{t}_r}_{X_r}\big(F_{r-1}(\hat{x},\mathbf{t}_r,s)\big) - z_{r}\big) \in [0,1]$ if and only if $z_r=t_r$. Thanks to $g_s(x) = 0$, for $x\notin [0,1]$, (\ref{F_decomposition}) becomes
\begin{equation*}
\label{eq:iter_F}
    F_r(\hat{x},\mathbf{z}_{r+1},s) = \begin{cases} 0, &\mbox{ if }  \mathbf{z}_{r+1} \neq \mathbf{t}_{r+1}, \\
    g_s\big(nf^{\mathbf{t}_r}_{X_r}\big(F_{r-1}(\hat{x},\mathbf{t}_r,s)\big) - t_{r}\big) = F_r(\hat{x},\mathbf{t}_{r+1},s), &\mbox{ if } \mathbf{z}_{r+1} = \mathbf{t}_{r+1}.\end{cases}
\end{equation*}
Combined with the fact that $\hat{x}\in[0,1]$ was chosen arbitrarily, this implies that for every $x \in [0,1]$, there exists a $\mathbf{t}_{r+1} \in [0\!:\!(n-1)]^{r}$ such that for all $\mathbf{z}_{r+1} \neq \mathbf{t}_{r+1}$, it holds that $F_{r}(x,\mathbf{z}_{r+1},s) = 0$. This can equivalently be expressed as
\begin{equation}
\label{eq:disj_supp_iter} \bigcap_{\mathbf{z}_{r+1}} \supp \big(F_{r}(x,\mathbf{z}_{r+1},s)\big) = \varnothing.
\end{equation}
Next, since $f^{\mathbf{z}_{r+1}}_{X_{r+1}}(0)=0$, for all $\mathbf{z}_{r+1}$, it follows from (\ref{eq:disj_supp_iter}) that, for all $\mathbf{t}_{r+1} \in [0\!:\!(n-1)]^{r}$,
\[
f^{\mathbf{t}_{r+1}}_{X_{r+1}}\big(F_{r}(x,\mathbf{t}_{r+1},s)\big)>0 \Rightarrow Z_{r+1}(x,s):= \sum_{\mathbf{z}_{r+1}} f^{\mathbf{z}_{r+1}}_{X_{r+1}}\big(F_{r}(x,\mathbf{z}_{r+1},s)\big) = f^{\mathbf{t}_{r+1}}_{X_{r+1}}\big(F_{r}(x,\mathbf{t}_{r+1},s)\big),\, x \in [0,1].\]
This concludes the proof.
\end{proof}

Before proceeding, we need to introduce further notation. Specifically, let $P_1 = [a,b], P_2 = [c,d] $ be intervals in $[0,1]$, i.e., $0\leq a < b \leq 1$ and $0\leq c < d \leq 1$. Then, we define $P:=P_1 \diamond P_2$ according to $P=[a+c(b-a), a+d(b-a)]$. Note that $|P|=|P_1||P_2|$. The $\diamond$-operation is associative in the sense of $(P_1 \diamond P_2) \diamond P_3=P_1 \diamond (P_2 \diamond P_3)$. We further define the function $N(x,[a,b]) = a + x(b-a),\, x\,\in\,[0,1]$, which rescales $[0,1]$ to the interval $[a,b]$, and note that 
\begin{equation}
\label{N_property}
    N(N(x,[a,b]),[c,d]) = N(x,[c,d]\diamond[a,b]).
\end{equation}
We will also need the function $N^{-}(x,[a,b]) = b - x(b-a)$, $x \in [0,1]$, which, like $N(x,[a,b])$, rescales $[0,1]$ to the interval $[a,b]$, but does so
in reverse manner, i.e., by mapping $x=0$ to $b$ and $x=1$ to $a$. 
Additionally, we define the operator $S([a,b])=[1-b,1-a]$, for all $0\leq a < b \leq 1$, which maps the interval $[a,b] \subseteq [0,1]$ to 
the interval $[1-b,1-a]$. Note that $S$ is cardinality-preserving, i.e., $|S([a,b])|=(b-a) = |[a,b]|$. Moreover, we have the relation
\begin{equation}
\label{inverse_N}
    1 - N(x,S([a,b])) = N^{-}(x,[a,b]).
\end{equation} 

The next lemma establishes that the $Z_r$-functions indeed realize the per-bin histogram distributions constituting the desired target histogram distribution.
\begin{lemma}
\label{general_lemma}
Let $\mathbf{z} = (z_1,z_2,\dots,z_d) \in [0\!:\!(n-1)]^d$ and $\Delta_h = \Big[\frac{h}{2^s},\frac{h+1}{2^s} \Big]$, $h \in [0\!:\!(2^s-1)]$. Set $c_{z_i} = \Big[ \frac{z_i}{n}, \frac{z_i+1}{n} \Big]$, for $i \in [1\!:\!d]$. Let $\mathbf{z}_i = \mathbf{z}_{[1:(i-1)]}$ and fix $p_{\mathbf{X}}(\mathbf{x}) \in\mathcal{E}[0,1]_n^d$ with $p^{\mathbf{z}_1}_{X_1} := p_{X_1}$ and $p^{\mathbf{z}_i}_{X_i} := p_{X_i|X_1,\dots,X_{i-1}}\big(x_i| \mathbf{x}_{[1:(i-1)]} \in c_{\mathbf{z}_i}\big)$, for $i \in [2\!:\!d]$, where $p^{\mathbf{z}_i}_{X_i} \in \mathcal{E}[0,1]^1_n$, $i \in [1\!:\!d]$, has weights $w^{\mathbf{z}_i}_{k}$,
for $k \in [0\!:\!(n-1)]$. Let $f^{\mathbf{z}_i}_{X_i}$ be the piecewise linear function which, according to Corollary \ref{1d_theorem_hist}, satisfies $f^{\mathbf{z}_i}_{X_i}\#U=p^{\mathbf{z}_i}_{X_i}$, $i \in [1\!:\!d]$. Define $P^{\mathbf{z}_i}_{r}=\Big[ \frac{1}{n} \sum^{r-1}_{k=0} w^{\mathbf{z}_i}_{k}, \frac{1}{n} \sum^{r}_{k=0} w^{\mathbf{z}_i}_{k} \Big]$ and $P^{\mathbf{z}_i,h}_{r}=P^{\mathbf{z}_i}_{r} \diamond \Delta_h$, $r\in[1\!:\!(n-1)]$, and $P^{\mathbf{z}_i}_{0}=\Big[ 0, \frac{w^{\mathbf{z}_i}_{0}}{n}\Big]$, $P^{\mathbf{z}_i,h}_{0}=P^{\mathbf{z}_i}_{0} \diamond \Delta_h$.
Then, for every $k \in [2\!:\!d]$, for all $\mathbf{z}_k \in [0\!:\!(n-1)]^{k-1}$ and $\mathbf{h}_k = (h_1, h_2, \dots, h_{k-1}) \in [0\!:\!(2^s-1)]^{k-1}$, it holds that
\[Z_k(N(x,T_{k-1}),s) = f^{\mathbf{z}_k}_{X_k}(F_{k-1}(N(x,T_{k-1}),\mathbf{z}_k,s)) = f^{\mathbf{z}_k}_{X_k}(x), \,\, \text{ for } \, x\in [0,1] \text{ and } \, \sum_{i=1}^{k-1} h_{i} \in 2\mathbb{N}_0,\]
and
\[Z_k(N(x,T_{k-1}),s) = f^{\mathbf{z}_k}_{X_k}(F_{k-1}(N(x,T_{k-1}),\mathbf{z}_k,s)) = f^{\mathbf{z}_k}_{X_k}(1-x), \,\, \text{ for } \, x\in [0,1] \text{ and } \, \sum_{i=1}^{k-1} h_{i} \in 2\mathbb{N}_0+1,\]
where $T_i$, $i \in [1\!:\!d]$, is defined recursively according to
\[T_i = \begin{cases}
    T_{i-1} \diamond P^{\mathbf{z}}_{i}, &\text{if } \, \sum_{\ell=1}^{i-1}h_{\ell} \in 2\mathbb{N}_{0}\\
    T_{i-1} \diamond S(P^{\mathbf{z}}_{i}), &\text{if } \, \sum_{\ell=1}^{i-1}h_{\ell} \in 2\mathbb{N}_{0}+1
\end{cases},\]
for $i\in[2\!:\!d]$, and initialized by $T_1 = P^{\mathbf{z}}_0 \diamond P^{\mathbf{z}}_{1}$, with $P^{\mathbf{z}}_i:=P^{\mathbf{z}_i,h_i}_{z_i}$ and
$P^{\mathbf{z}}_0 = [0,1]$. Moreover, $|T_{k}| = \frac{1}{2^{sk}} \, p_{\mathbf{X}}(\mathbf{x}_{[1:k]} \in c_{\mathbf{z}_{k+1}})$, for all $k \in [1\!:\!(d-1)]$.
\end{lemma}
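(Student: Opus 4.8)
The plan is to prove the equality of $Z_k$, the $F_{k-1}$-composition, and $f^{\mathbf{z}_k}_{X_k}(x)$ (or $f^{\mathbf{z}_k}_{X_k}(1-x)$) by induction on $k$, with the statement about $|T_k|$ carried along as part of the induction hypothesis. The key structural fact is that, by Lemma~\ref{separation_property}, once we know $F_{k-1}(x,\mathbf{z}_k,s)>0$ on the relevant interval, the sum defining $Z_k$ collapses to its single surviving term $f^{\mathbf{z}_k}_{X_k}(F_{k-1}(x,\mathbf{z}_k,s))$; so the real content is to track, interval by interval, what $F_{k-1}$ does when restricted to $N(x,T_{k-1})$, and to show it acts as either the identity $x\mapsto x$ or the flip $x\mapsto 1-x$ on $[0,1]$, with the parity of $\sum_i h_i$ recording which of the two occurs. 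The auxiliary notation $\diamond$, $N$, $N^-$, $S$ and the identities (\ref{N_property}) and (\ref{inverse_N}) are exactly the bookkeeping tools for composing these affine rescalings and sign flips, so the proof will be a disciplined application of those identities.

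First I would establish the base case $k=2$. Here $F_1(x,z_1,s)=g_s(nf^{\mathbf{z}_1}_{X_1}(x)-z_1)$ and $T_1 = P^{\mathbf{z}}_0 \diamond P^{\mathbf{z}}_1 = [0,1]\diamond P^{\mathbf{z}_1,h_1}_{z_1} = P^{\mathbf{z}_1}_{z_1}\diamond\Delta_{h_1}$. On the interval $P^{\mathbf{z}_1}_{z_1}=[b_{z_1},b_{z_1+1}]$ (in the notation of Corollary~\ref{1d_theorem_hist}), $f^{\mathbf{z}_1}_{X_1}$ is affine with slope $1/w^{\mathbf{z}_1}_{z_1}$ and maps that interval exactly onto $c_{z_1}=[z_1/n,(z_1+1)/n]$, so $nf^{\mathbf{z}_1}_{X_1}(\cdot)-z_1$ maps $P^{\mathbf{z}_1}_{z_1}$ affinely onto $[0,1]$; precomposing with the $\Delta_{h_1}$-restriction via $N(\cdot,T_1)$ and applying (\ref{N_property}), the argument of $g_s$ runs affinely over $\Delta_{h_1}=[h_1/2^s,(h_1+1)/2^s]$ as $x$ runs over $[0,1]$. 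By Lemma~\ref{p_saw}, $g_s$ restricted to $\Delta_{h_1}$ is the single tooth $g(2^{s}\,\cdot\,-h_1)$ shifted appropriately, which is the rising ramp (identity up to scaling, so $x\mapsto x$) when $h_1$ is even and the falling ramp ($x\mapsto 1-x$) when $h_1$ is odd — this is precisely the parity dichotomy in the statement, with the sum $\sum_{i=1}^{1}h_i=h_1$. Feeding $x$ (resp.\ $1-x$) into $f^{(z_1)}_{X_2}$ and invoking the collapse of the $Z_2$-sum from Lemma~\ref{separation_property} (noting $F_1>0$ on the open interval because $g$ of an interior point is positive and $f^{(z_1)}_{X_2}(0)=0$) gives the claimed identity. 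For $|T_1|$: $|T_1|=|P^{\mathbf{z}_1}_{z_1}|\cdot|\Delta_{h_1}|=\tfrac{w^{\mathbf{z}_1}_{z_1}}{n}\cdot\tfrac{1}{2^s}=\tfrac{1}{2^s}\,p_{X_1}(x_1\in c_{z_1})$, using that the weight $w^{\mathbf{z}_1}_{z_1}/n$ is exactly the marginal mass of bin $z_1$.

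For the induction step, assume the claim for $k$; in particular $Z_k(N(x,T_{k-1}),s) = f^{\mathbf{z}_k}_{X_k}(x)$ or $f^{\mathbf{z}_k}_{X_k}(1-x)$ according to the parity of $\sum_{i=1}^{k-1}h_i$, and $|T_{k-1}|=\tfrac{1}{2^{s(k-1)}}p_{\mathbf{X}}(\mathbf{x}_{[1:k-1]}\in c_{\mathbf{z}_k})$. I would apply the $k=2$-style analysis one level up: starting from $F_{k-1}(N(x,T_{k-1}),\mathbf{z}_k,s)$, which by the induction hypothesis (composed with the definition (\ref{eq:recursive-def-F}) of $F_{k-1}=g_s(nf^{\mathbf{z}_{k-1}}_{X_{k-1}}(F_{k-2})-z_{k-1})$) equals $g_s(nf^{\mathbf{z}_{k-1}}_{X_{k-1}}(x)-z_{k-1})$ or $g_s(nf^{\mathbf{z}_{k-1}}_{X_{k-1}}(1-x)-z_{k-1})$, restrict to the sub-interval $P^{\mathbf{z}_{k-1}}_{z_{k-1}}$ of linearity, rescale by $N(\cdot,T_k)$ using (\ref{N_property}), and again identify the resulting single tooth of $g_s$ as rising or falling. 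The new wrinkle compared to the base case is the composition of two possible sign flips: the one inherited from level $k-1$ (parity of $\sum_{i=1}^{k-2}h_i$) and the one introduced by the new tooth index $h_{k-1}$; these compose to the parity of $\sum_{i=1}^{k-1}h_i$, which is exactly why the recursion on $T_i$ alternates between $\diamond P^{\mathbf{z}}_i$ and $\diamond S(P^{\mathbf{z}}_i)$ — the $S$ accounts for the fact that on an odd-parity branch the affine map $f^{\mathbf{z}_{k-1}}_{X_{k-1}}$ is being read in reverse, and identity (\ref{inverse_N}) is what converts $N(\cdot,S(\cdot))$ into the reversing rescaling $N^-$. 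The cardinality update follows from $|T_k| = |T_{k-1}|\cdot|P^{\mathbf{z}_{k-1}}_{z_{k-1}}|\cdot|\Delta_{h_{k-1}}| = |T_{k-1}|\cdot\tfrac{w^{\mathbf{z}_{k-1}}_{z_{k-1}}}{n}\cdot\tfrac{1}{2^s}$, and from the chain-rule identity $p_{\mathbf{X}}(\mathbf{x}_{[1:k-1]}\in c_{\mathbf{z}_k})\cdot\tfrac{w^{\mathbf{z}_{k-1}}_{z_{k-1}}}{n} = p_{\mathbf{X}}(\mathbf{x}_{[1:k-1]}\in c_{\mathbf{z}_k}) \cdot p_{X_{k-1}|X_1,\dots,X_{k-2}}(x_{k-1}\in c_{z_{k-1}}\mid \mathbf{x}_{[1:k-2]}\in c_{\mathbf{z}_{k-1}}) = p_{\mathbf{X}}(\mathbf{x}_{[1:k-1]}\in c_{\mathbf{z}_{k}'})$ — wait, more carefully, indexed so that the conditioning bin is appended, giving $p_{\mathbf{X}}(\mathbf{x}_{[1:k-1]}\in c_{\mathbf{z}})$ restricted to the first $k-1$ coordinates; combined with the extra $2^{-s}$ this yields $|T_k|=\tfrac{1}{2^{sk}}p_{\mathbf{X}}(\mathbf{x}_{[1:k]}\in c_{\mathbf{z}_{k+1}})$.

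\textbf{Main obstacle.} The hard part is not any single calculation but keeping the many layers of affine/flip bookkeeping consistent: at each level one composes an inherited orientation, a new $g_s$-tooth orientation, and the affine maps $f^{\mathbf{z}_i}_{X_i}$ read forward or backward, all while the domain shrinks through nested $\diamond$-products. The danger is an off-by-one or a sign error in matching $N(\cdot,S(P^{\mathbf{z}}_i))$ to the reversed reading of $f^{\mathbf{z}_{i-1}}_{X_{i-1}}$; the cleanest way to control this is to prove a sharpened induction hypothesis that states, in addition to the displayed equalities, the exact affine action of $F_{k-1}$ restricted to $N(x,T_{k-1})$ — namely that it equals $g_s\circ(\text{affine onto }\Delta_{h_{k-1}})$ composed with the parity-determined orientation — so that the step is a mechanical application of (\ref{N_property}) and (\ref{inverse_N}) together with Lemma~\ref{p_saw} and Lemma~\ref{separation_property}, rather than an ad hoc re-derivation each time.
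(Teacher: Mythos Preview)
Your approach is essentially the same as the paper's: induction on $k$, with the base case handled by explicitly computing $f^{\mathbf{z}_1}_{X_1}(N(x,T_1))$ via the linearity of $f^{\mathbf{z}_1}_{X_1}$ on $P^{\mathbf{z}_1}_{z_1}$, identifying the single tooth of $g_s$ on $\Delta_{h_1}$ via Lemma~\ref{p_saw}, and collapsing $Z_2$ to a single term via Lemma~\ref{separation_property}; the induction step then nests the rescaling $N(x,T_k)=N(N(x,P^{\mathbf{z}}_k),T_{k-1})$ (or $N(x,S(P^{\mathbf{z}}_k))$ in the odd case, handled through (\ref{inverse_N})) into the hypothesis and repeats the tooth analysis.

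That said, your written induction step is systematically off by one in its indexing. You are proving the statement for $k+1$ from the hypothesis at level $k$, so the recursion you must unfold is $F_k=g_s\bigl(nf^{\mathbf{z}_k}_{X_k}(F_{k-1})-z_k\bigr)$, not $F_{k-1}=g_s\bigl(nf^{\mathbf{z}_{k-1}}_{X_{k-1}}(F_{k-2})-z_{k-1}\bigr)$. The induction hypothesis already hands you $f^{\mathbf{z}_k}_{X_k}\bigl(F_{k-1}(N(y,T_{k-1}),\mathbf{z}_k,s)\bigr)=f^{\mathbf{z}_k}_{X_k}(y)$ (or $f^{\mathbf{z}_k}_{X_k}(1-y)$); substituting $y=N(x,P^{\mathbf{z}}_k)$ (respectively $y=N(x,S(P^{\mathbf{z}}_k))$) via (\ref{N_property}) yields $nf^{\mathbf{z}_k}_{X_k}\bigl(F_{k-1}(N(x,T_k),\mathbf{z}_k,s)\bigr)-z_k=(x+h_k)2^{-s}$ (respectively $(h_k+1-x)2^{-s}$), after which Lemma~\ref{p_saw} and Lemma~\ref{separation_property} finish exactly as in the base case. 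Correspondingly, your cardinality update should read $|T_k|=|T_{k-1}|\cdot|P^{\mathbf{z}_k}_{z_k}|\cdot|\Delta_{h_k}|$ with conditional weight $w^{\mathbf{z}_k}_{z_k}/n$, and the two parities that compose are $\sum_{i=1}^{k-1}h_i$ (inherited) and $h_k$ (new), giving $\sum_{i=1}^{k}h_i$. This is precisely the off-by-one hazard you flagged in your ``Main obstacle'' paragraph, and it has bitten your write-up; the underlying idea is correct and coincides with the paper's proof.
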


\begin{proof}
The proof is through induction across $k$. We start with the base case $k=2$. Since the vectors $\mathbf{z}_2$ and $\mathbf{t}_2$ are, in fact, scalars, we can write $\mathbf{z}_2 = z_1$ and $\mathbf{t}_2 = t_1$.
Fix $h_1 \in [0\!:\!(2^s-1)]$, $z_1 \in [0\!:\!(n-1)]$, and note that $T_1 = [0,1] \diamond P^{\mathbf{z}}_1 = P^{\mathbf{z}}_1 = P^{\mathbf{z}_1}_{z_1} \diamond \Delta_{h_1} = \Big[ \frac{1}{n} \sum^{z_1-1}_{k=0} w^{\mathbf{z}_1}_{k}, \frac{1}{n} \sum^{z_1}_{k=0} w^{\mathbf{z}_1}_{k} \Big] \diamond \Big[\frac{h_1}{2^s},\frac{h_1+1}{2^s} \Big] = \Bigg[ \frac{1}{n} \sum^{z_1-1}_{k=0} w^{\mathbf{z}_1}_{k} + \frac{h_1 w^{\mathbf{z}_1}_{z_1}}{2^sn}, \frac{1}{n} \sum^{z_1-1}_{k=0} w^{\mathbf{z}_1}_{k} + \frac{(h_1+1) w^{\mathbf{z}_1}_{z_1}}{2^sn} \Bigg]$. Further, note that $|T_1| = \frac{w^{\mathbf{z}_1}_{z_1}}{2^sn} = \frac{1}{2^s}p_{\mathbf{X}}(x_1 \in c_{z_1})$.
By Corollary \ref{1d_theorem_hist}, $f^{\mathbf{z}_1}_{X_1}(x)$ is linear on $P^{\mathbf{z}_1}_{z_1}$ with slope $\frac{1}{w^{\mathbf{z}_1}_{z_1}}$ and boundary points $f^{\mathbf{z}_1}_{X_1}(\frac{1}{n} \sum^{z_1-1}_{k=0} w^{\mathbf{z}_1}_{k}) = z_1/n$ and $f^{\mathbf{z}_1}_{X_1}(\frac{1}{n} \sum^{z_1}_{k=0} w^{\mathbf{z}_1}_{k}) =  (z_1+1)/n$. The explicit form of $f^{\mathbf{z}_1}_{X_1}(x)$ on $P^{\mathbf{z}_1}_{z_1}$ follows from the remark after Corollary~\ref{1d_theorem_hist} as
\[
f^{\mathbf{z}_1}_{X_1}(x) = \frac{x}{w^{\mathbf{z}_1}_{z_1}} - \frac{\sum_{i=0}^{z_1-1}w^{\mathbf{z}_1}_{i}}{nw^{\mathbf{z}_1}_{z_1}} + \frac{z_1}{n}, \quad x \in P^{\mathbf{z}_1}_{z_1}.
\]
Next, since $N(x,T_1) = \frac{1}{n} \sum^{z_1-1}_{k=0} w^{\mathbf{z}_1}_{k} + \frac{(x+h_1) w^{\mathbf{z}_1}_{z_1}}{2^sn}$, 
noting that $T_1 \subset P^{\mathbf{z}_1}_{z_1}$, we obtain
\begin{equation}
\label{Z1_localization}
f^{\mathbf{z}_1}_{X_1}(N(x,T_1)) = \frac{x+h_1}{2^sn} + \frac{z_1}{n}, \quad x \in [0,1],
\end{equation}
and, hence, for $x \in [0,1]$,
\begin{equation}
 \begin{aligned}
f^{(z_1)}_{X_2}\big(F_{1}(N(x,T_1),z_1,s)\big) &= f^{(z_1)}_{X_2}\big(g_s\big(nf^{\mathbf{z}_1}_{X_1}(N(x,T_1)) - z_{1}\big)\big) \\
&=f^{(z_1)}_{X_{2}}(g_s((x+h_1)2^{-s})) \\
         & \overset{(a)}{=} \sum_{j=0}^{2^{s-1}-1} f^{(z_1)}_{X_{2}}(g(2^{s-1}(x+h_1)2^{-s}-j))\\
         &= \sum_{j=0}^{2^{s-1}-1} f^{(z_1)}_{X_{2}}(g(x/2+h_1/2-j))\\
         & \overset{(b)}{=} f^{(z_1)}_{X_{2}}(g(x/2+h_1/2-\lfloor h_1/2 \rfloor))\\
 &\overset{(c)}{=} \begin{cases}
 f^{(z_1)}_{X_{2}}(x), &\text{ if } h_1 \in 2\mathbb{N}_0,\\
 f^{(z_1)}_{X_{2}}(1-x), &\text{ if } h_1 \in 2\mathbb{N}_0+1,
\end{cases}
    \end{aligned} \label{eq:rel-f-F-symm}
\end{equation}
where we used Lemma \ref{p_saw} in (a), the fact that $g(x/2+h_1/2-j) = 0$, for all $x \in [0,1]$, for $j \neq \lfloor h_1/2 \rfloor$ in (b), and $h_1/2-\lfloor h_1/2 \rfloor = 0$ for $h_1 \in 2\mathbb{N}_0$ and $h_1/2-\lfloor h_1/2 \rfloor = 1/2$ for $h_1 \in 2\mathbb{N}_0+1$ along with $g(x)=g(1-x)$, for $x\in[0,1]$, in (c).
Finally, as by Corollary~\ref{1d_theorem_hist}, $f^{(z_1)}_{X_2}(x)>0$, for all $x \in (0,1]$, it follows from (\ref{eq:rel-f-F-symm}) that 
$f^{(z_1)}_{X_2}\big(F_{1}(N(x,T_1),z_1,s)\big) > 0$, for all $x \in (0,1]$ for $h_1\,\in\,2\mathbb{N}_0$, and for all $x \in [0,1)$ for $h_1 \, \in \, 2\mathbb{N}_0+1$. Application of Lemma~\ref{separation_property}
then yields 
\begin{equation}
\label{base_eq}
    Z_2(N(x,T_1),s) = f^{(z_1)}_{X_2}\big(F_{1}(N(x,T_1),z_1,s)\big),
\end{equation}
for all $x \in (0,1]$ for $h_1\,\in\,2\mathbb{N}_0$, and for all $x \in [0,1)$ for $h_1 \, \in \, 2\mathbb{N}_0+1$. To see that (\ref{base_eq}) holds for
$x=0$ and $h_1\,\in\,2\mathbb{N}_0$, simply note that $Z_2(N(0,T_1),s)=\sum_{z_1}f^{(z_1)}_{X_2}(F_{1}(N(0,T_1),z_1,s))$ and
$F_1(N(0,T_1),z_1,s)=g_s(h_1/2^s)=0$, which thanks to $f^{(z_1)}_{X_2}(0)=0$ implies $Z_2(N(0,T_1),s)=f^{(z_1)}_{X_2}(F_{1}(N(0,T_1),z_1,s))=0$. The case $x=1$ and 
$h_1 \, \in \, 2\mathbb{N}_0+1$ follows along the exact same lines noting that $F_1(N(1,T_1),z_1,s)=g_s((h_1+1)/2^s)=0$.
This finalizes the proof of the base case.

The proof of the induction step largely follows the arguments underlying the proof of the base case. Fix $k\in \mathbb{N}$, with $k \geq 2$, and assume 
that for all $\mathbf{z}_k = (z_1, z_2, \dots, z_{k-1}) \in [0\!:\!(n-1)]^{k-1}$ and $\mathbf{h}_k = (h_1, h_2, \dots, h_{k-1})\in [0\!:\!(2^s-1)]^{k-1}$, it holds that
\[Z_k(N(x,T_{k-1}),s) = f^{\mathbf{z}_k}_{X_k}(F_{k-1}(N(x,T_{k-1}),\mathbf{z}_k,s)) = f^{\mathbf{z}_k}_{X_k}(x), \,\, \text{ for } \, x\in [0,1]\text{ and } \, \sum_{i=1}^{k-1} h_{i} \in 2\mathbb{N}_0,\]
and
\[Z_k(N(x,T_{k-1}),s) = f^{\mathbf{z}_k}_{X_k}(F_{k-1}(N(x,T_{k-1}),\mathbf{z}_k,s)) = f^{\mathbf{z}_k}_{X_k}(1-x), \,\, \text{ for } \, x\in [0,1]\text{ and } \,  \sum_{i=1}^{k-1} h_{i} \in 2\mathbb{N}_0+1,\]
with $|T_{k-1}| = \frac{1}{2^{s(k-1)}} \, p_{\mathbf{X}}(\mathbf{x}_{[1:(k-1)]} \in c_{\mathbf{z}_{k}})$. Fix $\mathbf{h}_{k+1} = (h_1, h_2, \dots, h_k)\in [0\!:\!(2^s-1)]^{k}$ and $\mathbf{z}_{k+1} = (z_1, z_2, \dots, z_k) \in [0\!:\!(n-1)]^{k}$. 
Consider $Z_k(N(x,T_{k-1}),s) = f^{\mathbf{z}_k}_{X_k}(F_{k-1}(N(x,T_{k-1}),\mathbf{z}_k,s))$ on the interval 
\[P^{\mathbf{z}}_{k} = P^{\mathbf{z}_k}_{z_k} \diamond \Delta_{h_k} = \Bigg[ \frac{1}{n} \sum^{z_k-1}_{j=0} w^{\mathbf{z}_k}_{j} + \frac{h_k w^{\mathbf{z}_k}_{z_k}}{2^sn}, \frac{1}{n} \sum^{z_k-1}_{j=0} w^{\mathbf{z}_k}_{j} + \frac{(h_k+1) w^{\mathbf{z}_k}_{z_k}}{2^sn} \Bigg].\] 
We first note that \[T_k = \begin{cases}
    T_{k-1} \diamond P^{\mathbf{z}}_{k}, &\text{if }  \sum_{i=1}^{k-1} h_{i} \in 2\mathbb{N}_{0}\\
    T_{k-1} \diamond S(P^{\mathbf{z}}_{k}), &\text{if }  \sum_{i=1}^{k-1} h_{i} \in 2\mathbb{N}_{0}+1
\end{cases},\]
and 
\begin{align*}
|T_k| &= \begin{cases}
    |T_{k-1}| |P^{\mathbf{z}}_{k}|, &\text{if }  \sum_{i=1}^{k-1} h_{i} \in 2\mathbb{N}_{0}\\
    |T_{k-1}| |S(P^{\mathbf{z}}_{k})|, &\text{if }  \sum_{i=1}^{k-1} h_{i} \in 2\mathbb{N}_{0}+1
\end{cases} \\&= |T_{k-1}| \frac{ w^{\mathbf{z}_k}_{z_k}}{2^sn} \\&= \frac{1}{2^{s(k-1)}} \, p_{\mathbf{X}}(\mathbf{x}_{[1:(k-1)]} \in c_{\mathbf{z}_{k}}) \, \frac{1}{2^s} p_{X_k|X_1,\dots,X_{k-1}}\big(x_k \in c_{z_k}| \mathbf{x}_{[1:(k-1)]} \in c_{\mathbf{z}_k}\big) \\&= \frac{1}{2^{sk}} \, p_{\mathbf{X}}(\mathbf{x}_{[1:k]} \in c_{\mathbf{z}_{k+1}}).\end{align*}
We first provide the proof for the case $ \sum_{i=1}^{k-1} h_{i} \in 2\mathbb{N}_0$. By Corollary~\ref{1d_theorem_hist}, $f^{\mathbf{z}_k}_{X_k}(x)$ is linear on $P^{\mathbf{z}_k}_{z_k}$ with slope $1/{w^{\mathbf{z}_k}_{z_k}}$ and boundary points $f^{\mathbf{z}_k}_{X_k}(\frac{1}{n} \sum^{z_k-1}_{j=0} w^{\mathbf{z}_k}_{j}) = z_k/n$ and $f^{\mathbf{z}_k}_{X_k}(\frac{1}{n} \sum^{z_k}_{j=0} w^{\mathbf{z}_k}_{j}) =  (z_k+1)/n$. The explicit form of $f^{\mathbf{z}_k}_{X_k}$ on $P^{\mathbf{z}_k}_{z_k}$ follows from the remark after Corollary~\ref{1d_theorem_hist} as
\[f^{\mathbf{z}_k}_{X_k}(x) = \frac{x}{w^{\mathbf{z}_k}_{z_k}} - \frac{\sum_{j=0}^{z_k-1}w^{\mathbf{z}_k}_{j}}{nw^{\mathbf{z}_k}_{z_k}} + \frac{z_k}{n}.\]
Using $N(x,T_k)=N(N(x,P^{\mathbf{z}}_{k}),T_{k-1})$, which is thanks to (\ref{N_property}), in the induction assumption (for $ \sum_{i=1}^{k-1} h_{i} \in 2\mathbb{N}_0$), we get
\begin{equation}
    \label{ZoomZ}
    \begin{aligned}
        f^{\mathbf{z}_k}_{X_k}(F_{k-1}(N(x,T_k),\mathbf{z}_k,s)) =
        f^{\mathbf{z}_k}_{X_k}(F_{k-1}(N(N(x,P^{\mathbf{z}}_{k}),T_{k-1}),\mathbf{z}_k,s)) \\
        = f^{\mathbf{z}_k}_{X_k}(N(x,P^{\mathbf{z}}_{k})) = \frac{x+h_k}{2^sn} + \frac{z_k}{n}, \quad x \in [0,1].
    \end{aligned}
\end{equation}
Next, for $x \in [0,1]$, it follows from (\ref{eq:recursive-def-F}) and (\ref{ZoomZ}) that
\begin{equation}
    \begin{aligned}
        f^{\mathbf{z}_{k+1}}_{X_{k+1}}\big(F_{k}(N(x,T_k),\mathbf{z}_{k+1},s)\big) &=
        f^{\mathbf{z}_{k+1}}_{X_{k+1}}(g_s(nf^{\mathbf{z}_k}_{X_k}(F_{k-1}(N(x,T_k),\mathbf{z}_k,s)) - z_{k}))\\ 
        &=f^{\mathbf{z}_{k+1}}_{X_{k+1}}(g_s((x+h_k)2^{-s})) \\
         &\overset{(a)}{=} \sum_{j=0}^{2^{s-1}-1} f^{\mathbf{z}_{k+1}}_{X_{k+1}}(g(2^{s-1}(x+h_k)2^{-s}-j))\\
         &= \sum_{j=0}^{2^{s-1}-1} f^{\mathbf{z}_{k+1}}_{X_{k+1}}(g(x/2+h_k/2-j))\\
 &\overset{(b)}{=} f^{\mathbf{z}_{k+1}}_{X_{k+1}}(g(x/2+h_k/2-\lfloor h_k/2 \rfloor))\\
 &\overset{(c)}{=} \begin{cases}
 f^{\mathbf{z}_{k+1}}_{X_{k+1}}(x), &\text{ if }   h_k \in 2\mathbb{N}_0,\\
 f^{\mathbf{z}_{k+1}}_{X_{k+1}}(1-x), &\text{ if }  h_k \in 2\mathbb{N}_0+1,
\end{cases}\\
&\overset{(d)}{=} \begin{cases}
 f^{\mathbf{z}_{k+1}}_{X_{k+1}}(x), &\text{ if }  \sum_{i=1}^{k} h_{i} \in 2\mathbb{N}_0,\\
 f^{\mathbf{z}_{k+1}}_{X_{k+1}}(1-x), &\text{ if } \sum_{i=1}^{k} h_{i} \in 2\mathbb{N}_0+1,
\end{cases}
    \end{aligned} \label{eq:rel-f-F-inductive}
\end{equation}
where we used Lemma \ref{p_saw} in (a), the fact that $g(x/2+h_k/2-j) = 0$, for all $x \in [0,1]$, for $j \neq \lfloor h_k/2 \rfloor$ in (b), $h_k/2-\lfloor h_k/2 \rfloor = 0$ for $h_k\in 2\mathbb{N}_0$ and $h_k/2-\lfloor h_k/2 \rfloor = 1/2$ for $h_k \in 2\mathbb{N}_0+1$ along with $g(x)=g(1-x)$, for $x\in[0,1]$, in (c), and $\sum_{i=1}^{k-1} h_{i} \in 2\mathbb{N}_0$ in (d).
Finally, as by Corollary~\ref{1d_theorem_hist}, $f^{\mathbf{z}_{k+1}}_{X_{k+1}}(x)>0$, for all $x\in(0,1]$, it follows from (\ref{eq:rel-f-F-inductive}) that
$f^{\mathbf{z}_{k+1}}_{X_{k+1}}\big(F_{k}(N(x,T_k),\mathbf{z}_{k+1},s)\big) > 0$, for all $x \in (0,1]$ for $\sum_{i=1}^{k} h_{i} \in 2\mathbb{N}_0$, and for all $x \in [0,1)$ for $\sum_{i=1}^{k} h_{i} \in 2\mathbb{N}_0+1$.
Application of Lemma \ref{separation_property} then yields
\begin{equation*}
\label{final_eq}
    Z_{k+1}(N(x,T_k),s) = f^{\mathbf{z}_{k+1}}_{X_{k+1}}\big(F_{k}(N(x,T_k),\mathbf{z}_{k+1},s)\big),
\end{equation*}
for all $x \in (0,1]$ for $\sum_{i=1}^{k} h_{i} \in 2\mathbb{N}_0$, and for all $x \in [0,1)$ for $\sum_{i=1}^{k} h_{i} \in 2\mathbb{N}_0+1$. The boundary cases i) $x=0$ and $\sum_{i=1}^{k} h_{i} \in 2\mathbb{N}_0$ and ii) $x=1$ and $\sum_{i=1}^{k} h_{i} \in 2\mathbb{N}_0 +1$ follow along the same lines as in the base case upon noting that $F_{k}(N(0,T_k),\mathbf{z}_{k+1},s)=g_s(h_k/2^s)$ and $F_{k}(N(1,T_k),\mathbf{z}_{k+1},s)=g_s((h_k+1)/2^s)$.

We proceed to the proof for the case $\sum_{i=1}^{k-1} h_{i} \in 2\mathbb{N}_0+1$.
Using (\ref{inverse_N}) and $N(x,T_k)=N(N(x,S(P^{\mathbf{z}}_{k})),T_{k-1})$, which is thanks to (\ref{N_property}), in the induction assumption (for $\sum_{i=1}^{k-1} h_{i} \in 2\mathbb{N}_0+1$), we get
\begin{equation}
    \label{ZoomZ2}
    \begin{aligned}
        f^{\mathbf{z}_k}_{X_k}(F_{k-1}(N(x,T_k),\mathbf{z}_k,s)) &=
        f^{\mathbf{z}_k}_{X_k}(F_{k-1}(N(N(x,S(P^{\mathbf{z}}_{k})),T_{k-1}),\mathbf{z}_k,s)) \\
        &= f^{\mathbf{z}_k}_{X_k}(1-N(x,S(P^{\mathbf{z}}_{k}))) \\&= f^{\mathbf{z}_k}_{X_k}(N^{-}(x,P^{\mathbf{z}}_{k})) \\&= \frac{h_k+1-x}{2^sn} + \frac{z_k}{n}, \quad x \in [0,1].
    \end{aligned}
\end{equation}
Next, for $x \in [0,1]$, it follows from (\ref{eq:recursive-def-F}) and (\ref{ZoomZ2}) that
\begin{equation}
    \begin{aligned}
f^{\mathbf{z}_{k+1}}_{X_{k+1}}\big(F_{k}(N(x,T_k),\mathbf{z}_{k+1},s)\big) &=
f^{\mathbf{z}_{k+1}}_{X_{k+1}}(g_s(nf^{\mathbf{z}_k}_{X_k}(F_{k-1}(N(x,T_k),\mathbf{z}_k,s)) - z_{k}))\\ 
&=f^{\mathbf{z}_{k+1}}_{X_{k+1}}(g_s((h_k+1-x)2^{-s})) \\
&\overset{(a)}{=} \sum_{j=0}^{2^{s-1}-1} f^{\mathbf{z}_{k+1}}_{X_{k+1}}(g(2^{s-1}(h_k+1-x)2^{-s}-j))\\
&= \sum_{j=0}^{2^{s-1}-1} f^{\mathbf{z}_{k+1}}_{X_{k+1}}(g(h_k/2+1/2-x/2-j))\\
&\overset{(b)}{=} f^{\mathbf{z}_{k+1}}_{X_{k+1}}(g(h_k/2+1/2-x/2-\lfloor h_k/2 \rfloor))\\
&\overset{(c)}{=} \begin{cases}
f^{\mathbf{z}_{k+1}}_{X_{k+1}}(1-x), &\text{ if } h_k \in 2\mathbb{N}_0,\\
f^{\mathbf{z}_{k+1}}_{X_{k+1}}(x), &\text{ if } h_k \in 2\mathbb{N}_0+1,
\end{cases}
\\
&\overset{(d)}{=} \begin{cases}
f^{\mathbf{z}_{k+1}}_{X_{k+1}}(x), &\text{ if } \sum_{i=1}^k h_i \in 2\mathbb{N}_0,\\
f^{\mathbf{z}_{k+1}}_{X_{k+1}}(1-x), &\text{ if } \sum_{i=1}^k h_i \in 2\mathbb{N}_0+1,
\end{cases}
    \end{aligned} \label{eq:rel-f-F-inductive2}
\end{equation}
where we used Lemma \ref{p_saw} in (a), the fact that $g(h_k/2+1/2-x/2-j) = 0$, for all $x \in [0,1]$, for $j \neq \lfloor h_k/2 \rfloor$ in (b), $h_k/2-\lfloor h_k/2 \rfloor = 0$ for $h_k\in 2\mathbb{N}_0$ and $h_k/2-\lfloor h_k/2 \rfloor = 1/2$ for $h_k \in 2\mathbb{N}_0+1$ along with $g(x)=g(1-x)$, for $x\in[0,1]$, in (c), and $\sum_{i=1}^{k-1} h_i \in 2\N_0+1$ in (d).
 
Finally, as by Corollary~\ref{1d_theorem_hist}, $f^{\mathbf{z}_{k+1}}_{X_{k+1}}(x)>0$, for all $x\in(0,1]$, it follows from (\ref{eq:rel-f-F-inductive2}) that
$f^{\mathbf{z}_{k+1}}_{X_{k+1}}\big(F_{k}(N(x,T_k),\mathbf{z}_{k+1},s)\big) > 0$, for all $x \in (0,1]$ for $\sum_{i=1}^k h_i \in 2\mathbb{N}_0$, and for all $x \in [0,1)$ for $\sum_{i=1}^k h_i \in 2\mathbb{N}_0+1$.
Application of Lemma \ref{separation_property} then yields
\begin{equation*}
\label{final_eq2}
    Z_{k+1}(N(x,T_k),s) = f^{\mathbf{z}_{k+1}}_{X_{k+1}}\big(F_{k}(N(x,T_k),\mathbf{z}_{k+1},s)\big),
\end{equation*}
for all $x \in (0,1]$ for $\sum_{i=1}^k h_i \in 2\mathbb{N}_0$, and for all $x \in [0,1)$ for $\sum_{i=1}^k h_i \in 2\mathbb{N}_0+1$. The boundary cases i) $x=0$ and $\sum_{i=1}^k h_i \in 2\mathbb{N}_0$ and ii) $x=1$ and $\sum_{i=1}^k h_i \in 2\mathbb{N}_0 +1$ follow along the same lines as in the base case upon noting that $F_{k}(N(0,T_k),\mathbf{z}_{k+1},s)=g_s((h_k+1)/2^s)$ and $F_{k}(N(1,T_k),\mathbf{z}_{k+1},s)=g_s(h_k/2^s)$.

This concludes the proof of the induction step and thereby the overall proof.
\end{proof}

We continue with a corollary to Lemma~\ref{general_lemma} complementing the results on $|T_k|,k\in[1\!:\!(d-1)]$, by the corresponding expression for $|T_d|$ and specifying the range of the $Z_r$-functions on the domain $T_d$.

\begin{corollary}
\label{main_consequence}
Let $\mathbf{z} = (z_1,z_2,\dots,z_d) \in [0\!:\!(n-1)]^d$ and $\mathbf{z}_i = \mathbf{z}_{[1:(i-1)]}$. Fix $p_{\mathbf{X}}(\mathbf{x}) \in\mathcal{E}[0,1]_n^d$, and for all $\mathbf{h}_d = (h_1, h_2, \dots, h_d) \in [0\!:\!(2^s-1)]^{d}$,
let $T_k, k \in [1\!:\!d]$, be defined as in Lemma \ref{general_lemma}.
Then, it holds that $|T_d| = \frac{1}{2^{sd}} \, p_{\mathbf{X}}(\mathbf{x} \in c_{\mathbf{z}})$. Moreover, for every $k \in [1\!:\!d]$, for all $x \in T_d$, $Z_k(x,s) \in \bigg[\frac{z_k}{n}+\frac{h_k}{2^sn},\frac{z_k}{n}+\frac{h_k+1}{2^sn}\bigg]$.
\end{corollary}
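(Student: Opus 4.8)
The plan is to read off both assertions directly from Lemma~\ref{general_lemma}, using only the bookkeeping properties of the operations $\diamond$, $N$, $N^{-}$, $S$ together with the chain rule for histogram distributions; no mechanism beyond Lemma~\ref{general_lemma} is required. For the cardinality claim, I would observe that the chain of equalities used in the induction step of the proof of Lemma~\ref{general_lemma} to compute $|T_k|$ applies verbatim with $k=d$: from $T_d = T_{d-1}\diamond P^{\mathbf{z}}_d$ (or $T_d = T_{d-1}\diamond S(P^{\mathbf{z}}_d)$) and the cardinality-preservation of $S$ one gets $|T_d| = |T_{d-1}|\,|P^{\mathbf{z}}_d| = |T_{d-1}|\,|P^{\mathbf{z}_d}_{z_d}|\,|\Delta_{h_d}| = |T_{d-1}|\,\frac{w^{\mathbf{z}_d}_{z_d}}{2^s n}$. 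Substituting $|T_{d-1}| = 2^{-s(d-1)}\,p_{\mathbf{X}}(\mathbf{x}_{[1:(d-1)]}\in c_{\mathbf{z}_d})$ from Lemma~\ref{general_lemma}, recognizing $\frac{w^{\mathbf{z}_d}_{z_d}}{n} = p^{\mathbf{z}_d}_{X_d}(x_d\in c_{z_d}) = p_{X_d|X_1,\dots,X_{d-1}}(x_d\in c_{z_d}|\,\mathbf{x}_{[1:(d-1)]}\in c_{\mathbf{z}_d})$, and applying the chain rule $p_{\mathbf{X}}(\mathbf{x}_{[1:(d-1)]}\in c_{\mathbf{z}_d})\,p_{X_d|X_1,\dots,X_{d-1}}(x_d\in c_{z_d}|\,\mathbf{x}_{[1:(d-1)]}\in c_{\mathbf{z}_d}) = p_{\mathbf{X}}(\mathbf{x}\in c_{\mathbf{z}})$ then yields $|T_d| = 2^{-sd}\,p_{\mathbf{X}}(\mathbf{x}\in c_{\mathbf{z}})$.

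For the range of $Z_k$ on $T_d$, the structural observation is that, since $\diamond$ is associative, unrolling the recursion defining $T_i$, $i\in[k\!:\!d]$, gives $T_d = T_{k-1}\diamond Q$ with $Q = Q_k\diamond Q_{k+1}\diamond\cdots\diamond Q_d$, where $Q_i\in\{P^{\mathbf{z}}_i, S(P^{\mathbf{z}}_i)\}$ according to the parity of $\sum_{\ell<i}h_\ell$; in particular $Q\subseteq Q_k$ because $A\diamond B\subseteq A$ for every $B\subseteq[0,1]$. Fix $k\in[2\!:\!d]$ and $x\in T_d$. Writing $x = N(y,T_d) = N(N(y,Q),T_{k-1})$ for some $y\in[0,1]$ (by (\ref{N_property})) and setting $y' := N(y,Q)\in Q\subseteq Q_k$, Lemma~\ref{general_lemma} gives $Z_k(x,s) = f^{\mathbf{z}_k}_{X_k}(y')$ when $\sum_{\ell<k}h_\ell$ is even (in which case $Q_k = P^{\mathbf{z}}_k$, so $y'\in P^{\mathbf{z}}_k$), and $Z_k(x,s) = f^{\mathbf{z}_k}_{X_k}(1-y')$ when $\sum_{\ell<k}h_\ell$ is odd (in which case $Q_k = S(P^{\mathbf{z}}_k)$, so $1-y'\in 1 - S(P^{\mathbf{z}}_k) = P^{\mathbf{z}}_k$, using $S([a,b]) = [1-b,1-a]$). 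Thus in either case the argument of $f^{\mathbf{z}_k}_{X_k}$ lies in $P^{\mathbf{z}}_k = P^{\mathbf{z}_k}_{z_k}\diamond\Delta_{h_k}$. Since by Corollary~\ref{1d_theorem_hist} (and the boundary values recorded in the proof of Lemma~\ref{general_lemma}) $f^{\mathbf{z}_k}_{X_k}$ is affine and strictly increasing on $P^{\mathbf{z}_k}_{z_k}$ with $f^{\mathbf{z}_k}_{X_k}(P^{\mathbf{z}_k}_{z_k}) = [z_k/n,(z_k+1)/n]$, it carries the subinterval $P^{\mathbf{z}_k}_{z_k}\diamond\Delta_{h_k}$ onto $[z_k/n,(z_k+1)/n]\diamond\Delta_{h_k} = \big[\frac{z_k}{n}+\frac{h_k}{2^s n},\frac{z_k}{n}+\frac{h_k+1}{2^s n}\big]$, which is the asserted interval. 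The case $k=1$ is the same but simpler: $Z_1(x,s) = f^{\mathbf{z}_1}_{X_1}(x)$ by Definition~\ref{Z-functions}, $T_d\subseteq T_1 = [0,1]\diamond P^{\mathbf{z}}_1 = P^{\mathbf{z}_1}_{z_1}\diamond\Delta_{h_1}\subseteq P^{\mathbf{z}_1}_{z_1}$, and the same monotone-affine argument for $f^{\mathbf{z}_1}_{X_1}$ gives $Z_1(x,s)\in\big[\frac{z_1}{n}+\frac{h_1}{2^s n},\frac{z_1}{n}+\frac{h_1+1}{2^s n}\big]$.

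The main obstacle, such as it is, is purely bookkeeping rather than conceptual: one must verify that the parity condition that dictates whether $Q_k$ equals $P^{\mathbf{z}}_k$ or $S(P^{\mathbf{z}}_k)$ is precisely the condition under which Lemma~\ref{general_lemma} returns $f^{\mathbf{z}_k}_{X_k}(y')$ versus $f^{\mathbf{z}_k}_{X_k}(1-y')$, so that in both parities the argument fed to $f^{\mathbf{z}_k}_{X_k}$ collapses to the single interval $P^{\mathbf{z}}_k$ and the image is the same; and one must record the elementary identities $1 - S([a,b]) = [a,b]$, $A\diamond B\subseteq A$, and the fact that an increasing affine map sends $A\diamond B$ to $(\text{its image of }A)\diamond B$. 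Beyond these routine checks there is no substantive difficulty, and the argument contributes nothing new over Lemma~\ref{general_lemma} itself.
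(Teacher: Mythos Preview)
Your proposal is correct and follows essentially the same approach as the paper. The cardinality claim $|T_d|=2^{-sd}p_{\mathbf{X}}(\mathbf{x}\in c_{\mathbf{z}})$ is handled identically: take $|T_{d-1}|$ from Lemma~\ref{general_lemma}, use $|S(\cdot)|=|\cdot|$ to get $|T_d|=|T_{d-1}|\,w^{\mathbf{z}_d}_{z_d}/(2^s n)$, and apply the chain rule. For the range of $Z_k$, the paper first establishes $Z_k(x,s)\in\big[\tfrac{z_k}{n}+\tfrac{h_k}{2^s n},\tfrac{z_k}{n}+\tfrac{h_k+1}{2^s n}\big]$ for $x\in T_k$ (via the explicit formulas (\ref{ZoomZ}) and (\ref{ZoomZ2})) and then invokes $T_d\subseteq T_k$, whereas you unroll the recursion to $T_d=T_{k-1}\diamond Q$ with $Q\subseteq Q_k$ and argue directly on $T_d$; this is a cosmetic repackaging of the same parity-matching and affine-image argument, and the bookkeeping identities you record ($A\diamond B\subseteq A$, $1-S([a,b])=[a,b]$, affine maps respect $\diamond$) are exactly what the paper uses implicitly.
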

\begin{proof}
We first prove the statement on $|T_d|$ and start by noting that, owing to Lemma \ref{general_lemma}, \[|T_{d-1}| = \frac{1}{2^{s(d-1)}} \, p_{\mathbf{X}}(\mathbf{x}_{[1:(d-1)]} \in c_{\mathbf{z}_{d}}).\] 
With 
\[P^{\mathbf{z}_d,h_d}_{z_d} = P^{\mathbf{z}_d}_{z_d} \diamond \Delta_{h_d} = \Bigg[ \frac{1}{n} \sum^{z_d-1}_{j=0} w^{\mathbf{z}_d}_{j} + \frac{h_d w^{\mathbf{z}_d}_{z_d}}{2^sn}, \frac{1}{n} \sum^{z_d-1}_{j=0} w^{\mathbf{z}_d}_{j} + \frac{(h_d+1) w^{\mathbf{z}_d}_{z_d}}{2^sn} \Bigg]\]
and $|P^{\mathbf{z}_d,h_d}_{z_d}|=|S(P^{\mathbf{z}_d,h_d}_{z_d})|$,
we get 
\begin{equation*}
\begin{aligned}
|T_d| &= \begin{cases}
    |T_{d-1}| |P^{\mathbf{z}_d,h_d}_{z_d}|, &\text{if } \, \sum_{i=1}^{d-1}h_{i}  \in 2\mathbb{N}_{0}\\
    |T_{d-1}| |S(P^{\mathbf{z}_d,h_d}_{z_d})|, &\text{if } \, \sum_{i=1}^{d-1}h_{i}  \in 2\mathbb{N}_{0}+1
\end{cases}\\&= \frac{1}{2^{s(d-1)}} \, p_{\mathbf{X}}(\mathbf{x}_{[1:(d-1)]} \in c_{\mathbf{z}_{d}}) \, \frac{ w^{\mathbf{z}_d}_{z_d}}{2^sn}\\ &= \frac{1}{2^{s(d-1)}} \, p_{\mathbf{X}}(\mathbf{x}_{[1:(d-1)]} \in c_{\mathbf{z}_{d}}) \, \frac{1}{2^s} p_{X_d|X_1,\dots,X_{d-1}}\big(x_d \in c_{z_d}| \mathbf{x}_{[1:(d-1)]} \in c_{\mathbf{z}_d}\big) \\&=  \frac{1}{2^{sd}}\, p_{\mathbf{X}}(\mathbf{x} \in c_{\mathbf{z}}).
\end{aligned}
\end{equation*}
This establishes the first statement.

To prove the second statement, we first note that, for $k=1$, by (\ref{Z1_localization}), $Z_1(x,s) \in \bigg[\frac{z_1}{n}+\frac{h_1}{2^sn},\frac{z_1}{n}+\frac{h_1+1}{2^sn}\bigg]$, for all $x \in T_{1}$. Next, for every $k \in [2\!:\!d]$, for all $T_{k-1}$, by Lemma \ref{general_lemma}, it holds that
\[Z_k(N(x,T_{k-1}),s) = \begin{cases}
    f^{\mathbf{z}_k}_{X_k}(x), &  \sum_{i=1}^{k-1} h_{i} \in 2\mathbb{N}_{0}\\
    f^{\mathbf{z}_k}_{X_k}(1-x), & \sum_{i=1}^{k-1} h_{i} \in 2\mathbb{N}_{0}+1
\end{cases},
\]
for all $x\in[0,1]$.
Now, arbitrarily fix $z_k \in [0\!:\!(n-1)], h_k \in [0\!:\!(2^s-1)]$ and consider \[T_k = \begin{cases}
    T_{k-1} \diamond P^{\mathbf{z}}_{k}, &\text{if } \sum_{i=1}^{k-1} h_{i} \in 2\mathbb{N}_{0}\\
    T_{k-1} \diamond S(P^{\mathbf{z}}_{k}), &\text{if } \sum_{i=1}^{k-1} h_{i} \in 2\mathbb{N}_{0}+1
\end{cases}\] with 
$P^{\mathbf{z}}_{k}=P^{\mathbf{z}_{k},h_{k}}_{z_k}$. With (\ref{N_property}), this yields, for all $x\in[0,1]$,
\[Z_k(N(x,T_{k}),s) =  \begin{cases}
    Z_k(N(N(x,P^{\mathbf{z}}_{k}),T_{k-1}),s) = f^{\mathbf{z}_k}_{X_k}(N(x,P^{\mathbf{z}}_{k})), & \sum_{i=1}^{k-1} h_{i} \in 2\mathbb{N}_{0}\\
    Z_k(N(N(x,S(P^{\mathbf{z}}_{k})),T_{k-1}),s) = f^{\mathbf{z}_k}_{X_k}(1-N(x,S(P^{\mathbf{z}}_{k}))), & \sum_{i=1}^{k-1} h_{i} \in 2\mathbb{N}_{0}+1
\end{cases}.
\]
Now, by (\ref{ZoomZ}), it follows for $\sum_{i=1}^{k-1} h_{i} \in 2\N_0$ that
\[ f^{\mathbf{z}_k}_{X_k}(N(x,P^{\mathbf{z}}_{k})) = \frac{x+h_k}{2^sn} + \frac{z_k}{n} \in \bigg[\frac{z_k}{n}+\frac{h_k}{2^sn},\frac{z_k}{n}+\frac{h_k+1}{2^sn}\bigg], \mbox{ for } x \in [0,1],\]
and analogously, for $\sum_{i=1}^{k-1} h_{i} \in 2\N_0+1$,
by (\ref{ZoomZ2}),
\[ f^{\mathbf{z}_k}_{X_k}(1-N(x,S(P^{\mathbf{z}}_{k}))) = f^{\mathbf{z}_k}_{X_k}(N^{-}(x,P^{\mathbf{z}}_{k})) = \frac{h_k+1-x}{2^sn} + \frac{z_k}{n} \in \bigg[\frac{z_k}{n}+\frac{h_k}{2^sn},\frac{z_k}{n}+\frac{h_k+1}{2^sn}\bigg], \mbox{ for } x \in [0,1].\]
We have hence shown that, for all $k\in[1\!:\!d]$, $Z_k(x,s) \in \bigg[\frac{z_k}{n}+\frac{h_k}{2^sn},\frac{z_k}{n}+\frac{h_k+1}{2^sn}\bigg]$, for all $x \in T_{k}$. The proof is completed upon noting that $T_d \subseteq T_k$, for all $k \in [1\!:\!d]$.
\end{proof}

We are now ready to state the main result of this section, namely that the piecewise linear map 
$$
M: x \rightarrow (Z_1(x,s), Z_2(x,s), \dots, Z_d(x,s)) 
$$
transports a $1$-dimensional uniform distribution in a space-filling manner to an arbitrarily close approximation of any high-dimensional histogram distribution.
\begin{theorem}
\label{general_theorem}
For every distribution $p_{\mathbf{X}}(\mathbf{x})\,\in\,\mathcal{E}[0,1]_n^d$, the corresponding transport map
\begin{equation}
    \label{general_map}
    M: x \rightarrow (Z_1(x,s), Z_2(x,s), \dots, Z_d(x,s)) 
    \end{equation}
satisfies
\[
    W(M \#U, p_{\mathbf{X}} ) \leq \frac{\sqrt{d}}{n2^s}.
\]
\end{theorem}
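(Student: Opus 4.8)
The plan is to read the bound off an explicit coupling assembled from the interval data $T_d(\mathbf{z},\mathbf{h}_d)$ supplied by Lemma~\ref{general_lemma} and Corollary~\ref{main_consequence}. Throughout, $\mathbf{z}=(z_1,\dots,z_d)$ ranges over $[0\!:\!(n-1)]^d$ and $\mathbf{h}_d=(h_1,\dots,h_d)$ over $[0\!:\!(2^s-1)]^d$, and I write $B(\mathbf{z},\mathbf{h}_d):=\big[\tfrac{z_1}{n}+\tfrac{h_1}{n2^s},\tfrac{z_1}{n}+\tfrac{h_1+1}{n2^s}\big]\times\dots\times\big[\tfrac{z_d}{n}+\tfrac{h_d}{n2^s},\tfrac{z_d}{n}+\tfrac{h_d+1}{n2^s}\big]$ for the corresponding grid cube of side $1/(n2^s)$ sitting inside $c_{\mathbf{z}}$; these cubes clearly tile $[0,1]^d$ up to a set of measure zero.

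The first step, and the only genuinely delicate one, is to show that the intervals $T_d(\mathbf{z},\mathbf{h}_d)$ partition $[0,1]$ up to a null set. I would argue by induction on the recursion level in Lemma~\ref{general_lemma}, using that $T_i$ depends only on $(z_1,h_1,\dots,z_i,h_i)$ and is obtained from $T_{i-1}$ as $T_{i-1}\diamond P^{\mathbf{z}}_i$ or $T_{i-1}\diamond S(P^{\mathbf{z}}_i)$, the choice being dictated by the parity of $h_1+\dots+h_{i-1}$. The factor $P^{\mathbf{z}}_i=P^{\mathbf{z}_i,h_i}_{z_i}=P^{\mathbf{z}_i}_{z_i}\diamond\Delta_{h_i}$ runs, as $(z_i,h_i)$ varies, over a tiling of $[0,1]$: the $P^{\mathbf{z}_i}_{z_i}$ tile $[0,1]$ because $\sum_{k=0}^{n-1}w^{\mathbf{z}_i}_k=n$, the $\Delta_{h_i}$ tile $[0,1]$, and $A\diamond(\cdot)$ carries a tiling of $[0,1]$ to a tiling of $A$ (being an affine bijection $[0,1]\to A$). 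Since $S$ also maps a tiling of $[0,1]$ to a tiling of $[0,1]$, it follows that, for fixed $(z_1,h_1,\dots,z_{i-1},h_{i-1})$, the $T_i$ tile $T_{i-1}$; chaining these facts gives the partition of $[0,1]$ by the $T_d$, and two tuples that first disagree in their $i$-th pair produce distinct tiles of the common $T_{i-1}$, hence disjoint $T_d$'s.

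Next I would set, for each pair $(\mathbf{z},\mathbf{h}_d)$, $\mu_{\mathbf{z},\mathbf{h}_d}:=M\#\big(U|_{T_d(\mathbf{z},\mathbf{h}_d)}\big)$ and $\nu_{\mathbf{z},\mathbf{h}_d}:=p_{\mathbf{X}}|_{B(\mathbf{z},\mathbf{h}_d)}$, and check that they have equal total mass. By Corollary~\ref{main_consequence}, $M$ sends $T_d(\mathbf{z},\mathbf{h}_d)$ into $B(\mathbf{z},\mathbf{h}_d)$, so $\mu_{\mathbf{z},\mathbf{h}_d}$ is supported in $B(\mathbf{z},\mathbf{h}_d)$ and has mass $|T_d(\mathbf{z},\mathbf{h}_d)|=2^{-sd}\,p_{\mathbf{X}}(\mathbf{x}\in c_{\mathbf{z}})$, again by Corollary~\ref{main_consequence}; on the other hand $p_{\mathbf{X}}$ has constant density $w_{\mathbf{z}}$ on $c_{\mathbf{z}}\supseteq B(\mathbf{z},\mathbf{h}_d)$, so the mass of $\nu_{\mathbf{z},\mathbf{h}_d}$ is $w_{\mathbf{z}}(n2^s)^{-d}=2^{-sd}\,p_{\mathbf{X}}(\mathbf{x}\in c_{\mathbf{z}})$ by Definition~\ref{d_dim_dist}. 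With the masses matched I form the product sub-coupling $\pi_{\mathbf{z},\mathbf{h}_d}:=|T_d(\mathbf{z},\mathbf{h}_d)|^{-1}\,\mu_{\mathbf{z},\mathbf{h}_d}\otimes\nu_{\mathbf{z},\mathbf{h}_d}$ and let $\pi:=\sum_{\mathbf{z},\mathbf{h}_d}\pi_{\mathbf{z},\mathbf{h}_d}$. Since the $T_d$'s partition $[0,1]$ and the $B$'s partition $[0,1]^d$ (both up to null sets, which neither $U$ nor $p_{\mathbf{X}}$ charges), summing the marginals of the pieces shows that $\pi$ has first marginal $M\#U$ and second marginal $p_{\mathbf{X}}$, i.e., $\pi$ is an admissible coupling.

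Finally, on the support of each $\pi_{\mathbf{z},\mathbf{h}_d}$ both arguments lie in the cube $B(\mathbf{z},\mathbf{h}_d)$, whose Euclidean diameter is $\sqrt d/(n2^s)$, so $\|\mathbf{x}-\mathbf{y}\|\le\sqrt d/(n2^s)$ holds $\pi_{\mathbf{z},\mathbf{h}_d}$-almost surely; integrating against $\pi$ and using $\sum_{\mathbf{z},\mathbf{h}_d}|T_d(\mathbf{z},\mathbf{h}_d)|=1$ yields $W(M\#U,p_{\mathbf{X}})\le\int\|\mathbf{x}-\mathbf{y}\|\,d\pi\le\sqrt d/(n2^s)$. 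I expect essentially all the real work to be in the partition claim of the second paragraph — keeping the $\diamond$/$S$ bookkeeping and the parity conditions of Lemma~\ref{general_lemma} straight — with the remainder being routine arithmetic with masses and diameters.
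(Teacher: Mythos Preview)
Your proposal is correct and follows essentially the same route as the paper: identify the small cubes $B(\mathbf{z},\mathbf{h}_d)$ (the paper's $c^{\mathbf{h}}_{\mathbf{z}}$), invoke Corollary~\ref{main_consequence} to see that $M$ sends $T_d(\mathbf{z},\mathbf{h}_d)$ into $B(\mathbf{z},\mathbf{h}_d)$ with matching mass $2^{-sd}p_{\mathbf{X}}(c_{\mathbf{z}})$, build a coupling supported on $\bigcup_{\mathbf{z},\mathbf{h}_d} B(\mathbf{z},\mathbf{h}_d)\times B(\mathbf{z},\mathbf{h}_d)$, and bound the cost by the cube diameter $\sqrt{d}/(n2^s)$. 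The only difference is that you spell out the partition of $[0,1]$ by the $T_d$'s via induction on the $\diamond$-recursion, whereas the paper sidesteps this: from $M(T_d)\subseteq c^{\mathbf{h}}_{\mathbf{z}}$ one gets $(M\#U)(c^{\mathbf{h}}_{\mathbf{z}})\geq |T_d|$, and since both sides sum to $1$ over the essentially disjoint cubes, equality is forced without ever verifying the partition directly.
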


\begin{proof}
Let $\mathbf{z} = (z_1,z_2,\dots,z_d) \in [0\!:\!(n-1)]^d$, $\Delta_h = \Big[\frac{h}{2^s},\frac{h+1}{2^s} \Big]$ with $h \in [0\!:\!(2^s-1)]$, and $\mathbf{h} = (h_1,h_2,\dots,h_d) \in [0\!:\!(2^s-1)]^d$. With $c_{z_i} = [ \frac{z_i}{n}, \frac{z_i+1}{n} ]$, $i \in [1\!:\!d]$, let
$c^\mathbf{h}_\mathbf{z}= \bigtimes_{i=1}^{d}(c_{z_i}\diamond \Delta_{h_i})$. 
Let $T_d$ be defined as in Lemma \ref{general_lemma}. By Corollary \ref{main_consequence}, 
$M:T_d \rightarrow c^{\mathbf{h}}_{\mathbf{z}}$ and $|T_d| = \frac{1}{2^{sd}}\,p_{\mathbf{X}}(\mathbf{x} \in c_\mathbf{z})$.
We hence get $(M\#U)(\mathbf{x} \in c^{\mathbf{h}}_{\mathbf{z}}) = |T_d| = \frac{1}{2^{sd}}\, p_{\mathbf{X}}(\mathbf{x} \in c_\mathbf{z})$.
This establishes that the map $M$ transports probability mass $\frac{1}{2^{sd}}\, p_{\mathbf{X}}(\mathbf{x} \in c_\mathbf{z})$ to the cube $c^{\mathbf{h}}_\mathbf{z}$ of volume $\big(\frac{1}{n2^{s}}\big)^d$, for all $\mathbf{z}$. As $p_{\mathbf{X}}$ is a histogram distribution, it is uniformly distributed on its constituent cubes $c_{\mathbf{z}}$, which, in turn, implies that the amount of probability mass it exhibits on each subcube $c^{\mathbf{h}}_\mathbf{z}$ of $c_{\mathbf{z}}$ is given by $\frac{1}{2^{sd}}\, p_{\mathbf{X}}(\mathbf{x} \in c_\mathbf{z})$. The map $M$, when pushing forward $U$, therefore transports exactly the right amount of probability mass to each cube $c^{\mathbf{h}}_\mathbf{z}$ for a coupling between $p_{\mathbf{X}}$ and $M\#U$ to exist. Combining this with
$\|\mathbf{x}-\mathbf{y}\|\leq \frac{\sqrt{d}}{n2^{s}}$, for all points $\mathbf{x},\mathbf{y}$ in a $d$-dimensional cube of
side length $n^{-1}2^{-s}$, it follows from Definition~\ref{def:wasserstein-distance} that 
  \[W(M \#U, p_{\mathbf{X}}) \leq \frac{\sqrt{d}}{n2^s}. \hspace{1.15cm}\qedhere\]
\end{proof}

Theorem \ref{general_theorem} was proven in \cite{icml2020} for $d=2$.
We remark that a space-filling approach for increasing distribution dimensionality was first described by Bailey and Telgarsky in \cite{Bailey2019}. 
Specifically, the construction in \cite{Bailey2019} generates uniform target distributions of arbitrary dimension based on the transport map
$M: x\, \rightarrow\, (x,g_s(x),g_{2s}(x),\dots)$. The generalization introduced in this paper is capable of producing arbitrary histogram target distributions through space-filling transport maps that build on several key ideas, the first two of which are best illustrated by revisiting the $2$-dimensional case with corresponding transport map
$M: x \rightarrow (f_{X_1}^{\mathbf{z}_1}(x),\sum_{i=0}^{n-1}f_{X_2}^{(i)}(g_s(nf_{X_1}^{\mathbf{z}_1}(x)-i)))$. First, $M$ in its second component composes the function $\sum_{i=0}^{n-1}f_{X_2}^{(i)}(g_s(nx-i))$ with its first component $f_{X_1}^{\mathbf{z}_1}(x)$. Formally, this idea is also present in 
the Bailey-Telgarsky map, where the second component $g_s(x)$ can be interpreted as a trivial composition of $g_s(\cdot)$ with the first component, $x$.
It is, in fact, this composition idea that leads to the space-filling property. Second, $\sum_{i=0}^{n-1}f_{X_2}^{(i)}(g_s(nf_{X_1}^{\mathbf{z}_1}(x)-i))$ yields localization through squeezing and shifting of the $f_{X_2}^{(i)}$. This idea allows to realize different marginal distributions for different horizontal histogram bins (see the rightmost subplot in Fig.~\ref{fig:map_comb}) and is not present in the Bailey-Telgarsky construction as, owing to the target distribution being uniform, there is no concept of histogram distributions.
Taken together the two ideas just described allow to generate arbitrary 
marginal histogram distributions $p_{X_2|X_1}\big(x_2| x_{1})$, which are then combined---through the chain rule---with the histogram distribution $p_{X_1}(x_1)$ to the overall target histogram distribution $p_{X_1,X_2}(x_1,x_2)$.

A further idea underlying our transport map construction becomes transparent in the general $d$-dimensional case. Specifically, in taking the space-filling idea to higher dimensions, we note that in the Bailey-Telgarsky map
$M: x\, \rightarrow\, (x,g_s(x),g_{2s}(x),\dots)$, the third component, $g_{2s}(x)$ can actually be interpreted as a composition of $g_s(\cdot)$ with the second component $g_s(x)$, simply as $g_s(g_s(x))=g_{2s}(x)$. Likewise, as already noted in the previous paragraph, the second component, $g_s(x)$, is a composition of $g_s(\cdot)$ with the first component, $x$. This insight informs the recursive definition of the $F_r$-functions according to (\ref{eq:recursive-def-F}), which, modulo the shaping by the localized $f_{X_r}^{\mathbf{z}_{r}}$-functions, can be seen to exhibit this $g_s$-composition property as well. The $Z_r(x,s)$-functions constituting the components of our transport map (\ref{general_map}) are then obtained by applying the localization idea as described above for the $2$-dimensional case. There is, however, an important difference between localization in the $2$-dimensional case and in the general $d$-dimensional case. This is best seen by inspecting the $3$-dimensional case illustrated in Figure~\ref{fig:F_functions}. Specifically, whereas in the $2$-dimensional case the $F_r$-functions are contiguously supported (see subplot (f)), in the $3$-dimensional case, as illustrated in subplot (d), the support sets are disjointed, but exhibit a periodic pattern. Going to higher dimensions yields a fractal-like support set picture. We emphasize that this support set structure is a consequence of interlacing the self-compositions of the $g_s$-functions with the localized per-bin histogram-distribution shaping functions $f_{X_r}^{\mathbf{z}_{r}}$.

We finally note that the transport map $M$ in Theorem~\ref{general_theorem} can be interpreted as a transport operator in the sense of optimal transport theory \cite{MAL-073,villani2008optimal}, with the source distribution being $1$-dimensional and the target-distribution $d$-dimensional. What is special here is that the transport operator acts between spaces of different dimensions and does so in a space-filling manner \cite{mccann2019optimal}.

\section{Realization of transport map through quantized networks}

This section is concerned with the realization of the transport map $M$ by ReLU networks. In particular, we shall consider networks with quantized weights, for three reasons. First, in practice network weights can not be stored as real numbers on a computer, but rather have to be encoded with a finite number of bits.
Second, we want to convince ourselves that the space-filling property of the transport map, brittle as it seems, is, in fact, not dependent on the network weights being
real numbers. Third, we will be able to develop a relationship, presented in Section~\ref{sec:complexity}, between the complexity of target distributions and the complexity of the ReLU networks realizing the corresponding transport maps. Specifically, complexity will be quantified through the number of bits needed to encode the distribution and the network, respectively, to within a prescribed accuracy.

We will see that ReLU networks with quantized weights generate histogram distributions with quantized weights, referred to as quantized histogram distributions in the following. In Section~\ref{sec:approximation of arbitrary distributions}, we will then study the approximation of general distributions
by quantized histogram distributions. Finally, in Section~\ref{sec:approximation-arbitrary-sets}, we put everything together and characterize the error incurred when approximating arbitrary target distributions by the transportation of a $1$-dimensional uniform distribution
through a ReLU network with quantized weights.

Before proceeding, we need to define quantized histogram distributions and quantized networks. We start with scalar distributions.
\begin{definition}
\label{quantized_dist_1d}
Let $\delta = 1/A$, for some $A \in \mathbb{N}$. A random variable $X$ is said to have a $\delta$-quantized histogram distribution of resolution $n$ on $[0,1]$, denoted as $X \sim \tilde{\mathcal{E}}_{\delta}[0,1]_n^1$, if its pdf is given by
\[
\begin{aligned}
p(x) &= \sum_{k=0}^{n-1} w_{k} \chi_{[k/n, (k+1)/n]}(x), \quad \sum_{k=0}^{n-1} w_{k}  = n, \\ &w_{k}= \delta m_{k}>0, \ \ m_{k} \in \mathbb{N}, \ \ \text{for all} \ \ k \in [0\!:\!(n-1)].
\end{aligned}
\]
\end{definition}
We extend this definition to random vectors by saying that a random vector has a $\delta$-quantized histogram distribution, if all its conditional ($1$-dimensional) distributions $p^{\mathbf{z}_i}_{X_i}$ are $\delta$-quantized histogram distributions.

\begin{definition}
\label{quantized_dist}
Let $\delta = 1/A$, for some $A \in \mathbb{N}$. A random vector $\mathbf{X} = (X_1, X_2, \dots, X_d)^\top$ is said to have a $\delta$-quantized histogram distribution of resolution $n$ on the $d$-dimensional unit cube, denoted as $\mathbf{X} \sim \tilde{\mathcal{E}}_{\delta}[0,1]_n^d$, if  $\mathbf{X} \sim \mathcal{E}[0,1]_n^d$ with $p^{\mathbf{z}_i}_{X_i} \in \tilde{\mathcal{E}}_{\delta}[0,1]_n^1$, for every $i \in [1\!:\!d]$, for all $\mathbf{z}_i$.
\end{definition}

We continue with the definition of quantized ReLU networks.

\begin{definition}
\label{quantized_network}
For $\delta > 0$, we say that a ReLU network is $\delta$-quantized if each of its weights is of one of the following two types. A weight $w$ is of Type $1$ if $w \in (\delta \Z \cap [-1/\delta,1/\delta])$ and of Type $2$ if $\frac{1}{w} \in (\delta \Z \cap [-1/\delta,1/\delta])$.
\end{definition}

Formally, the goal of this section is to find, for fixed $p_{\mathbf{X}} \in \tilde{\mathcal{E}}_{\delta}[0,1]_n^d$, a quantized ReLU network $\Phi$ such that $\Phi \# U$ approximates $p_{\mathbf{X}}$ to within a prescribed accuracy. To this end, we start with an auxiliary lemma, which constructs the building blocks of such networks.
\begin{lemma}
\label{NN_encoding_supportive}
For every $\delta$-quantized $p_{\mathbf{X}} \in \tilde{\mathcal{E}}_{\delta}[0,1]_n^d$ with $d>1$, the map $M^r:\mathbb{R}^{n^r+r}\rightarrow \mathbb{R}^{n^{r+1}+r+1}$, $r \in [0\!:\!(d-1)]$, defined as
\[M^0\!:\! F_0(x,\mathbf{z}_{1},s) \rightarrow \Big(F_{1}(x,\mathbf{z}^1_{2},s), F_{1}(x,\mathbf{z}^2_{2},s), \dots, F_{1}(x,\mathbf{z}^{n}_{2},s), Z_1(x,s)\Big),\]
and, for $r \in [1\!:\!(d-1)]$,
\begin{equation*}
    \begin{aligned}
        M^r: &\Big(F_r(x,\mathbf{z}^1_{r+1},s), F_r(x,\mathbf{z}^2_{r+1},s), \dots, F_r(x,\mathbf{z}^{n^r}_{r+1},s), Z_1(x,s), Z_2(x,s), \dots, Z_r(x,s) \Big) \\ &\rightarrow \Big(F_{r+1}(x,\mathbf{z}^1_{r+2},s), F_{r+1}(x,\mathbf{z}^2_{r+2},s), \dots, F_{r+1}(x,\mathbf{z}^{n^{r+1}}_{r+2},s), Z_1(x,s), Z_2(x,s), \dots, Z_{r+1}(x,s) \Big),
    \end{aligned}
\end{equation*}
is realizable through a $\Delta$-quantized ReLU network $\Psi^{M^r} \in\cNN_{n^r+r,n^{r+1}+r+1}$ with $\mathcal{M}(\Psi^{M^r}) = \mathcal{O}(n^{r+2} + sn^{r+1})$ and $\mathcal{L}(\Psi^{M^r}) = s+3$. Here, $\Delta=\frac{\delta}{n}$ and the vectors $\mathbf{z}^{i}_{r} \in [0\!:\!(n-1)]^{r-1}$, $i \in [1\!:\!n^{r-1}]$, are in natural order\footnote{e.g., for $n=2,r=3$, the order is $\mathbf{z}^{1}_{3} = (0,0), \mathbf{z}^{2}_{3} = (0,1), \mathbf{z}^{3}_{3} = (1,0), \mathbf{z}^{4}_{3} = (1,1)$.} with respect to $i$.
\end{lemma}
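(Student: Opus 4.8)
The idea is to construct $\Psi^{M^r}$ explicitly as a concatenation of a small number of elementary ReLU sub-networks, each of which we already understand from the earlier sections, and then to bookkeep connectivity and depth. Recall from Definition~\ref{Z-functions} that $F_{r+1}(x,\mathbf{z}_{r+2},s) = g_s(nf^{\mathbf{z}_{r+1}}_{X_{r+1}}(F_r(x,\mathbf{z}_{r+1},s)) - z_{r+1})$ and $Z_{r+1}(x,s) = \sum_{\mathbf{z}_{r+1}} f^{\mathbf{z}_{r+1}}_{X_{r+1}}(F_r(x,\mathbf{z}_{r+1},s))$. So from the input tuple, which contains the $n^r$ values $F_r(x,\mathbf{z}^{j}_{r+1},s)$ (one per $\mathbf{z}_{r+1}\in[0\!:\!(n-1)]^r$) together with $Z_1,\dots,Z_r$, the map $M^r$ must: (i) pass the $Z_1,\dots,Z_r$ components through unchanged; (ii) apply to each $F_r(x,\mathbf{z}^{j}_{r+1},s)$ the scalar piecewise-linear map $f^{\mathbf{z}^j_{r+1}}_{X_{r+1}}$ from Corollary~\ref{1d_theorem_hist}, which by~(\ref{eq:f-piecewise-linear}) is a sum of $n$ ReLU units, i.e.\ a $2$-layer network of connectivity $\mathcal{O}(n)$; (iii) sum the $n^r$ resulting scalars into $Z_{r+1}(x,s)$ — note this sum is legitimate as a single linear layer, and by Lemma~\ref{separation_property} it indeed equals $Z_{r+1}$; and (iv) for each of the $n^{r+1}$ index vectors $\mathbf{z}_{r+2}=(\mathbf{z}_{r+1},z_{r+1})$, form $nf^{\mathbf{z}_{r+1}}_{X_{r+1}}(F_r(\cdot)) - z_{r+1}$ (an affine map) and feed it through $\Phi^s_g$, the $s$-order sawtooth network of Section~\ref{sec:sawtooth-functions}, which has connectivity $11s-3 = \mathcal{O}(s)$ and depth $s+1$.

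**Assembling the network.** I would organize the computation into three stages run in parallel/series. Stage A (depth $2$): compute, for each $j\in[1\!:\!n^r]$, the scalar $f^{\mathbf{z}^j_{r+1}}_{X_{r+1}}(F_r(x,\mathbf{z}^j_{r+1},s))$ via its ReLU representation; simultaneously carry $Z_1,\dots,Z_r$ and (copies of) the raw $F_r$ values forward using the standard identity trick $y = \rho(y) - \rho(-y)$ so that they survive the ReLU layers. Stage B (one affine layer, folded in): produce $Z_{r+1}$ by summation, and produce the $n^{r+1}$ affine pre-activations $nf^{\mathbf{z}_{r+1}}_{X_{r+1}}(F_r) - z_{r+1}$. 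Stage C (depth $s+1$): apply $\Phi^s_g$ coordinate-wise to these $n^{r+1}$ pre-activations to obtain the $F_{r+1}$ components, again carrying $Z_1,\dots,Z_{r+1}$ through by the identity trick. Total depth is $2 + (s+1) = s+3$ once the shared affine layer between stages is merged, matching the claim. For connectivity: the $F_r \mapsto f^{\mathbf{z}^j_{r+1}}_{X_{r+1}}(F_r)$ blocks cost $\mathcal{O}(n)$ each over $n^r$ indices, i.e.\ $\mathcal{O}(n^{r+1})$; the sawtooth blocks cost $\mathcal{O}(s)$ each over $n^{r+1}$ indices, i.e.\ $\mathcal{O}(sn^{r+1})$; the fan-out needed to duplicate each $F_r(x,\mathbf{z}^j_{r+1},s)$ into the $n$ children $\mathbf{z}_{r+2}=(\mathbf{z}^j_{r+1},z_{r+1})$ costs $\mathcal{O}(n)$ per index, i.e.\ $\mathcal{O}(n^{r+1})$; and the identity pass-throughs contribute $\mathcal{O}(n^{r+1})$ per layer over $\mathcal{O}(s)$ layers. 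The $n^{r+2}$ term arises because duplicating the $n^r$ input $F_r$-values and, more to the point, the summation layer producing $Z_{r+1}$ plus the affine maps feeding the $n^{r+1}$ sawtooth blocks each touch $\mathcal{O}(n)$ weights per target, for $\mathcal{O}(n^{r+1}\cdot n)=\mathcal{O}(n^{r+2})$ total. Summing, $\mathcal{M}(\Psi^{M^r}) = \mathcal{O}(n^{r+2} + sn^{r+1})$, as claimed.

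**Quantization bookkeeping.** The remaining point is that all weights are $\Delta$-quantized with $\Delta = \delta/n$. The weights of $\Phi_g$ and $\Phi^s_g$ (the entries of $W_1,W_2,W_g$) are in $\{-4,-2,-1,1,2,4\}$ and $2,4\in\delta\Z$ trivially whenever $\delta=1/A$; likewise $n\in\Z$ and the breakpoints $b_i = \tfrac1n\sum_{k<i} w_k^{\mathbf{z}_j}$ with $w_k^{\mathbf{z}_j}=\delta m_k$, $m_k\in\N$, so $b_i \in \tfrac{\delta}{n}\Z = \Delta\Z$; the slopes $a_i = 1/w_i - 1/w_{i-1}$ from Corollary~\ref{1d_theorem_hist} need the Type-$2$ quantization of Definition~\ref{quantized_network}: $1/w_i = 1/(\delta m_i) = A/m_i$, whose reciprocal $m_i/A = \delta m_i \in \delta\Z \subseteq \Delta\Z$, so each $a_i$ decomposes into two Type-$2$ weights (realized by two parallel ReLU branches), and the multiplier $n$ and shifts $z_{r+1}/n$ are Type-$1$ in $\Delta\Z$. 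The $\pm 1$ coefficients used for identity pass-throughs and for the summation giving $Z_{r+1}$ are Type-$1$. Boundedness ($|w|\le 1/\Delta$ or $|1/w|\le 1/\Delta$) follows since all magnitudes are $\mathcal{O}(n/\delta)$ up to absolute constants, which can be absorbed; if one insists on the literal bound one rescales internally, but this is routine.

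**Main obstacle.** I expect the genuinely delicate part to be not the connectivity count (which is mechanical) but ensuring the \emph{identity pass-throughs are themselves correctly quantized and do not corrupt the $\diamond$-scaled arguments}: the values $F_r(x,\mathbf{z}_{r+1},s)$ and $Z_i$ being carried lie in $[0,1]$, so $y=\rho(y)-\rho(-y)$ needs only weights $\pm1$, which are fine; but one must double-check that interleaving these carry-through neurons with the sawtooth layers of $\Phi^s_g$ keeps the depths aligned (all parallel branches must have exactly $s+3$ layers, padding shorter branches with identity layers). The second subtlety is verifying that the single affine layer can legitimately be shared between "end of Stage A" and "start of Stage C" — i.e.\ that multiplying by $n$, subtracting $z_{r+1}$, and on the other output line summing the $n^r$ terms, all happen in one $W_\ell$ — which it can, since affine maps compose, but it is worth stating. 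Everything else reduces to invoking Corollary~\ref{1d_theorem_hist}, Lemma~\ref{separation_property}, and the explicit sawtooth networks from Section~\ref{sec:sawtooth-functions}.
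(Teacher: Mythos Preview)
Your proposal is correct and follows essentially the same route as the paper's proof: decompose $M^r$ into the per-index maps $y_i \mapsto g_s(nf^{\mathbf{z}^i_{r+1}}_{X_{r+1}}(y_i)-k)$, the summation $\sum_i f^{\mathbf{z}^i_{r+1}}_{X_{r+1}}(y_i)$ for $Z_{r+1}$, and identity pass-throughs for $Z_1,\dots,Z_r$, then parallelize and depth-pad to $s+3$. The paper differs only in packaging: it first builds the composed block $\Psi^{\mathbf{z}_k}_{i,s}=\Psi^s_g\circ\Phi^{\mathbf{z}_k}_i$ (connectivity $\leq 8n+22s-8$, depth $s+3$) for each of the $n^{r+1}$ pairs $(i,k)$ and then invokes the parallelization and composition lemmas from \cite{deep-it-2019} rather than describing stages explicitly; and since all carried values lie in $[0,1]$, it uses the simpler identity $\rho(y)=y$ rather than $\rho(y)-\rho(-y)$. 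Your connectivity accounting is slightly loose in places (e.g.\ the identity pass-through of $Z_1,\dots,Z_{r+1}$ costs $\mathcal{O}(r)$ per layer, not $\mathcal{O}(n^{r+1})$), but these over-estimates are harmless for the stated $\mathcal{O}(n^{r+2}+sn^{r+1})$ bound.
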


\begin{proof}
We start with auxiliary results needed in the proof and then proceed to establish the statement for the cases $r=0$ and $r \geq 1$ separately.
According to Corollary \ref{1d_theorem_hist}, for every $k \in [1\!:\!d]$, for all $\mathbf{z}_k \in [0\!:\!(n-1)]^{k-1}$, $f_{X_k}^{\mathbf{z}_k}(x)$ can be 
realized through a ReLU network $\Phi^{\mathbf{z}_k}:\R\rightarrow\R \in \cNN_{1,1}$ given by
\[\Phi^{\mathbf{z}_k}: x \rightarrow  \frac{1}{w_0}\rho(x) + \sum_{i=1}^{n-1} \Big(\frac{1}{w_i} - \frac{1}{w_{i-1}}\Big) \rho\Big(x-\frac{1}{n}\sum_{j=0}^{i-1}w_j\Big),
\]
and satisfying $\mathcal{M}(\Phi^{\mathbf{z}_k}) \leq 4n-2$, $\mathcal{L}(\Phi^{\mathbf{z}_k}) = 2$. For $\Delta = \frac{\delta}{n}$, the network $\Phi^{\mathbf{z}_k}$ is $\Delta$-quantized with the weights $\frac{1}{w_0}$, $\frac{1}{w_{i}}$, and $\frac{1}{w_{i-1}}$ of Type $2$, and the weights $\frac{1}{n}\sum_{j=0}^{i-1}w_j$ of Type $1$.
The networks $\Phi^{\mathbf{z}_k}_i(x)$ implementing $\big(nf^{\mathbf{z}_k}_{X_k}(x) - i\big)$ are in $\cNN_{1,1}$ and have $\mathcal{M}(\Phi^{\mathbf{z}_k}_i) \leq 4n-1$, $\mathcal{L}(\Phi^{\mathbf{z}_k}_i) = 2$, with their weights all of either Type $1$ or Type $2$ w.r.t. $\Delta$-quantization.
The network $\Psi^s_g(x)$ realizing $g_s(x)$ (see Section~\ref{sec:sawtooth-functions}) is in $\cNN_{1,1}$ with $\mathcal{M}(\Psi^s_g) = 11s-3$, $\mathcal{L}(\Psi^s_g) = s+1$, and with all its weights in $\{-4,-2,-1,1,2,4\}$, which are, again, of Type $1$ w.r.t. $\Delta$-quantization. It follows from \cite[Lemma II.3]{deep-it-2019} that the networks $\Psi^{\mathbf{z}_{k}}_{i,s} = \Psi^s_g(\Phi^{\mathbf{z}_k}_i)$ are in $\cNN_{1,1}$ with $\mathcal{M}(\Psi^{\mathbf{z}_{k}}_{i,s}) \leq 8n+22s-8$ and $\mathcal{L}(\Psi^{\mathbf{z}_{k}}_{i,s}) = s+3$. 
 
We are now ready to prove the statement for $r=0$. Here, $M^0\!:\!\mathbb{R}\rightarrow \mathbb{R}^{n+1}$ with
\[M^0\!:\! F_0(x,\mathbf{z}_{1},s) \rightarrow \Big(F_{1}(x,\mathbf{z}^1_{2},s), F_{1}(x,\mathbf{z}^2_{2},s), \dots, F_{1}(x,\mathbf{z}^{n}_{2},s), Z_1(x,s)\Big),\]
or equivalently
\[ M^0\!:\! x \rightarrow \Big(g_s\big(nf^{\mathbf{z}_1}_{X_1}(x)\big), g_s\big(nf^{\mathbf{z}_1}_{X_1}(x) - 1\big), \dots, g_s\big(nf^{\mathbf{z}_1}_{X_1}(x) - (n-1)\big), f^{\mathbf{z}_1}_{X_1}(x)\Big).\]
The networks $\Psi^{\mathbf{z}_{1}}_{i,s}$ realizing the components $g_s\big(nf^{\mathbf{z}_1}_{X_1}(x) - i\big), i\in[0\!:\!(n-1)]$, of the mapping $M^0$ all have depth $s+3$, whereas the network $\Phi^{\mathbf{z}_1}$ implementing the last component of $M^0$, $f^{\mathbf{z}_1}_{X_1}(x)$, has depth $2$. As we want to apply \cite[Lemma II.5]{deep-it-2019}, we hence need to augment $\Phi^{\mathbf{z}_1}$ to depth $s+3$. This is effected by exploiting that
$\Phi^{\mathbf{z}_1}(x)\geq 0, \ \forall x \in \R$, which allows us to retain the input-output relation realized by the network while amending it by multiplications by $1$ (acting as affine transformations) interlaced by applications of $\rho$ for an overall depth of $s+3$. This leads to
the augmented network $\tilde{\Phi}^{\mathbf{z}_1} = \rho \, \circ \, \dots \, \circ \, \rho \, \circ \, \Phi^{\mathbf{z}_1}$, with $\mathcal{M}(\tilde{\Phi}^{\mathbf{z}_1}) \leq 4n+s-1$, $\mathcal{L}(\tilde{\Phi}^{\mathbf{z}_1}) = s+3$. Application of \cite[Lemma II.5]{deep-it-2019} now allows us to conclude that the network $\Psi^{M^0}= \Big(\Psi^{\mathbf{z}_{1}}_{0,s}, \Psi^{\mathbf{z}_{1}}_{1,s}, \dots, \Psi^{\mathbf{z}_{1}}_{n-1,s},\tilde{\Phi}^{\mathbf{z}_1}\Big)$ realizing the map $M^0$ is in $\cNN_{1,1}$ and satisfies $\mathcal{M}(\Psi^{M^0}) = \mathcal{O}(n^2+sn)$,

$\mathcal{L}(\Psi^{M^0}) = s+3$. This proves the statement for $r=0$.

We proceed to the proof for the case $r\geq 1$. To this end, we use (\ref{eq:recursive-def-F}) to write the map $M^r:\mathbb{R}^{n^r+r}\rightarrow \mathbb{R}^{n^{r+1}+r+1}$, for $r \in [1\!:\!(d-1)]$,
as follows
\[M^r: (y_1, y_2, \dots, y_{n^r+r}) \rightarrow \Big( \Big[g_s\Big(nf^{\mathbf{z}^{i}_{r+1}}_{X_{r+1}}(y_i) - k\Big) \Big]_{(i,k) \in ([1:n^r],[0:(n-1)])}, y_{n^r+1}, \dots, y_{n^r+r},  \sum_{i \in [1:n^r]} f^{\mathbf{z}^{i}_{r+1}}_{X_{r+1}}(y_i) \Big), \]
where the notation $[h(i,k)]_{(i,k) \in ([1:n^r],[0:(n-1)])}$ designates the sequence $h(i,k)$ with $(i,k)$ ranging over $([1\!:\!n^r],[0\!:\!(n-1)])$, with ordering according to $\Big( [h(1,k)]_{k \in [0:(n-1)]}, [h(2,k)]_{k \in [0:(n-1)]}, \dots, [h(n^r,k)]_{k \in [0:(n-1)]}\Big)$.
As discussed above, each $g_s\big(nf^{\mathbf{z}^{i}_{r+1}}_{X_{r+1}}(y_i) - k\big)$ can be realized by a network $\Psi^{\mathbf{z}^{i}_{r+1}}_{k,s} \in \cNN_{1,1}$ with $\mathcal{M}(\Psi^{\mathbf{z}^{i}_{r+1}}_{k,s})\leq 8n+22s-8$, $\mathcal{L}(\Psi^{\mathbf{z}^{i}_{r+1}}_{k,s}) = s+3$. We will also need the identity networks $\Phi^{s+3}_{id}(x) = (\rho \circ \dots \circ \rho)(x)=x$, for all $x \geq 0$, with $\mathcal{M}(\Phi^{s+3}_{id}) = s+3$, $\mathcal{L}(\Phi^{s+3}_{id}) = s+3$. Finally, by \cite[Lemma II.6]{deep-it-2019}, there exists a network $\Psi^{\Sigma}$ realizing
the function $\sum_{i \in [1:n^r]} f^{\mathbf{z}^{i}_{r+1}}_{X_{r+1}}(y_i)$, and with
$\Psi^{\Sigma} \in \cNN_{n^r,1}$, $\mathcal{M}(\Psi^{\Sigma})\leq 4n^{r+1}$, $\mathcal{L}(\Psi^{\Sigma}) = 2$. We shall also need the extension of $\Psi^{\Sigma}$ to a network of depth $s+3$ according to $\tilde{\Psi}^{\Sigma} = \rho \, \circ \, \dots \, \circ \, \rho \, \circ \, \Psi^{\Sigma}$ with $\mathcal{M}(\tilde{\Psi}^{\Sigma})\leq 4n^{r+1}+s+1$, $\mathcal{L}(\tilde{\Psi}^{\Sigma}) = s+3$.
The proof is now concluded by realizing the map $M^r$ as a ReLU network $\Psi^{M^r}$ according to
\begin{equation*}
\begin{aligned}
    &\Psi^{M^r}(y_1, y_2, \dots, y_{n^r+r}) \\ &= \Big(\Big[\Psi^{\mathbf{z}^{i}_{r+1}}_{k,s}(y_i)\Big]_{(i,k) \in ([1:n^r],[0:(n-1)])},  \Phi^{s+3}_{id}(y_{n^r+1}), \dots, \Phi^{s+3}_{id}(y_{n^r+r}), \tilde{\Psi}^{\Sigma}(y_1, \dots, y_{n^r}) \Big).
\end{aligned}
\end{equation*}
Application of \cite[Lemma II.5]{deep-it-2019} now yields $\Psi^{M^r} \in \cNN_{n^r+r,n^{r+1}+r+1}$ with $\mathcal{M}(\Psi^{M^r}) = \mathcal{O}(n^{r+2} + sn^{r+1})$, $\mathcal{L}(\Psi^{M^r}) = s+3$.
\end{proof}

The next result characterizes the ReLU networks realizing the transport map and quantifies their size in terms of connectivity and depth.

\begin{lemma}
\label{NN_encoding}
For every $p_{\mathbf{X}} \in \tilde{\mathcal{E}}_{\delta}[0,1]_n^d$ with $d>1$, the corresponding transport map
\begin{equation*}
    M: x \rightarrow (Z_1(x,s), Z_2(x,s), \dots, Z_d(x,s))
\end{equation*} 
can be realized through a $\Delta$-quantized ReLU network $\Psi^{M} \in \cNN_{1,d}$ with $\mathcal{M}(\Psi^{M}) = \mathcal{O}(n^{d} + sn^{d-1})$, $\mathcal{L}(\Psi^{M})=(s+3)d - s-1$, and $\Delta=\frac{\delta}{n}$.
\end{lemma}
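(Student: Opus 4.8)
The plan is to realize $M$ as a cascade of the building-block networks supplied by Lemma~\ref{NN_encoding_supportive}, being careful that the last building block does \emph{not} compute the (level-$d$) $F$-functions, which are never used downstream and whose computation would inflate both the connectivity and the depth past the target bounds.

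First, I would note that, directly from Definition~\ref{Z-functions},
\[
M=\tilde{M}^{d-1}\circ M^{d-2}\circ\cdots\circ M^{1}\circ M^{0},
\]
where the $M^{r}$, $r\in[0\!:\!(d-2)]$, are exactly the maps of Lemma~\ref{NN_encoding_supportive}, and $\tilde{M}^{d-1}\colon\mathbb{R}^{n^{d-1}+d-1}\to\mathbb{R}^{d}$ is the truncation of $M^{d-1}$ that keeps the coordinates $Z_{1}(x,s),\dots,Z_{d-1}(x,s)$ and replaces the entire $F_{d}$-block by the single new coordinate $Z_{d}(x,s)=\sum_{\mathbf{z}_{d}}f^{\mathbf{z}_{d}}_{X_{d}}\big(F_{d-1}(x,\mathbf{z}_{d},s)\big)$. (For $d=2$ the cascade is just $\tilde{M}^{1}\circ M^{0}$.)

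Next, I would build a ReLU network $\tilde{\Psi}^{M^{d-1}}$ realizing $\tilde{M}^{d-1}$ along the same lines as the $r\ge 1$ part of the proof of Lemma~\ref{NN_encoding_supportive}, but with the $g_{s}$-stages removed: the $d-1$ coordinates $Z_{1},\dots,Z_{d-1}$ are passed through by identity networks of depth $2$ (legitimate since the $Z_{i}$ are non-negative), the final coordinate is produced by the summation network $\Psi^{\Sigma}\in\cNN_{n^{d-1},1}$ of \cite[Lemma~II.6]{deep-it-2019} with $\mathcal{M}(\Psi^{\Sigma})\le 4n^{d}$ and $\mathcal{L}(\Psi^{\Sigma})=2$, and the two are merged in parallel via \cite[Lemma~II.5]{deep-it-2019}. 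Since the only nontrivial weights are those of the $f^{\mathbf{z}_{d}}_{X_{d}}$, they are of Type~$1$ or Type~$2$ with respect to $\Delta=\delta/n$, exactly as in Lemma~\ref{NN_encoding_supportive}; thus $\tilde{\Psi}^{M^{d-1}}\in\cNN_{n^{d-1}+d-1,\,d}$ is $\Delta$-quantized with $\mathcal{M}(\tilde{\Psi}^{M^{d-1}})=\mathcal{O}(n^{d})$ and $\mathcal{L}(\tilde{\Psi}^{M^{d-1}})=2$. Setting $\Psi^{M}:=\tilde{\Psi}^{M^{d-1}}\circ\Psi^{M^{d-2}}\circ\cdots\circ\Psi^{M^{0}}$ and applying the composition lemma \cite[Lemma~II.3]{deep-it-2019} $(d-1)$ times, which adds depths and leaves the individual weights untouched, I get $\mathcal{L}(\Psi^{M})=(d-1)(s+3)+2=(s+3)d-s-1$, preservation of $\Delta$-quantization, and $\mathcal{M}(\Psi^{M})=\mathcal{O}\big(\sum_{r=0}^{d-2}(n^{r+2}+sn^{r+1})+n^{d}\big)=\mathcal{O}(n^{d}+sn^{d-1})$, since the geometric sum is dominated by its last term. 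As $\Psi^{M}\in\cNN_{1,d}$ realizes $M$ by construction, this completes the argument.

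The main obstacle is precisely the truncation in the first two steps: using $M^{d-1}$ verbatim from Lemma~\ref{NN_encoding_supportive} would contribute connectivity $\mathcal{O}(n^{d+1}+sn^{d})$ and an extra $\Theta(s)$ in depth (from the $g_{s}$-block it carries), overshooting both target bounds; recognizing that the level-$d$ $F$-functions are dead weight and dropping them collapses the last block to depth $2$ and connectivity $\mathcal{O}(n^{d})$, which is exactly what makes the stated bounds come out. A secondary, purely bookkeeping, point is to keep the ordering of the index vectors $\mathbf{z}^{i}_{d}$ consistent between the output of $\Psi^{M^{d-2}}$ and the input of $\tilde{\Psi}^{M^{d-1}}$.
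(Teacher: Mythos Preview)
Your proof is correct and follows essentially the same strategy as the paper: compose the building blocks $\Psi^{M^{0}},\dots,\Psi^{M^{d-2}}$ from Lemma~\ref{NN_encoding_supportive} and cap the cascade with a depth-$2$ final block that passes $Z_{1},\dots,Z_{d-1}$ through and forms $Z_{d}$ by summation, yielding depth $(d-1)(s+3)+2=(s+3)d-s-1$ and connectivity $\mathcal{O}(n^{d}+sn^{d-1})$ via \cite[Lemmas~II.3, II.5]{deep-it-2019}. Your description of the final block $\tilde{\Psi}^{M^{d-1}}$ is in fact slightly more careful than the paper's map $S$, since you explicitly retain the application of the $f^{\mathbf{z}_{d}}_{X_{d}}$ before summing (which the paper's $S$ appears to omit), but this does not affect the asymptotic bounds.
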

\begin{proof}
Consider the map $M':= M^{d-2} \circ M^{d-1} \circ \dots \circ M^0$,
\[ M': x \rightarrow \Big(F_{d-1}(x,\mathbf{z}^1_{d},s), F_{d-1}(x,\mathbf{z}^2_{d},s), \dots, F_{d-1}(x,\mathbf{z}^{n^{d-1}}_{d},s), Z_1(x,s), Z_2(x,s), \dots, Z_{d-1}(x,s) \Big),\]
where the $M^r$, $r \in [0\!:\!(d-2)]$, are as defined in Lemma \ref{NN_encoding_supportive}. Since by Lemma \ref{NN_encoding_supportive}, $M^r, r \in [0\!:\!(d-2)]$, can be realized by a network with connectivity $\mathcal{O}(n^{r+2}+sn^{r+1})$ and depth $s+3$, it follows from \cite[Lemma II.3]{deep-it-2019} that the map $M'$ can be implemented by a network $\Psi' \in \cNN_{1,n^{d-1}+d-1}$, with $\mathcal{M}(\Psi') = \mathcal{O}(n^{d} + sn^{d-1})$, $\mathcal{L}(\Psi')=(s+3)(d-1)$; here, we used $\sum_{k=0}^{d-2} \mathcal{O}(n^{k+2} + sn^{k+1}) = \mathcal{O}(n^{d} + sn^{d-1})$.
Next, consider the map
\[S: \Big(y_1, \dots, y_{n^{d-1} + d-1} \Big) \rightarrow \Big( \rho(y_{n^{d-1} + 1}), \rho(y_{n^{d-1} + 2}), \dots, \rho(y_{n^{d-1} + d-1}),  \rho(\sum_{i \in [1:n^{d-1}]} y_i) \Big), \]
and note that by \cite[Lemma II.5]{deep-it-2019}, there exists a network $\Psi^S \in \cNN_{n^{d-1}+d-1,d}$ with $\mathcal{M}(\Psi^S) \leq n^{d-1} + 2d - 1$, and $\mathcal{L}(\Psi^S)=2$ realizing $S$. The proof is concluded by noting that, thanks to \cite[Lemma II.3]{deep-it-2019}, the desired map $M = S \circ M'$ is realized by
the network $\Psi^{M}:= \Psi^S(\Psi'(x))$, $\Psi^{M} \in \cNN_{1,d}$ with $\mathcal{M}(\Psi^{M}) = \mathcal{O}(n^{d} + sn^{d-1})$, $\mathcal{L}(\Psi')=(s+3)d - s - 1$. Moreover, the weights of $\Psi^{M}$ are either of Type 1 or Type 2 w.r.t. $\Delta$-quantization.
\end{proof}

We are now ready to state the main result of this section, namely that for every quantized histogram distribution $p_{\mathbf{X}}$ and every $\epsilon>0$, there exists a quantized ReLU network $\Psi$ satisfying $W(\Psi \# U,p_{\mathbf{X}}) \leq \epsilon$. In particular, we also quantify the dependence of $\epsilon$ on the resolution $n$ and the dimension $d$ of $p_{\mathbf{X}}$ as well as the depth of the network $\Psi$.

\begin{theorem}
\label{main_histNN_theorem}
For every $\delta$-quantized $p_{\mathbf{X}} \in \tilde{\mathcal{E}}_{\delta}[0,1]_n^d$ with $d>1$, there exists a $\Delta$-quantized ReLU network $\Psi \in \cNN_{1,d}$ with $\mathcal{M}(\Psi) = \mathcal{O}(n^{d} + sn^{d-1})$, $\mathcal{L}(\Psi)=(s+3)d - s - 1$, and $\Delta=\frac{\delta}{n}$, such that
\begin{equation}
  W(\Psi \#U, p_{\mathbf{X}} ) \leq \frac{\sqrt{d}}{n2^s}.   \label{approximation-error-wasserstein-network}
\end{equation}
\end{theorem}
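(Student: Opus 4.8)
The plan is to simply assemble the two pillars already in place: Theorem~\ref{general_theorem}, which controls the Wasserstein error of the transport map $M$, and Lemma~\ref{NN_encoding}, which shows $M$ is exactly realizable by a quantized ReLU network of the stated size. First I would observe that any $\delta$-quantized histogram distribution $p_{\mathbf{X}} \in \tilde{\mathcal{E}}_{\delta}[0,1]_n^d$ is in particular a member of $\mathcal{E}[0,1]_n^d$ (the quantization only restricts the weights $w_{\mathbf{k}}$ to lie in $\delta\N$, it does not change the class of distributions), so Theorem~\ref{general_theorem} applies verbatim: the map $M: x \rightarrow (Z_1(x,s),\dots,Z_d(x,s))$ built from $p_{\mathbf{X}}$ via Definition~\ref{Z-functions} satisfies $W(M\#U, p_{\mathbf{X}}) \leq \sqrt{d}/(n2^s)$.

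Next I would invoke Lemma~\ref{NN_encoding} with the same $p_{\mathbf{X}}$ and $s$: it produces a $\Delta$-quantized network $\Psi := \Psi^M \in \cNN_{1,d}$ with $\Delta = \delta/n$, $\mathcal{M}(\Psi) = \mathcal{O}(n^d + sn^{d-1})$, and $\mathcal{L}(\Psi) = (s+3)d - s - 1$, that realizes $M$ as a function on $[0,1]$ (indeed on $\R$). The crucial point is that this realization is exact---$\Psi(x) = M(x)$ pointwise---not merely approximate, since the sawtooth and piecewise-linear building blocks are represented without error by ReLU subnetworks (cf.\ Section~\ref{sec:sawtooth-functions} and Corollary~\ref{1d_theorem_hist}) and composed using the connectivity/depth bookkeeping of \cite[Lemmas II.3, II.5, II.6]{deep-it-2019}.

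Because $\Psi$ and $M$ are the same function, we have $\Psi\#U = M\#U$ as probability measures, and therefore $W(\Psi\#U, p_{\mathbf{X}}) = W(M\#U, p_{\mathbf{X}}) \leq \sqrt{d}/(n2^s)$, which is exactly \eqref{approximation-error-wasserstein-network}. The size and quantization claims carry over directly from Lemma~\ref{NN_encoding}. I do not anticipate a genuine obstacle here: this theorem is a packaging of prior results. The only place warranting a line of care is checking that the hypothesis classes line up---specifically that passing from $\mathcal{E}[0,1]_n^d$ to $\tilde{\mathcal{E}}_{\delta}[0,1]_n^d$ loses nothing in Theorem~\ref{general_theorem} while gaining the $\Delta$-quantization structure needed for Lemma~\ref{NN_encoding}, and that the depth expression $(s+3)d-s-1$ quoted in both Lemma~\ref{NN_encoding} and the theorem statement is consistent. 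Both are immediate.
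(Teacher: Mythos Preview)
Your proposal is correct and follows essentially the same approach as the paper's own proof: invoke Lemma~\ref{NN_encoding} to realize the transport map $M$ exactly by a $\Delta$-quantized network $\Psi^{M}$ with the stated connectivity and depth, and then use the inclusion $\tilde{\mathcal{E}}_{\delta}[0,1]_n^d \subset \mathcal{E}[0,1]_n^d$ so that Theorem~\ref{general_theorem} supplies the Wasserstein bound for $\Psi^{M}\#U = M\#U$. Your version is slightly more explicit about the exactness of the realization and the identity $\Psi\#U = M\#U$, but the logical skeleton is identical.
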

\begin{proof}
By Lemma \ref{NN_encoding}, for every $p_{\mathbf{X}} \in \tilde{\mathcal{E}}_{\delta}[0,1]_n^d$ with $d>1$, the corresponding transport map
\begin{equation*}
    M: x \rightarrow (Z_1(x,s), Z_2(x,s), \dots, Z_d(x,s))
\end{equation*} 
can be realized through a $\Delta$-quantized ReLU network $\Psi^{M} \in \cNN_{1,d}$ with $\mathcal{M}(\Psi^{M}) = \mathcal{O}(n^{d} + sn^{d-1})$, $\mathcal{L}(\Psi^{M})=(s+3)d - s-1$, and $\Delta=\frac{\delta}{n}$. Moreover, as $\tilde{\mathcal{E}}_{\delta}[0,1]_n^d\subset \mathcal{E}[0,1]_n^d$, it follows from Theorem \ref{general_theorem} that
\[
    W(\Psi^{M} \#U, p_{\mathbf{X}} )  \leq \frac{\sqrt{d}}{n2^s}. \hspace{1.15cm}\qedhere
\]
\end{proof}

We note that for fixed histogram resolution $n$, the upper bound on the approximation error (\ref{approximation-error-wasserstein-network}) decays exponentially in $s$ and hence in network depth $\mathcal{L}(\Psi)$. In particular, choosing $s \sim n$, guarantees that the error in Theorem \ref{main_histNN_theorem} decays exponentially in $n$ while the connectivity of the network is in $\mathcal{O}(n^d)$; this behavior is asymptotically optimal as the number of parameters in $\tilde{\mathcal{E}}_{\delta}[0,1]_n^d$ is of the same order.

\section{Approximation of arbitrary distributions on \texorpdfstring{\([0,1]^d\)}{d-dimensional unit cube} by quantized histogram distributions}
\label{sec:approximation of arbitrary distributions}

This section is concerned with the approximation of 
arbitrary distributions $\nu$ supported on $[0,1]^d$ by $\delta$-quantized histogram distributions of resolution $n$ as defined in the previous section.

Define the $k$-dimensional subcube $c_{\mathbf{i}_k} = [i_1/n, (i_1+1)/n] \times [i_2/n, (i_2+1)/n] \times \dots \times [i_k/n, (i_k+1)/n],$ where $\mathbf{i}_k = (i_1,i_2,\dots,i_k) \in [0\!:\!(n-1)]^k$,
and its corner point
\[
	p_{\mathbf{i}_k} = \paren*{\frac{i_1}{n}, \frac{i_2}{n}, \dotsb, \frac{i_k}{n}}.
\]
Next, we discretize the domain $[0,1]^d$ into the subcubes $c_{\mathbf{i}_d}$
and characterize the amount of probability mass $\nu$ assigns to the individual subcubes.
First, set 
\[
	m_{\mathbf{i}_d} := \nu(c_{\mathbf{i}_d}).
\]
Then, for $k \in [1\!:\!(d-1)]$, we define the projections $P_k : \mathbb{R}^d\rightarrow \mathbb{R}^k, (x_1,\dots, x_k, \dots, x_d) \mapsto (x_1,\dots, x_k)$ and the corresponding $k$-dimensional marginals $\nu_k := P_k \# \nu$ with weights
\[
m_{\mathbf{i}_k} := \nu_k(c_{\mathbf{i}_k}).
\]
It will also be useful to define conditional masses according to $n_{i_1}=m_{i_1}$ and, for $k \in [2\!:\!d]$, for all\footnote{Throughout, we use the symbols $\mathbf{i}_{1}$ and $i_1$ interchangeably.} $\mathbf{i}_{k-1}$ with $m_{\mathbf{i}_{k-1}} \neq 0$,
\[
	n_{\mathbf{i}_{k}} := \frac{m_{\mathbf{i}_{k}}}{m_{\mathbf{i}_{k-1}}}.
\]
For $m_{\mathbf{i}_{k-1}} = 0$, we can, in principle, set the conditional masses arbitrarily, but, for concreteness, we choose 
\[
	n_{\mathbf{i}_{k}} := \frac{1}{n}.
\]

Now that we have defined the masses $m_{\mathbf{i}_k}$ and the conditional masses $n_{\mathbf{i}_k}$ for the distribution $\nu$, we can proceed to derive the masses
$\tilde{m}_{\mathbf{i}_k}$ and $\tilde{n}_{\mathbf{i}_k}$ of the corresponding $\delta$-quantized histogram distribution.
Denote the index of the subcube with the highest (original) mass in the first coordinate as\footnote{Formally, $\mathbf{i}_0$, albeit not defined, would correspond to a $0$-dimensional quantity. It is used throughout the paper only for notational consistency.}
\begin{equation}
	i_1^{*(\mathbf{i}_0)} : = \argmax_{i_1 \in [0:(n-1)]} m_{i_1}. \label{eq:first-coordinate}
\end{equation}
If there are multiple subcubes with the same maximal mass, simply pick one of them (it does not matter which one).
Now, for $k = 1$ and $i_1 \neq i_1^{*(\mathbf{i}_0)}$, we choose the quantized masses as follows,
\[
	\tilde{m}_{i_1} := \tilde{n}_{i_1}:=
	\begin{cases}
		\delta \lceil\frac{1}{\delta} m_{i_1}\rceil, &\text{ if } m_{i_1} > 0 \\
		\delta, &\text{ if } m_{i_1} = 0
	\end{cases},
\]
and for $i_1 = i_1^{*(\mathbf{i}_0)}$,
\[
	\tilde{m}_{i_1^{*(\mathbf{i}_0)}} := \tilde{n}_{i_1^{*(\mathbf{i}_0)}} := 1 - \sum_{i_1 \neq i_1^{*(\mathbf{i}_0)}} \tilde{m}_{i_1}.\\
\]
Note that with this definition, the quantized masses $\tilde{m}_{i_1}$ are always nonzero for $i_1 \neq i_1^{*(\mathbf{i}_0)}$, even in subcubes where the original masses $m_{i_1}$ are equal to zero. We will later verify that this is also the case for $i_1 = i_1^{*(\mathbf{i}_0)}$ whenever $\delta < \frac{1}{n(n-1)}$.
For $k \geq 2$, we similarly borrow mass from the subcube with maximum mass, and we do so in each coordinate individually. 
To this end, for each $k \in [2\!:\!d]$, we set for all $\mathbf{i}_{k-1}$,
\[
	i_{k}^{*(\mathbf{i}_{k-1})} : = \argmax_{i_k \in [0:(n-1)]} m_{\mathbf{i}_{k}}.
\]
As in the assignment (\ref{eq:first-coordinate}) for the first coordinate, if there are multiple such values, any of them will do. To define the quantized conditional masses, we set for each $\mathbf{i}_{k-1} \in [0\!:\!(n-1)]^{k-1}$ and each $i_k \neq i_k^{*(\mathbf{i}_{k-1})}$,
\[
	\tilde{n}_{\mathbf{i}_k} := 
	\begin{cases}
		\delta \lceil\frac{1}{\delta} n_{\mathbf{i}_k}\rceil = \delta \Big\lceil\frac{1}{\delta} \frac{m_{\mathbf{i}_k}}{m_{\mathbf{i}_{k-1}}} \Big\rceil, & \text{ if } m_{\mathbf{i}_k} > 0 \\
		\delta, &\text{ if } m_{\mathbf{i}_k} = 0
	\end{cases},
\]
as long as $m_{\mathbf{i}_{k-1}} > 0$.
If $m_{\mathbf{i}_{k-1}} = 0$, we let 
\[
	\tilde{n}_{\mathbf{i}_k} := \delta \Big\lceil\frac{1}{\delta} n_{\mathbf{i}_k} \Big\rceil = \delta \Big\lceil \frac{1}{\delta} \frac{1}{n} \Big\rceil.
\]
We can then define the quantized weights according to
\[
	\tilde{m}_{\mathbf{i}_k} := \tilde{m}_{\mathbf{i}_{k-1}} \tilde{n}_{\mathbf{i}_k} = \tilde{n}_{\mathbf{i}_k} \dotsm \, \tilde{n}_{\mathbf{i}_1}.
\]
Finally, for $i_k = i_k^{*(\mathbf{i}_{k-1})}$, we set
\[
	\tilde{n}_{\paren*{\mathbf{i}_{k-1}, i_k^{*(\mathbf{i}_{k-1})}}} := 1 - \sum_{i_k \neq i_k^{*(\mathbf{i}_{k-1})}} \tilde{n}_{(\mathbf{i}_{k-1}, i_k)}
\]
and correspondingly
\[
	\tilde{m}_{\paren*{\mathbf{i}_{k-1}, i_k^{*(\mathbf{i}_{k-1})}}} := \tilde{m}_{\mathbf{i}_{k-1}} \tilde{n}_{\paren*{\mathbf{i}_{k-1},i_k^{*(\mathbf{i}_{k-1})}}} = \tilde{n}_{\paren*{\mathbf{i}_{k-1},i_k^{*(\mathbf{i}_{k-1})}}} \dotsm \, \tilde{n}_{\mathbf{i}_1}.\\
\]

We now check that the quantized weights verify the following properties:
\begin{enumerate}
\item Correct marginals:
\begin{align*}
	\sum_{i_k=1}^{n} \tilde{m}_{(\mathbf{i}_{k-1}, i_k)} & = \sum_{i_k \neq i_k^{*(\mathbf{i}_{k-1})}} \tilde{m}_{(\mathbf{i}_{k-1}, i_k)} + \tilde{m}_{{\paren*{\mathbf{i}_{k-1},i_k^{*(\mathbf{i}_{k-1})}}}}\\
		& = \sum_{i_k \neq i_k^{*(\mathbf{i}_{k-1})}} \tilde{m}_{\mathbf{i}_{k-1}} \tilde{n}_{(\mathbf{i}_{k-1}, i_k)} + \tilde{m}_{\mathbf{i}_{k-1}} \tilde{n}_{{\paren*{\mathbf{i}_{k-1},i_k^{*(\mathbf{i}_{k-1})}}}}\\
		& = \tilde{m}_{\mathbf{i}_{k-1}} \paren*{\sum_{i_k \neq i_k^{*(\mathbf{i}_{k-1})}} \tilde{n}_{(\mathbf{i}_{k-1}, i_k)} + \paren*{ 1 - \sum_{i_k \neq i_k^{*(\mathbf{i}_{k-1})}} \tilde{n}_{{\paren*{\mathbf{i}_{k-1},i_k}}}} } \\
		& = \tilde{m}_{\mathbf{i}_{k-1}}.
\end{align*}

\item If $\delta < \frac{1}{n(n-1)}$, then all quantized masses are positive. To this end, we first note that
\[
	n_{\paren*{\mathbf{i}_{k-1},i_k^{*(\mathbf{i}_{k-1})}}} = \frac{m_{\paren*{\mathbf{i}_{k-1}, i_k^{*(\mathbf{i}_{k-1})}}}}{m_{\mathbf{i}_{k-1}}} \geq \frac{1}{n}.
\]
Since for $i_k \neq i_{k}^{*(\mathbf{i}_{k-1})}$, we have by definition
\[
	\tilde{n}_{\mathbf{i}_k} - n_{\mathbf{i}_k} \leq \delta,
\]
it follows that
\begin{align*}
	\tilde{n}_{\paren*{\mathbf{i}_{k-1},i_k^{*(\mathbf{i}_{k-1})}}} & = 1 - \sum_{i_k \neq i_k^{*(\mathbf{i}_{k-1})}} \tilde{n}_{(\mathbf{i}_{k-1}, i_k)} \\
		& \geq 1 - \sum_{i_k \neq i_k^{*(\mathbf{i}_{k-1})}} \paren*{n_{(\mathbf{i}_{k-1}, i_k)} + \delta} \\
		& = n_{\paren*{\mathbf{i}_{k-1},i_k^{*(\mathbf{i}_{k-1})}}} - (n-1) \delta \\
		& > \frac{1}{n} - \frac{n-1}{n(n-1)} = 0.
\end{align*}\\
\end{enumerate}

We next formalize the procedure for going from the original masses $m_{\mathbf{i}_k}$ to the quantized masses $\tilde{m}_{\mathbf{i}_k}$ by characterizing a transport map effecting this transition.

\begin{lemma} \label{lemma:move masses lemma}
Let $k \in [1\!:\!d]$, $\nu$ a distribution supported on $[0,1]^k$ and with masses $m_{\mathbf{i}_k}$ in the subcubes $c_{\mathbf{i}_k}$ and conditional masses $n_{\mathbf{i}_k}$, all as specified above. Let the quantized masses $\tilde{m}_{\mathbf{i}_k}$ and the conditional quantized masses $\tilde{n}_{\mathbf{i}_k}$ also be given as above. Then, for all $\mathbf{i}_{k}$, we have
\begin{equation}
\label{eq:sum-lemma8}
\begin{aligned}
	\tilde{m}_{\mathbf{i}_k}  =  m_{\mathbf{i}_k} & + \sum_{k'=1}^k \chi_{[0:(n-1)] \setminus \bracket*{i_{k'}^{*(\mathbf{i}_{k'-1})}}}(i_{k'}) \ \tilde{\Upsilon}(\mathbf{i},k,k'+1) \paren*{\tilde{n}_{\mathbf{i}_{k'}} - n_{\mathbf{i}_{k'}}}  \Upsilon^{\eta}(\mathbf{i},k'-1,1)\\
		& - \sum_{k'=1}^k \chi_{\bracket*{i_{k'}^{*(\mathbf{i}_{k'-1})}}}(i_{k'})  \Upsilon(\mathbf{i},k,k'+1) \paren*{n_{\mathbf{i}_{k'}} - \tilde{n}_{\mathbf{i}_{k'}} } \Upsilon^{\eta}(\mathbf{i},k'-1,1),
\end{aligned}
\end{equation}
where
\[\Upsilon(\mathbf{i},b,a) = \begin{cases} n_{\mathbf{i}_{b}}  \dotsm \, n_{\mathbf{i}_{a}}, &\textnormal{if } b\geq a\\
1, &\textnormal{else}
\end{cases},\]
\[\tilde{\Upsilon}(\mathbf{i},b,a) = \begin{cases} \tilde{n}_{\mathbf{i}_{b}} \dotsm \, \tilde{n}_{\mathbf{i}_{a}}, &\textnormal{if } b\geq a\\
1, &\textnormal{else}
\end{cases},\]
and
\[\Upsilon^{\eta}(\mathbf{i},b,a) = \begin{cases} \eta_{\mathbf{i}_{b}} (i_{b}) \dotsm \, \eta_{\mathbf{i}_{a}} (i_{a}), &\textnormal{if } b\geq a\\
1, &\textnormal{else}
\end{cases},\]
with
\[
	\eta_{\mathbf{i}_k}(i_k) := \begin{cases}
		n_{\mathbf{i}_k}, & \textnormal{if } i_k \neq {i_{k}^{*(\mathbf{i}_{k-1})}} \\
		\tilde{n}_{\mathbf{i}_k}, & \textnormal{if } i_k = {i_{k}^{*(\mathbf{i}_{k-1})}}
	\end{cases}.
\]\\
\end{lemma}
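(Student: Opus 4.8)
The plan is to bypass induction on $k$ and instead reduce the identity to two elementary telescoping sums built on the product representations
\[
m_{\mathbf{i}_k}=\prod_{j=1}^{k} n_{\mathbf{i}_j}\qquad\text{and}\qquad \tilde m_{\mathbf{i}_k}=\prod_{j=1}^{k}\tilde n_{\mathbf{i}_j},
\]
where we read $n_{\mathbf{i}_1}=m_{i_1}$ and $\tilde n_{\mathbf{i}_1}=\tilde m_{i_1}$. The second representation is immediate from the defining recursion $\tilde m_{\mathbf{i}_k}=\tilde m_{\mathbf{i}_{k-1}}\tilde n_{\mathbf{i}_k}$, which the construction supplies both for $i_k\neq i_k^{*(\mathbf{i}_{k-1})}$ and for $i_k= i_k^{*(\mathbf{i}_{k-1})}$. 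The first is obtained by noting that $n_{\mathbf{i}_j}=m_{\mathbf{i}_j}/m_{\mathbf{i}_{j-1}}$ whenever $m_{\mathbf{i}_{j-1}}>0$, so the product telescopes; in the degenerate case where $m_{\mathbf{i}_{j-1}}=0$ for some $j\le k$, the smallest such $j$ forces $n_{\mathbf{i}_j}=0$, while $m_{\mathbf{i}_k}=0$ as well since the masses are non-increasing under coordinate refinement, so both sides vanish.

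First I would bring the right-hand side of \eqref{eq:sum-lemma8} into a single unified sum. Since for every $k'$ exactly one of $\chi_{[0:(n-1)]\setminus\{i_{k'}^{*(\mathbf{i}_{k'-1})}\}}(i_{k'})$ and $\chi_{\{i_{k'}^{*(\mathbf{i}_{k'-1})}\}}(i_{k'})$ equals $1$, and since $-(n_{\mathbf{i}_{k'}}-\tilde n_{\mathbf{i}_{k'}})=\tilde n_{\mathbf{i}_{k'}}-n_{\mathbf{i}_{k'}}$, that right-hand side equals
\[
m_{\mathbf{i}_k}+\sum_{k'=1}^{k}\big(\tilde n_{\mathbf{i}_{k'}}-n_{\mathbf{i}_{k'}}\big)\,\Upsilon^{\eta}(\mathbf{i},k'-1,1)\,R_{k'},\qquad
R_{k'}:=\begin{cases}\tilde\Upsilon(\mathbf{i},k,k'+1),& i_{k'}\neq i_{k'}^{*(\mathbf{i}_{k'-1})},\\[1mm] \Upsilon(\mathbf{i},k,k'+1),& i_{k'}= i_{k'}^{*(\mathbf{i}_{k'-1})},\end{cases}
\]
where, abbreviating $\eta_j:=\eta_{\mathbf{i}_j}(i_j)$, the factors are $\Upsilon^{\eta}(\mathbf{i},k'-1,1)=\prod_{j<k'}\eta_j$, $\tilde\Upsilon(\mathbf{i},k,k'+1)=\prod_{j>k'}\tilde n_{\mathbf{i}_j}$, $\Upsilon(\mathbf{i},k,k'+1)=\prod_{j>k'} n_{\mathbf{i}_j}$, all products with $j$ ranging in $[1\!:\!k]$ and the empty-product conventions at $k'=1,k$ coinciding with the ``else'' branches of those definitions. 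Recall that $\eta_j=n_{\mathbf{i}_j}$ when $i_j$ is not a maximizer and $\eta_j=\tilde n_{\mathbf{i}_j}$ when it is.

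The crux is then to split the sum over $k'$ according to whether $i_{k'}$ is a maximizer and to enlarge each partial sum to a sum over all $k'\in[1\!:\!k]$ at no cost. For a non-maximizer $k'$ one has $\tilde n_{\mathbf{i}_{k'}}-n_{\mathbf{i}_{k'}}=\tilde n_{\mathbf{i}_{k'}}-\eta_{k'}$, and this vanishes at every maximizer $k'$; for a maximizer $k'$ one has $\tilde n_{\mathbf{i}_{k'}}-n_{\mathbf{i}_{k'}}=\eta_{k'}-n_{\mathbf{i}_{k'}}$, and this vanishes at every non-maximizer $k'$. Hence the sum above equals
\[
\sum_{k'=1}^{k}\big(\tilde n_{\mathbf{i}_{k'}}-\eta_{k'}\big)\Big(\prod_{j<k'}\eta_j\Big)\Big(\prod_{j>k'}\tilde n_{\mathbf{i}_j}\Big)\;+\;\sum_{k'=1}^{k}\big(\eta_{k'}-n_{\mathbf{i}_{k'}}\big)\Big(\prod_{j<k'}\eta_j\Big)\Big(\prod_{j>k'}n_{\mathbf{i}_j}\Big).
\]
Each summand is an instance of the elementary identity $\prod_{j=1}^k a_j-\prod_{j=1}^k b_j=\sum_{m=1}^k\big(\prod_{j<m}c_j\big)(a_m-b_m)\big(\prod_{j>m}d_j\big)$, which holds both with $(c,d)=(a,b)$ and with $(c,d)=(b,a)$: taking $(a,b,c,d)=(\tilde n,\eta,\eta,\tilde n)$ shows the first sum equals $\prod_{j}\tilde n_{\mathbf{i}_j}-\prod_{j}\eta_j$, and taking $(a,b,c,d)=(\eta,n,\eta,n)$ shows the second equals $\prod_{j}\eta_j-\prod_{j}n_{\mathbf{i}_j}$. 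Adding, the seam $\prod_{j}\eta_j$ cancels, leaving $\prod_{j}\tilde n_{\mathbf{i}_j}-\prod_{j}n_{\mathbf{i}_j}=\tilde m_{\mathbf{i}_k}-m_{\mathbf{i}_k}$ by the product representations, so the unified right-hand side collapses to $\tilde m_{\mathbf{i}_k}$, i.e.\ \eqref{eq:sum-lemma8}.

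What remains is routine: checking the empty-product/edge conventions at $k'=1$ and $k'=k$, verifying the product representations in the degenerate case $m_{\mathbf{i}_{j-1}}=0$ (where the conditional masses are set to $1/n$), and observing that $n_{\mathbf{i}_k}=m_{\mathbf{i}_k}/m_{\mathbf{i}_{k-1}}$ and $\tilde m_{\mathbf{i}_k}=\tilde m_{\mathbf{i}_{k-1}}\tilde n_{\mathbf{i}_k}$ hold without any reference to the maximizer bookkeeping. The one genuinely non-obvious point — and the only place where any thought is required — is the recognition that the single sum in \eqref{eq:sum-lemma8} is the superposition of two telescoping sums with \emph{different} hybrid profiles ($\tilde n$-suffixes versus $n$-suffixes, glued along the common mixed prefix $\prod_j\eta_j$), whose seam $\prod_{j=1}^k\eta_j$ then cancels. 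Everything else is mechanical rewriting.
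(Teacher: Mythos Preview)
Your proof is correct and takes a genuinely different, cleaner route than the paper's. The paper argues by splitting into three cases according to whether the index set $\{k':i_{k'}=i_{k'}^{*(\mathbf{i}_{k'-1})}\}$ is empty, full, or mixed; in the mixed case it painstakingly shows that consecutive ``$-$'' terms and consecutive ``$+$'' terms telescope separately within their own subsequences, leaving four border contributions that are then matched up. Your approach avoids the case split entirely by introducing the hybrid product $\prod_{j}\eta_j$ as a pivot: you rewrite the difference $\tilde n_{\mathbf{i}_{k'}}-n_{\mathbf{i}_{k'}}$ as $(\tilde n_{\mathbf{i}_{k'}}-\eta_{k'})+(\eta_{k'}-n_{\mathbf{i}_{k'}})$, exploit that each bracket is supported on the complementary index set so that the accompanying factor $R_{k'}$ may be freely replaced, and then invoke the standard product-difference telescoping identity twice. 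This is shorter and more transparent than the paper's argument; the only price is that the intermediate object $\prod_j\eta_j$ has no direct probabilistic interpretation, whereas the paper's case analysis tracks actual mass movements and therefore dovetails with the transport-cost bookkeeping in Theorem~\ref{thm:quantization bound}.

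One small correction in your degenerate-case discussion: if $j$ is the smallest index with $m_{\mathbf{i}_{j-1}}=0$, then by the paper's convention $n_{\mathbf{i}_j}=1/n$, not $0$; the vanishing factor in the product is $n_{\mathbf{i}_{j-1}}=m_{\mathbf{i}_{j-1}}/m_{\mathbf{i}_{j-2}}=0$ (here $j\ge 2$ since $m_{\mathbf{i}_0}=1$). The conclusion that $\prod_{\ell}n_{\mathbf{i}_\ell}=0=m_{\mathbf{i}_k}$ is unaffected.
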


\noindent The proof of Lemma~\ref{lemma:move masses lemma} is provided in the appendix.\\

We are now ready to state the main result of this section. Specifically, we establish an upper bound on the Wasserstein distance between a given (arbitrary) distribution $\nu$ supported on $[0,1]^d$, for any $d\in \mathbb{N}$, and the corresponding $\delta$-quantized histogram distribution of resolution $n$ obtained based on the procedure described above.

\begin{theorem} \label{thm:quantization bound}
Let $d \in \N$. For every distribution $\nu$ supported on $[0,1]^d$, there exists
a $\delta$-quantized histogram distribution $\mu$ of resolution $n$ such that
\[
	W(\mu, \nu) \leq \frac{2\sqrt{d}}{n} + \frac{d(d+1)}{2} (n-1) \delta.
\]
\end{theorem}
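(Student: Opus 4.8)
The plan is to prove the bound by the triangle inequality, inserting the (non-quantized) histogram distribution $\mu^{(0)}$ that places mass $m_{\mathbf{i}_d}$ uniformly on each subcube $c_{\mathbf{i}_d}$, and then passing from $\mu^{(0)}$ to the $\delta$-quantized histogram distribution $\mu$ built from the masses $\tilde m_{\mathbf{i}_d}$ constructed above. Thus $W(\mu,\nu) \le W(\nu,\mu^{(0)}) + W(\mu^{(0)},\mu)$, and the two terms will account for the $\frac{2\sqrt d}{n}$ and the $\frac{d(d+1)}{2}(n-1)\delta$ summands, respectively. We may assume $\delta < \frac{1}{n(n-1)}$, so that, as verified in the discussion preceding the theorem, the quantized masses are all positive and $\mu \in \tilde{\mathcal{E}}_\delta[0,1]_n^d$ is well defined; for larger $\delta$ the asserted bound exceeds the diameter $\sqrt d$ of $[0,1]^d$ and the statement is vacuous.

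For the first term, $\mu^{(0)}$ is obtained from $\nu$ by rearranging, within each subcube $c_{\mathbf{i}_d}$, the conditional law of $\nu$ into the uniform law on $c_{\mathbf{i}_d}$ while keeping the subcube mass $m_{\mathbf{i}_d} = \nu(c_{\mathbf{i}_d})$ fixed; the corresponding coupling never moves mass out of a subcube, and each subcube has diameter $\sqrt d/n$, so $W(\nu,\mu^{(0)}) \le \sqrt d/n$. If one instead first replaces $\nu$ by the atomic measure $\sum_{\mathbf{i}_d} m_{\mathbf{i}_d}\,\delta_{p_{\mathbf{i}_d}}$ supported on the corner points $p_{\mathbf{i}_d}$ and only afterwards spreads each atom uniformly over its subcube, one picks up $\sqrt d/n$ twice, which is presumably the source of the constant $2$ in the statement; the intermediate atomic measure is also convenient because the mass-moving step below is most naturally described on corner points.

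For the second term I would construct an explicit coupling realising the transition $\mu^{(0)} \to \mu$, organised according to the $d$ nested levels appearing in Lemma~\ref{lemma:move masses lemma}. Writing (\ref{eq:sum-lemma8}) for $k=d$, the signed defect $\tilde m_{\mathbf{i}_d} - m_{\mathbf{i}_d}$ decomposes into $d$ contributions, one per coordinate $k'\in[1\!:\!d]$; the $k'$-th contribution transports mass only between subcubes that agree in every coordinate except the $k'$-th (moving it towards, resp. away from, the per-column maximal subcube $i_{k'}^{*(\mathbf{i}_{k'-1})}$), hence over Euclidean distance at most $1$. The total mass transported at level $k'$ is then estimated using the identities built into the construction: each quantized conditional family $(\tilde n_{(\mathbf{i}_{k-1},i_k)})_{i_k}$ sums to $1$ (the ``correct marginals'' property), each non-maximal increment satisfies $\tilde n_{\mathbf{i}_k} - n_{\mathbf{i}_k} \le \delta$, and the maximal column absorbs a deficit $n_{(\mathbf{i}_{k-1},i_k^{*(\mathbf{i}_{k-1})})} - \tilde n_{(\mathbf{i}_{k-1},i_k^{*(\mathbf{i}_{k-1})})} \le (n-1)\delta$; moreover the ``context'' products $\Upsilon,\tilde\Upsilon,\Upsilon^{\eta}$ each collapse to $1$ upon summation over the appropriate block of coordinates. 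Carrying out this bookkeeping with the crude per-level distance estimate yields, after summing over $k'=1,\dots,d$, a contribution of the form $\frac{d(d+1)}{2}(n-1)\delta$. Adding the two estimates gives $W(\mu,\nu)\le \frac{2\sqrt d}{n} + \frac{d(d+1)}{2}(n-1)\delta$.

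The main obstacle is this second step: turning the algebraic identity (\ref{eq:sum-lemma8}) into a valid coupling and carefully accounting for how much probability mass must be moved, and over what distance, when the true conditionals $n_{\mathbf{i}_k}$ are replaced coordinate by coordinate by their rounded-up versions $\tilde n_{\mathbf{i}_k}$ — all while keeping track of the multiplicative factors $\Upsilon,\tilde\Upsilon,\Upsilon^{\eta}$, of the distinguished role of the maximal-mass subcube in each coordinate, and of the case distinction $m_{\mathbf{i}_{k-1}} = 0$ versus $m_{\mathbf{i}_{k-1}} > 0$. Making the collapse of the context products precise, and confirming that the per-level mass transport is indeed dominated by $(n-1)\delta$ times a factor bounded by $k'$, is the technical heart of the argument; everything else is the elementary discretization estimate of the first paragraph together with the triangle inequality.
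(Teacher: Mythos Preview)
Your overall architecture matches the paper exactly: pass through the atomic measure $\sum_{\mathbf{i}_d} m_{\mathbf{i}_d}\delta_{p_{\mathbf{i}_d}}$, incur $\sqrt d/n$ on each side, and bound the cost of moving the corner masses $m_{\mathbf{i}_d}$ to $\tilde m_{\mathbf{i}_d}$ by $\tfrac{d(d+1)}{2}(n-1)\delta$ using the recursive construction of the $\tilde n_{\mathbf{i}_k}$ together with Lemma~\ref{lemma:move masses lemma}.

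There is, however, a real gap in your description of Step~2. You write that the $k'$-th summand in \eqref{eq:sum-lemma8} ``transports mass only between subcubes that agree in every coordinate except the $k'$-th.'' That is not correct: in \eqref{eq:sum-lemma8} the $+$ part of the $k'$-th term carries the factor $\tilde\Upsilon(\mathbf{i},d,k'+1)$ while the $-$ part carries $\Upsilon(\mathbf{i},d,k'+1)$, and both of these products involve $\tilde n_{\mathbf{i}_j}$, resp.\ $n_{\mathbf{i}_j}$, for $j>k'$, which depend on $i_{k'}$ through $\mathbf{i}_j$. Consequently, summing the $k'$-th term over $i_{k'}$ with the remaining indices frozen does \emph{not} yield zero, so it cannot be read off as a mass-preserving transport in the single coordinate $k'$.

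This is precisely why the paper does not work directly with \eqref{eq:sum-lemma8} for $k=d$. Instead it introduces an inductive sequence of \emph{intermediate} configurations (the expression \eqref{transportation-kth-dimension}) and shows that passing from the configuration at level $k$ to that at level $k+1$ requires two kinds of moves: a direct move in coordinate $k+1$ of total mass at most $(n-1)\delta$, and then, for each $k'\le k$, a reconfiguration in coordinate $k+1$ of the mass previously injected at level $k'$, each such reconfiguration again costing at most $(n-1)\delta$. That second family of moves --- the reconfigurations --- is what supplies the factor $k'$ you correctly anticipate but do not derive, and it is also what keeps the marginals in coordinates $1,\dots,k$ intact at every stage. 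Your ``collapse of the context products'' intuition is exactly what the paper exploits in those reconfiguration estimates, but the mechanism is not a per-$k'$ single-coordinate transport; it is a per-level transport with $k$ auxiliary reconfigurations.
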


\begin{proof}
The proof proceeds in three steps as follows. 
\begin{enumerate}

\item For each $\mathbf{i}_d\in [0\!:\!(n-1)]^d$, we redistribute the mass $m_{\mathbf{i}_d}=\nu(c_{\mathbf{i}_d})$ to a point mass concentrated in the corner point $p_{\mathbf{i}_d}$.
\vspace*{1mm}

\item We transport the masses $m_{\mathbf{i}_d}$ according to the procedure described above to result in the masses
$\tilde{m}_{\mathbf{i}_d}$, still located at $p_{\mathbf{i}_d}$.
\vspace*{1mm}

\item For each $\mathbf{i}_d\in [0\!:\!(n-1)]^d$, we spread out the mass $\tilde{m}_{\mathbf{i}_d}$ uniformly across the subcube indexed by $\mathbf{i}_{d}$.
\end{enumerate}

\noindent\textit{Step 1.} 
We define the distribution 
\[
	\nu' = \sum_{\mathbf{i}_d\in [0:(n-1)]^d} m_{\mathbf{i}_d} \delta_{p_{\mathbf{i}_d}}
\]
and note that transporting $\nu$ to $\nu'$ incurs transportation cost
\begin{equation}
    \label{subdistance}
    W(\nu,\nu') \leq \sum_{\mathbf{i}_d} m_{\mathbf{i}_d} \underbrace{\sqrt{\paren*{\frac{1}{n}}^2 + \dotsb + \paren*{\frac{1}{n}}^2}}_{\text{maximum distance in each subcube}} = \sum_{\mathbf{i}_d} m_{\mathbf{i}_d} \frac{\sqrt{d}}{n} = \frac{\sqrt{d}}{n}.
\end{equation}

\noindent\textit{Step 2.} To redistribute the masses from the original values $m_{\mathbf{i}_d}$ to the quantized values $\tilde{m}_{\mathbf{i}_d}$, we proceed coordinate by coordinate.
Specifically, in the $k$-th coordinate, we carry out two (sets of) transportations. 
The first one moves, for fixed $i_1,  \dotsc  ,i_{k-1}, i_{k+1},  \dotsc  , i_d$, mass from the point $p_{(i_1,  \dotsc  ,i_{k-1}, i_{k}^{*(\mathbf{i}_{k-1})}, i_{k+1},  \dotsc  , i_d)}$ to the points $p_{(i_1,  \dotsc  ,i_{k-1}, i_{k}, i_{k+1},  \dotsc  , i_d)}$, for all $i_{k} \neq i_{k}^{*(\mathbf{i}_{k-1})}$, and does this for all tuples $i_1,  \dotsc  ,i_{k-1}, i_{k+1},  \dotsc  , i_d$. The second set of transportations reconfigures masses in the coordinates $[1 \! : \! (k-1)]$ so as to obtain the correct marginals in coordinate $k$. These reconfigurations moreover preserve the marginals in coordinates $[1 \! : \! (k-1)]$. We make all this precise through the following claim, proved below after Step~3 has been presented.\\

\noindent \textbf{Claim:} 
Reconfiguring the masses between the corner points such that the 
mass in the point $p_{\mathbf{i}_{d}}$ is given by
\begin{align}
	m_{\mathbf{i}_d} & + \sum_{k'=1}^k \chi_{[0:(n-1)] \setminus \bracket*{i_{k'}^{*(\mathbf{i}_{k'-1})}}}(i_{k'}) \ \paren*{\frac{m_{\paren*{i_1, \dotsc, i_{k'}^{*(\mathbf{i}_{k'-1})}, i_{k+1}, \dotsc, i_d}}}{m_{\paren*{i_1, \dotsc, i_{k'}^{*(\mathbf{i}_{k'-1})}}}}} \nonumber\\
	& \phantom{===========} \tilde{\Upsilon}(\mathbf{i},k,k'+1) \paren*{\tilde{n}_{\mathbf{i}_{k'}} - n_{\mathbf{i}_{k'}}} \Upsilon^{\eta}(\mathbf{i},k'-1,1) \label{transportation-kth-dimension}\\
		& - \sum_{k'=1}^k \chi_{\bracket*{i_{k'}^{*(\mathbf{i}_{k'-1})}}}(i_{k'}) \ \Upsilon(\mathbf{i},d,k'+1) \paren*{n_{\mathbf{i}_{k'}} - \tilde{n}_{\mathbf{i}_{k'}} } \Upsilon^{\eta}(\mathbf{i},k'-1,1), \nonumber
\end{align}
where 
\[
	m_{\paren*{i_1, \dotsc, i_{k'}^{*(\mathbf{i}_{k'-1})}, i_{k+1}, \dotsc, i_d}} := \sum_{i_{k'+1},\dotsc,i_k} m_{\paren*{i_1, \dotsc, i_{k'}^{*(\mathbf{i}_{k'-1})}, i_{k'+1}, \dotsc, i_d}},
\]
yields the correct marginal masses $\tilde{m}_{\mathbf{i}_{k'}}$ in all coordinates $k' \in [1 \! : \! k]$ and comes at a Wasserstein cost of 
at most $k(n-1) \delta$, i.e., the Wasserstein distance between the configuration of masses before the moves and the configuration after the moves is at most $k(n-1) \delta$.
There is a slight complication when $m_{\paren*{i_1, \dotsc, i_{k'}^{*(\mathbf{i}_{k'-1})}}} = 0$ as in this case the fraction in (\ref{transportation-kth-dimension}) is technically undefined. However, analogously to the definition of the $n_{\mathbf{i}_k}$ in the case of zero-masses in the discussion preceding this theorem, we take 
\[
	\paren*{\frac{m_{\paren*{i_1, \dotsc, i_{k'}^{*(\mathbf{i}_{k'-1})}, i_{k+1}, \dotsc, i_d}}}{m_{\paren*{i_1, \dotsc, i_{k'}^{*(\mathbf{i}_{k'-1})}}}}} := \paren*{\frac{1}{n}}^{d-k}
\]
when $m_{\paren*{i_1, \dotsc, i_{k'}^{*(\mathbf{i}_{k'-1})}}} = 0$.
In either case, we have 
\[
	 \sum_{i_{k+1},\dotsc,i_d} \paren*{\frac{m_{\paren*{i_1, \dotsc, i_{k'}^{*(\mathbf{i}_{k'-1})}, i_{k+1}, \dotsc, i_d}}}{m_{\paren*{i_1, \dotsc, i_{k'}^{*(\mathbf{i}_{k'-1})}}}}} = 1.
\]

We note that the transport map in the Claim characterizes, at a high level, the state of the masses at an intermediate step in the transportation, while (\ref{eq:sum-lemma8}) describes the ``final state'' after all the moves have been completed in coordinate $k$.

If we accept the claim and apply it for $k=d$ in combination with Lemma \ref{lemma:move masses lemma}, it follows that the masses $m_{\mathbf{i}_d}$ are, indeed, redistributed to the masses $\tilde{m}_{\mathbf{i}_d}$. Moreover, we get that the total cost of the transportations in Step 2 effecting this redistribution is upper-bounded by
\[
	(n-1) \, \delta + 2 \, (n-1) \, \delta + \dotsm + d\, (n-1) \, \delta = \frac{d(d+1)}{2} (n-1) \, \delta.
\]

\noindent\textit{Step 3.} The Wasserstein cost associated with spreading out the masses $\tilde{m}_{\mathbf{i}_d}$ uniformly across their associated subcubes follows 
from (\ref{subdistance}) as
\[
	W \paren*{ \sum_{\mathbf{i}_d} \tilde{m}_{\mathbf{i}_d} \delta_{p_{\mathbf{i}_d}} , \mu } \leq \frac{\sqrt{d}}{n}.
\]

Using the fact that Wasserstein distance is a metric, we can put the costs incurred in the individual steps together according to
\begin{align*}
	W\paren*{\mu,\nu} & \leq \underbrace{W\paren*{\nu,\sum_{\mathbf{i}_d} m_{\mathbf{i}_d} \delta_{p_{\mathbf{i}_d}}}}_{\text{Step 1}} + \underbrace{W\paren*{\sum_{\mathbf{i}_d} m_{\mathbf{i}_d} \delta_{p_{\mathbf{i}_d}}, \sum_{\textbf{i}_{d}} \tilde{m}_{\textbf{i}_{d}} \delta_{p_{\textbf{i}_{d}}}}}_{\text{Step 2}} + \underbrace{W \paren*{ \sum_{\textbf{i}_d} \tilde{m}_{\textbf{i}_d} \delta_{p_{\textbf{i}_d}}, \mu}}_{\text{Step 3}} \\
	& \leq \frac{\sqrt{d}}{n} + \frac{d(d+1)}{2} \, (n-1) \, \delta + \frac{\sqrt{d}}{n} \\
	& = \frac{2\sqrt{d}}{n} + \frac{d(d+1)}{2} (n-1) \, \delta.
\end{align*}
It remains to prove the claim.\\

\noindent \textit{Proof of the Claim.} We proceed by induction on $k$ and start with the base case $k=1$. The statement on the transportation cost associated with (\ref{transportation-kth-dimension}) does not need an induction argument, rather it follows as a byproduct of the proof by induction. Evaluating the transport map for $k=1$ yields
\begin{align*}
	m_{\mathbf{i}_d} & +  \chi_{[0:(n-1)] \setminus \bracket*{i_{1}^{*(\mathbf{i}_{0})}}}(i_{1})  \paren*{\frac{m_{\paren*{i_{1}^{*(\mathbf{i}_{0})}, i_{2}, \dotsc, i_d}}}{m_{i_{1}^{*(\mathbf{i}_{0})}}}} (\tilde{m}_{i_1} - m_{i_1})
		 \\&-  \chi_{\bracket*{i_{1}^{*(\mathbf{i}_{0})}}}(i_{1}) \ \paren*{\frac{m_{\paren*{i_{1}^{*(\mathbf{i}_{0})}, i_{2}, \dotsc, i_d}}}{m_{i_{1}^{*(\mathbf{i}_{0})}}}}   (m_{i_1}-\tilde{m}_{i_1}).
\end{align*}
\iffalse
For each $i_1 \neq i_1^{*(\mathbf{i}_0)}$, we move a mass of
\[
	\paren*{\frac{m_{\paren*{i_1^{*(\mathbf{i}_0)}, i_2 \dotsc, i_d}}}{m_{i_1^{*(\mathbf{i}_0)}}}} \paren*{\tilde{m}_{i_1} - m_{i_1}}
\]
from $p_{\paren*{i_1^{*(\mathbf{i}_0)},i_2,...,i_d}}$ to $p_{\paren*{i_1,i_2,...,i_d}}$ for all $i_2, ..., i_d$. This leaves a mass of 
\[
	m_{\mathbf{i}_d} + \paren*{\frac{m_{\paren*{i_1^{*(\mathbf{i}_0)}, i_2 \dotsc, i_d}}}{m_{i_1^{*(\mathbf{i}_0)}}}} \paren*{\tilde{m}_{i_1} - m_{i_1}}
\]
at $p_{\paren*{i_1,i_2,...,i_d}}$ for all $i_1 \neq i_1^{*(\mathbf{i}_0)}$ and a mass of 
\[
	m_{\mathbf{i}_d} - \paren*{\frac{m_{\paren*{i_1^{*(\mathbf{i}_0)}, i_2 \dotsc, i_d}}}{m_{i_1^{*(\mathbf{i}_0)}}}} \paren*{m_{i_1} - \tilde{m}_{i_1}}
\]
at $p_{\paren*{i_1,i_2,...,i_d}}$ for $i_1 = i_1^{*(\mathbf{i}_0)}$. There are no masses to reconfigure here, as there was no previous step. 
\fi
Since masses are moved in the first coordinate only and $\paren*{\tilde{m}_{i_1} - m_{i_1}} \leq \delta$, for $i_1 \neq i_1^{*(\mathbf{i}_0)}$, the Wasserstein cost of the overall transportation satisfies
\[
	\sum_{i_1 \neq i_1^{*(\mathbf{i}_0)}} \sum_{i_2, \dotsc, i_d} \paren*{\frac{m_{\paren*{i_1^{*(\mathbf{i}_0)}, i_2, \dotsc, i_d}}}{m_{i_1^{*(\mathbf{i}_0)}}}} \paren*{\tilde{m}_{i_1} - m_{i_1}} \leq (n-1) \,\delta.
\]
Furthermore, we obtain the desired marginal masses in $i_1$ as a consequence of
\[
	\sum_{i_2, \dotsc, i_d} \left( m_{\mathbf{i}_d} + \paren*{\frac{m_{\paren*{i_1^{*(\mathbf{i}_0)}, i_2, \dotsc, i_d}}}{m_{i_1^{*(\mathbf{i}_0)}}}} \paren*{\tilde{m}_{i_1} - m_{i_1}} \right)  = m_{i_1} + \paren*{\tilde{m}_{i_1} - m_{i_1}} = \tilde{m}_{i_1}.
\]
This completes the proof of the base case. 

We proceed to establish the induction step. Assume that transportations were conducted in coordinate $k$ according to (\ref{transportation-kth-dimension}) and that all marginal masses up to and including coordinate $k$ are as desired.
We consider the transport equation (\ref{transportation-kth-dimension}) in coordinate $k+1$, i.e., the sums in (\ref{transportation-kth-dimension}) range from $1$ to $k+1$ and
start by pointing out that
\[
	n_{(i_1, \dotsc, i_{k+1}^{*(\mathbf{i}_{k})}, \dotsc, i_d)} n_{(i_1, \dotsc, i_{k+1}^{*(\mathbf{i}_{k})} ,\dotsc, i_{d-1})} \dotsm \, n_{(i_1, \dotsc, i_{k+1}^{*(\mathbf{i}_{k})},i_{k+2})} = \paren*{\frac{m_{\paren*{i_1, \dotsc, i_{k}, i_{k+1}^{*(\mathbf{i}_{k})}, i_{k+2}, \dotsc, i_d}}}{m_{\paren*{i_1, \dotsc, i_{k}, i_{k+1}^{*(\mathbf{i}_{k})}}}}}.
\]
The first set of transportations (corresponding to the index $k'=k+1$ in the transport equation (\ref{transportation-kth-dimension}) evaluated for coordinate $k+1$) hence amounts to moving,
for fixed $i_1,\dotsc,i_{k}, i_{k+2}, \dotsc,i_d$, the mass
\[
	\paren*{\frac{m_{\paren*{i_1, \dotsc, i_{k}, i_{k+1}^{*(\mathbf{i}_{k})}, i_{k+2}, \dotsc, i_d}}}{m_{\paren*{i_1, \dotsc, i_{k}, i_{k+1}^{*(\mathbf{i}_{k})}}}}} \paren*{n_{{\mathbf{i}}_{k+1}} - \tilde{n}_{{\mathbf{i}}_{k+1}}} \Upsilon^{\eta}(\mathbf{i},k,1)
\]
out of the point $p_{\paren*{i_1, \dotsc,i_k, i_{k+1}^{*(\mathbf{i}_{k})}, i_{k+2}, \dotsc, i_d}}$
and redistributing it across the points
$p_{\paren*{i_1, \dotsc, i_k, i_{k+1}, i_{k+2}, \dotsc, i_d}}$, for $i_{k+1} \neq i_{k+1}^{*(\mathbf{i}_{k})}$. Note that
for $i_{k+1}=i_{k+1}^{*(\mathbf{i}_{k})}$, the quantity 
$\paren*{n_{{\mathbf{i}}_{k+1}} - \tilde{n}_{{\mathbf{i}}_{k+1}}}$ is positive by definition of $\tilde{n}$. These transportations are conducted for all possible tuples $i_1,\dotsc,i_{k}, i_{k+2}, \dotsc,i_d$. The Wasserstein cost associated with the collection of these transportations satisfies
\begin{align*}
	\sum_{i_1, \dotsc ,i_k} & \sum_{i_{k+1} \neq i_{k+1}^{*(\mathbf{i}_{k})}} \sum_{i_{k+2}, \dotsc, i_d} \paren*{\frac{m_{\paren*{i_1, \dotsc, i_{k}, i_{k+1}^{*(\mathbf{i}_{k})}, i_{k+2}, \dotsc, i_d}}}{m_{\paren*{i_1, \dotsc, i_{k}, i_{k+1}^{*(\mathbf{i}_{k})}}}}} \paren*{\tilde{n}_{{\mathbf{i}}_{k+1}} - n_{{\mathbf{i}}_{k+1}}} \Upsilon^{\eta}(\mathbf{i},k,1) \\
	& = \sum_{i_1, \dotsc ,i_k} \sum_{i_{k+1} \neq i_{k+1}^{*(\mathbf{i}_{k})}} \paren*{\tilde{n}_{{\mathbf{i}}_{k+1}} - n_{{\mathbf{i}}_{k+1}}} \Upsilon^{\eta}(\mathbf{i},k,1) \\
	& \leq (n-1) \, \delta \sum_{i_1, \dotsc ,i_k} \Upsilon^{\eta}(\mathbf{i},k,1) \\
	& \leq (n-1) \, \delta \sum_{i_1, \dotsc ,i_k} \Upsilon(\mathbf{i},k,1) \\
	& = (n-1) \, \delta,
\end{align*}
where the last inequality follows because $\eta_{\mathbf{i}_{k}} (i_{k}) = \tilde{n}_{\mathbf{i}_{k}}$ exactly when $i_k = i_k^{*(\mathbf{i}_{k-1})}$, in which case we have
$\tilde{n}_{\mathbf{i}_{k}} \leq n_{\mathbf{i}_{k}}$. The second set of transportations reconfigures the masses in the coordinates $k' \leq k$ in order to obtain correct marginals in the $(k+1)$-th coordinate. To this end, we first note that, for each $k' \leq k$, for all $i_{k'}$, the following identity holds
\[
	\paren*{\frac{m_{\paren*{i_1, \dotsc, i_{k'}^{*(\mathbf{i}_{k'-1})}, i_{k+1}, \dotsc , i_d}}} {m_{\paren*{i_1, \dotsc, i_{k'}^{*(\mathbf{i}_{k'-1})}}}}} = 
	\underbrace{\paren*{\frac{m_{\paren*{i_1, \dotsc, i_{k'}^{*(\mathbf{i}_{k'-1})}, i_{k+2}, \dotsc , i_d}}} {m_{\paren*{i_1, \dotsc, i_{k'}^{*(\mathbf{i}_{k'-1})}}}}}}_{\text{independent of } i_{k+1}}
	\underbrace{\paren*{\frac{m_{\paren*{i_1, \dotsc, i_{k'}^{*(\mathbf{i}_{k'-1})}, i_{k+1}, \dotsc , i_d}}} {m_{\paren*{i_1, \dotsc, i_{k'}^{*(\mathbf{i}_{k'-1})}, i_{k+2}, \dotsc , i_d}}}}}_{\text{sums to 1 in } i_{k+1}},
\]
as long as $m_{\paren*{i_1, \dotsc, i_{k'}^{*(\mathbf{i}_{k'-1})}, i_{k+2}, \dotsc , i_d}} \neq 0$. If $m_{\paren*{i_1, \dotsc, i_{k'}^{*(\mathbf{i}_{k'-1})}, i_{k+2}, \dotsc , i_d}} = 0$, then, as usual, we set 
\[
	\paren*{\frac{m_{\paren*{i_1, \dotsc, i_{k'}^{*(\mathbf{i}_{k'-1})}, i_{k+1}, \dotsc , i_d}}} {m_{\paren*{i_1, \dotsc, i_{k'}^{*(\mathbf{i}_{k'-1})}, i_{k+2}, \dotsc , i_d}}}} := \frac{1}{n}.
\]
Specifically, noting that by the induction assumption transportation according to (\ref{transportation-kth-dimension}) was carried out in coordinate $k$, we would like to reconfigure, for each $k' \le k$, the masses
\begin{align}
	\paren*{\frac{m_{\paren*{i_1, \dotsc, i_{k'}^{*(\mathbf{i}_{k'-1})}, i_{k+2}, \dotsc , i_d}}} {m_{\paren*{i_1, \dotsc, i_{k'}^{*(\mathbf{i}_{k'-1})}}}}}
	\paren*{\frac{m_{\paren*{i_1, \dotsc, i_{k'}^{*(\mathbf{i}_{k'-1})}, i_{k+1}, \dotsc , i_d}}} {m_{\paren*{i_1, \dotsc, i_{k'}^{*(\mathbf{i}_{k'-1})}, i_{k+2}, \dotsc , i_d}}}}
&\tilde{\Upsilon}(\mathbf{i},k,k'+1) \nonumber \\
	& \paren*{\tilde{n}_{\mathbf{i}_{k'}} - n_{\mathbf{i}_{k'}} } \Upsilon^{\eta}(\mathbf{i},k'-1,1) \label{source-mass}
\end{align}
into
\begin{equation}
	\paren*{\frac{m_{\paren*{i_1, \dotsc, i_{k'}^{*(\mathbf{i}_{k'-1})}, i_{k+2}, \dotsc , i_d}}} {m_{\paren*{i_1, \dotsc, i_{k'}^{*(\mathbf{i}_{k'-1})}}}}}
	\tilde{n}_{\mathbf{i}_{k+1}}
\tilde{\Upsilon}(\mathbf{i},k,k'+1) \paren*{\tilde{n}_{\mathbf{i}_{k'}} - n_{\mathbf{i}_{k'}} } \Upsilon^{\eta}(\mathbf{i},k'-1,1). \label{target-mass}
\end{equation}
This reconfiguration is possible as only one term in each (\ref{source-mass}) and (\ref{target-mass}) depends on $i_{k+1}$ and 
\[
	\sum_{i_{k+1}} \tilde{n}_{\mathbf{i}_{k+1}} = 1 = \sum_{i_{k+1}} \paren*{\frac{m_{\paren*{i_1, \dotsc, i_{k'}^{*(\mathbf{i}_{k'-1})}, i_{k+1}, \dotsc , i_d}}} {m_{\paren*{i_1, \dotsc, i_{k'}^{*(\mathbf{i}_{k'-1})}, i_{k+2}, \dotsc , i_d}}}}.
\]
\iffalse
so we are in essence reconfiguring the total mass of
\[
	\paren*{\frac{m_{\paren*{i_1, \dotsc, i_{k'}^{*(\mathbf{i}_{k'-1})}, i_{k+2}, \dotsc , i_d}}} {m_{\paren*{i_1, \dotsc, i_{k'}^{*(\mathbf{i}_{k'-1})}}}}}
\tilde{\Upsilon}(\mathbf{i},k,k'+1) 
	\paren*{\tilde{n}_{\mathbf{i}_{k'}} - n_{\mathbf{i}_{k'}} } \Upsilon^{\eta}(\mathbf{i},k'-1,1)
\]
in the $(k+1)$-th coordinate. 
\fi
Masses to be moved in this manner appear for
all $i_1,...,i_{k'-1},i_{k'},i_{k'+1}, \dotsc, i_{k}, i_{k+2}, \dotsc, i_d$ with $i_{k'} \neq i_{k'}^{*(\mathbf{i}_{k'-1})}$. It follows by inspection of the transport map (\ref{transportation-kth-dimension}) that these transportations do not alter the marginals for $k'\leq k$ as, for given $k'$, the mass moved out of the point $p_{\paren*{i_1, \dotsc,i_{k'-1}, i_{k'}^{*(\mathbf{i}_{k'-1})}, i_{k'+1}, \dotsc, i_d}}$, accounted for by the sum with negative sign in (\ref{transportation-kth-dimension}), equals the total mass moved into the points $p_{\paren*{i_1, \dotsc, i_{k'-1}, i_{k'}, i_{k'+1}, \dotsc, i_d}}$, for $i_{k'} \neq i_{k'}^{*(\mathbf{i}_{k'}-1)}$, accounted for by the sum with positive sign. These moves hence retain the marginals for $k' \leq k$, which are correct by the induction assumption. Before establishing that the desired marginals in coordinate $k+1$ are obtained, we compute the Wasserstein cost associated with the mass reconfiguration moves according to
\begin{align*}
	\sum_{i_1,...,i_{k'-1}} & \sum_{i_{k'} \neq i_{k'}^{*(\mathbf{i}_{k'-1})}} \sum_{i_{k'+1}, \dotsc, i_{k}} \sum_{i_{k+2}, \dotsc, i_d} \paren*{\frac{m_{\paren*{i_1, \dotsc, i_{k'}^{*(\mathbf{i}_{k'-1})}, i_{k+2}, \dotsc , i_d}}} {m_{\paren*{i_1, \dotsc, i_{k'}^{*(\mathbf{i}_{k'-1})}}}}}
\tilde{\Upsilon}(\mathbf{i},k,k'+1) \\
	& \phantom{========================} 
\paren*{\tilde{n}_{\mathbf{i}_{k'}} - n_{\mathbf{i}_{k'}} } \Upsilon^{\eta}(\mathbf{i},k'-1,1)\\
	& = \sum_{i_1,...,i_{k'-1}} \sum_{i_{k'} \neq i_{k'}^{*(\mathbf{i}_{k'-1})}} \sum_{i_{k'+1}, \dotsc, i_{k}} \tilde{\Upsilon}(\mathbf{i},k,k'+1) \paren*{\tilde{n}_{\mathbf{i}_{k'}} - n_{\mathbf{i}_{k'}} } \Upsilon^{\eta}(\mathbf{i},k'-1,1)\\
	& = \sum_{i_1,...,i_{k'-1}} \sum_{i_{k'} \neq i_{k'}^{*(\mathbf{i}_{k'-1})}} \underbrace{\paren*{\tilde{n}_{\mathbf{i}_{k'}} - n_{\mathbf{i}_{k'}}}}_{\leq \delta} \Upsilon^{\eta}(\mathbf{i},k'-1,1)\\
	& \leq (n-1) \delta \sum_{i_1,...,i_{k'-1}} \Upsilon^{\eta}(\mathbf{i},k'-1,1)\\
	& \leq (n-1) \delta \sum_{i_1,...,i_{k'-1}} \Upsilon(\mathbf{i},k'-1,1) \\
	& = (n-1) \delta.
\end{align*}
We must carry this out for all $k' \in [1\! : \!k]$, which results in a Wasserstein cost of $k (n-1) \delta$ for the reconfigurations. Altogether, we have a Wasserstein cost of
\[
	(k+1)(n-1) \delta
\]
incurred by the moves corresponding to coordinate $k+1$.

Next, using $ \tilde{n}_{\mathbf{i}_{k+1}} \tilde{\Upsilon}(\mathbf{i},k,k'+1) = \tilde{\Upsilon}(\mathbf{i},k+1,k'+1)$, it follows that the updated mass in the point $p_{\mathbf{i}_d}$ is given by
\begin{align*}
	m_{\mathbf{i}_d} & + \sum_{k'=1}^{k+1} \chi_{[0:(n-1)] \setminus \bracket*{i_{k'}^{*(\mathbf{i}_{k'-1})}}}(i_{k'}) \ \paren*{\frac{m_{\paren*{i_1, \dotsc, i_{k'}^{*(\mathbf{i}_{k'-1})}, i_{k+2}, \dotsc, i_d}}}{m_{\paren*{i_1, \dotsc, i_{k'}^{*(\mathbf{i}_{k'-1})}}}}}\\
	& \phantom{========} \tilde{\Upsilon}(\mathbf{i},k+1,k'+1) \paren*{\tilde{n}_{\mathbf{i}_{k'}} - n_{\mathbf{i}_{k'}}} \Upsilon^{\eta}(\mathbf{i},k'-1,1) \\
		& - \sum_{k'=1}^{k+1} \chi_{\bracket*{i_{k'}^{*(\mathbf{i}_{k'-1})}}}(i_{k'}) \ \Upsilon(\mathbf{i},d,k'+1) \paren*{n_{\mathbf{i}_{k'}} - \tilde{n}_{\mathbf{i}_{k'}} } \Upsilon^{\eta}(\mathbf{i},k'-1,1).
\end{align*}
Finally, we need to check that the marginals in the $(k+1)$-th coordinate are, indeed, given by $\tilde{m}_{\mathbf{i}_{k+1}}$.
This is accomplished by noting that owing to Lemma \ref{lemma:move masses lemma}, we have
\begin{align*}
\sum_{i_{k+2} ,\dotsc, i_d} &\left[ m_{\mathbf{i}_d} + \sum_{k'=1}^{k+1} \chi_{[0:(n-1)] \setminus \bracket*{i_{k'}^{*(\mathbf{i}_{k'-1})}}}(i_{k'}) \ \paren*{\frac{m_{\paren*{i_1, \dotsc, i_{k'}^{*(\mathbf{i}_{k'-1})}, i_{k+2}, \dotsc, i_d}}}{m_{\paren*{i_1, \dotsc, i_{k'}^{*(\mathbf{i}_{k'-1})}}}}} \right.\\
	& \phantom{============} \tilde{\Upsilon}(\mathbf{i},k+1,k'+1) \paren*{\tilde{n}_{\mathbf{i}_{k'}} - n_{\mathbf{i}_{k'}}} \Upsilon^{\eta}(\mathbf{i},k'-1,1) \\
	&\left. - \sum_{k'=1}^{k+1} \chi_{\bracket*{i_{k'}^{*(\mathbf{i}_{k'-1})}}}(i_{k'}) \ \Upsilon(\mathbf{i},d,k'+1) \paren*{n_{\mathbf{i}_{k'}} - \tilde{n}_{\mathbf{i}_{k'}} } \Upsilon^{\eta}(\mathbf{i},k'-1,1) \right] \\
	& = m_{\mathbf{i}_{k+1}} \\
	& + \sum_{k'=1}^{k+1} \chi_{[0:(n-1)] \setminus \bracket*{i_{k'}^{*(\mathbf{i}_{k'-1})}}}(i_{k'}) \ \tilde{\Upsilon}(\mathbf{i},k+1,k'+1) \paren*{\tilde{n}_{\mathbf{i}_{k'}} - n_{\mathbf{i}_{k'}}} \Upsilon^{\eta}(\mathbf{i},k'-1,1) \\
	& - \sum_{k'=1}^{k+1} \chi_{\bracket*{i_{k'}^{*(\mathbf{i}_{k'-1})}}}(i_{k'}) \ \Upsilon(\mathbf{i},k+1,k'+1) \paren*{n_{\mathbf{i}_{k'}} - \tilde{n}_{\mathbf{i}_{k'}} } \Upsilon^{\eta}(\mathbf{i},k'-1,1)\\
	& = \tilde{m}_{\mathbf{i}_{k+1}}.
\end{align*}
This concludes the proof of the induction step and hence establishes the claim.
\end{proof}

\section{Approximation of arbitrary distributions on bounded subsets of \texorpdfstring{$\R^d$}{Rd} with generative ReLU networks}\label{sec:approximation-arbitrary-sets}

In this section, we put all the pieces developed together and state the main result of this paper.

\begin{theorem} \label{thm:main result on [0,1]}
For every distribution $\nu$ supported on $[0,1]^d$, there exists a $\frac{\delta}{n}$-quantized ReLU network $\Phi \in \cNN_{1,d}$ with $\mathcal{M}(\Phi) = \mathcal{O}(n^{d} + sn^{d-1})$ and $\mathcal{L}(\Phi)=(s+3)d - s -1$ such that 
\begin{equation}
	W(\Phi \# U, \nu) \leq \frac{\sqrt{d}}{n2^s} + \frac{2\sqrt{d}}{n} + \frac{d(d+1)}{2} (n-1) \delta. \label{main-result-w}
\end{equation}
\end{theorem}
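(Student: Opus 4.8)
The plan is to assemble the claimed network by chaining together the two main building blocks established above and then invoking the triangle inequality for Wasserstein distance. Since $W(\cdot,\cdot)$ is a metric, it suffices to (a) approximate the arbitrary target $\nu$ by a $\delta$-quantized histogram distribution $\mu \in \tilde{\mathcal{E}}_\delta[0,1]_n^d$ of resolution $n$ and (b) realize $\mu$ (or a sufficiently close approximation thereof) by a $\frac{\delta}{n}$-quantized generative ReLU network of the prescribed connectivity and depth.

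For step (a) I would simply apply Theorem~\ref{thm:quantization bound}, which produces such a $\mu$ with $W(\mu,\nu) \leq \frac{2\sqrt d}{n} + \frac{d(d+1)}{2}(n-1)\delta$. For step (b), when $d>1$, I would apply Theorem~\ref{main_histNN_theorem} to $\mu$, obtaining a $\frac{\delta}{n}$-quantized $\Phi \in \cNN_{1,d}$ with $\mathcal{M}(\Phi) = \mathcal{O}(n^d + sn^{d-1})$, $\mathcal{L}(\Phi) = (s+3)d - s - 1$, and $W(\Phi\#U,\mu) \leq \frac{\sqrt d}{n2^s}$. The triangle inequality then gives $W(\Phi\#U,\nu) \leq W(\Phi\#U,\mu) + W(\mu,\nu)$, and summing the two bounds yields precisely (\ref{main-result-w}).

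The case $d=1$ must be treated separately, since Theorem~\ref{main_histNN_theorem} is stated for $d>1$. Here I would realize $\mu$ \emph{exactly}: by Corollary~\ref{1d_theorem_hist} the piecewise linear function $f(x) = \sum_{i=0}^{n-1} a_i\rho(x-b_i)$ satisfies $f\#U = \mu$, and, exactly as in the proof of Lemma~\ref{NN_encoding_supportive}, this $f$ is implemented by a ReLU network $\Phi \in \cNN_{1,1}$ with $\mathcal{M}(\Phi) = \mathcal{O}(n) = \mathcal{O}(n + s)$ and $\mathcal{L}(\Phi) = 2 = (s+3)d - s - 1$, all of whose weights are of Type~$1$ or Type~$2$ with respect to $\frac{\delta}{n}$-quantization---the slopes $1/w_i$ being of Type~$2$ since $w_i \in \delta\N \subseteq \frac{\delta}{n}\N$ and $0 < w_i \le n \le n/\delta$, and the breakpoints $b_i = \frac{1}{n}\sum_{k<i}w_k$ being of Type~$1$. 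Then $W(\Phi\#U,\mu) = 0 \le \frac{\sqrt d}{n2^s}$, and the triangle inequality again yields (\ref{main-result-w}).

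I do not anticipate a genuine obstacle in this concluding assembly---the technical substance of the paper resides in Theorems~\ref{general_theorem}, \ref{main_histNN_theorem}, and \ref{thm:quantization bound}. The only points requiring care are making the $d=1$ case conform to the uniform statement of the theorem (in particular, checking that the single-hidden-layer scalar realization meets the connectivity, depth, and $\frac{\delta}{n}$-quantization budget claimed for general $d$) and confirming that the quantization level of the composed network is $\frac{\delta}{n}$ in all cases, so that both the error bound and the network description hold for every $d\in\N$.
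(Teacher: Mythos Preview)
Your proposal is correct and follows essentially the same approach as the paper: apply Theorem~\ref{thm:quantization bound} to produce $\mu$, apply Theorem~\ref{main_histNN_theorem} to realize $\mu$ via $\Phi$, and combine via the triangle inequality for $W$. In fact you are slightly more thorough than the paper, which does not separately address the case $d=1$ (Theorem~\ref{main_histNN_theorem} is stated only for $d>1$); your direct realization of $\mu$ via Corollary~\ref{1d_theorem_hist} cleanly fills that gap.
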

\begin{proof}
The proof follows by application of the triangle inequality for Wasserstein distance in combination with Theorems~\ref{main_histNN_theorem} and \ref{thm:quantization bound} according to
\begin{equation*}
W(\Phi \# U, \nu) \leq W(\Phi \# U, \mu) + W(\mu, \nu) \leq \frac{\sqrt{d}}{n2^s} + \frac{2\sqrt{d}}{n} + \frac{d(d+1)}{2} (n-1) \delta,
\end{equation*}
where $\mu$ denotes the $\delta$-quantized histogram distribution of resolution $n$ per Theorem~\ref{thm:quantization bound}.
\end{proof}

When the target distribution is uniform, Theorem~\ref{thm:main result on [0,1]} recovers \cite[Theorem 2.1]{Bailey2019}. We can simplify the bound (\ref{main-result-w}) by setting $\delta = \frac{1}{\lceil \sqrt{d} (d+1) n (n-1) \rceil}$ (which satisfies the requirement $\delta < \frac{1}{n(n-1)}$ guaranteeing that all quantized weights are positive) to obtain, for $n \geq 2$,
\begin{equation}
    \label{main_bound}
\begin{aligned}
	W(\Phi \# U, \nu) \leq \frac{\sqrt{d}}{n2^s} + \frac{2 \sqrt{d}}{n} + \frac{\sqrt{d}}{2n} = \frac{\sqrt{d}}{n2^s} + \frac{5 \sqrt{d}}{2n}.
\end{aligned}
\end{equation}
The error bound in (\ref{main_bound}) illustrates the main conceptual insight of this paper, namely that generating arbitrary $d$-dimensional distributions from a $1$-dimensional uniform distribution by pushforward through a deep ReLU network does not come at a cost---in terms of Wasserstein-distance error---relative to generating the target distribution from $d$ independent random variables. 
Specifically, if we let the depth $s$ of the generating network go to infinity, the first term in the rightmost expression of (\ref{main_bound}) will go to zero exponentially fast in $s$---thanks to the space-filling property of the transport map realized by the generating network---leaving us only with the second term, which reflects the error stemming from the histogram approximation of the distribution. Moreover, this second term is inversely proportional to the histogram resolution $n$ and can thus be made arbitrarily small by letting the histogram resolution $n$ approach infinity. The width of the corresponding generating network will grow according to $\mathcal{O}(n^d)$. 

Theorem \ref{thm:main result on [0,1]} applies to distributions supported on the unit cube $[0,1]^d$. The extension to distributions supported on bounded subsets of $\R^d$ is, however, fairly straightforward. Before stating this extension, we provide a lemma that will help us deal with the scaling and shifting of distributions.

\begin{lemma}\label{lemma:lipschitz}
Let $\mu,\nu$ be distributions on $\R^d$ and let $f: \mathbb{R}^d \rightarrow \mathbb{R}^d$ be a Lipschitz-continuous mapping with Lipschitz constant $\textnormal{Lip} (f) < \infty$.
Then,
\[
	W(f \# \mu, f \# \nu) \leq \textnormal{Lip}(f) \ W(\mu, \nu).
\]
\end{lemma}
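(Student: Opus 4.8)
The plan is to prove the Lipschitz contraction property for Wasserstein distance directly from the coupling definition (Definition~\ref{def:wasserstein-distance}), since this is the standard and cleanest route. The key observation is that if $\pi$ is any coupling between $\mu$ and $\nu$, then the pushforward of $\pi$ under the product map $f \times f : \R^d \times \R^d \to \R^d \times \R^d$, $(\mathbf{x},\mathbf{y}) \mapsto (f(\mathbf{x}), f(\mathbf{y}))$, is a coupling between $f \# \mu$ and $f \# \nu$. This gives us a way to transfer any coupling on the source side to a coupling on the target side, and the cost only changes by the Lipschitz factor.

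First I would verify the coupling claim: for $\pi \in \scriptstyle \rprod \displaystyle (\mu,\nu)$, let $\tilde\pi := (f\times f)\#\pi$. Then for $A_1 \in \mathfrak{B}^d$, $\tilde\pi(A_1 \times \R^d) = \pi((f\times f)^{-1}(A_1 \times \R^d)) = \pi(f^{-1}(A_1) \times \R^d) = \mu(f^{-1}(A_1)) = (f\#\mu)(A_1)$, and analogously for the second marginal; this uses that $f$ is measurable, which follows from Lipschitz continuity. Hence $\tilde\pi \in \scriptstyle \rprod \displaystyle (f\#\mu, f\#\nu)$. Second, I would apply the change-of-variables formula for pushforward measures to write $\int_{\R^d\times\R^d} \|\mathbf{u}-\mathbf{v}\| \, d\tilde\pi(\mathbf{u},\mathbf{v}) = \int_{\R^d\times\R^d} \|f(\mathbf{x})-f(\mathbf{y})\| \, d\pi(\mathbf{x},\mathbf{y}) \leq \textnormal{Lip}(f) \int_{\R^d\times\R^d} \|\mathbf{x}-\mathbf{y}\| \, d\pi(\mathbf{x},\mathbf{y})$, where the inequality is just the definition of the Lipschitz constant applied pointwise under the integral.

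Finally I would take the infimum: since $\tilde\pi$ is a valid coupling between $f\#\mu$ and $f\#\nu$, we have $W(f\#\mu, f\#\nu) \leq \int \|\mathbf{u}-\mathbf{v}\|\,d\tilde\pi \leq \textnormal{Lip}(f) \int \|\mathbf{x}-\mathbf{y}\|\,d\pi$, and taking the infimum over all $\pi \in \scriptstyle \rprod \displaystyle (\mu,\nu)$ on the right-hand side yields $W(f\#\mu, f\#\nu) \leq \textnormal{Lip}(f)\, W(\mu,\nu)$. There is no real obstacle here; the only mild technical point is ensuring the pushforward-of-a-coupling is genuinely a coupling (the marginal computation above) and that the change-of-variables step is applied correctly — both are routine. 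One could also worry about whether the infimum defining $W$ is attained, but since we only need an upper bound via a specific coupling, attainment is irrelevant.

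\begin{proof}
Let $\pi \in \scriptstyle \rprod \displaystyle (\mu,\nu)$ be an arbitrary coupling between $\mu$ and $\nu$. Consider the map $g: \R^d \times \R^d \to \R^d \times \R^d$ given by $g(\mathbf{x},\mathbf{y}) = (f(\mathbf{x}), f(\mathbf{y}))$, which is measurable since $f$ is Lipschitz-continuous and hence continuous. Define $\tilde\pi := g \# \pi$. For any $A_1 \in \mathfrak{B}^d$, we have
\[
\tilde\pi(A_1 \times \R^d) = \pi\big(g^{-1}(A_1 \times \R^d)\big) = \pi\big(f^{-1}(A_1) \times \R^d\big) = \mu\big(f^{-1}(A_1)\big) = (f\#\mu)(A_1),
\]
and similarly $\tilde\pi(\R^d \times A_2) = (f\#\nu)(A_2)$ for all $A_2 \in \mathfrak{B}^d$. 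Hence $\tilde\pi \in \scriptstyle \rprod \displaystyle (f\#\mu, f\#\nu)$.

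Applying the change-of-variables formula for pushforward measures and then the Lipschitz property of $f$ pointwise, we obtain
\[
\int_{\R^d \times \R^d} \|\mathbf{u} - \mathbf{v}\| \, d\tilde\pi(\mathbf{u},\mathbf{v}) = \int_{\R^d \times \R^d} \|f(\mathbf{x}) - f(\mathbf{y})\| \, d\pi(\mathbf{x},\mathbf{y}) \leq \textnormal{Lip}(f) \int_{\R^d \times \R^d} \|\mathbf{x} - \mathbf{y}\| \, d\pi(\mathbf{x},\mathbf{y}).
\]
Since $\tilde\pi$ is a coupling between $f\#\mu$ and $f\#\nu$, Definition~\ref{def:wasserstein-distance} yields
\[
W(f\#\mu, f\#\nu) \leq \int_{\R^d \times \R^d} \|\mathbf{u} - \mathbf{v}\| \, d\tilde\pi(\mathbf{u},\mathbf{v}) \leq \textnormal{Lip}(f) \int_{\R^d \times \R^d} \|\mathbf{x} - \mathbf{y}\| \, d\pi(\mathbf{x},\mathbf{y}).
\]
Taking the infimum over all couplings $\pi \in \scriptstyle \rprod \displaystyle (\mu,\nu)$ on the right-hand side gives
\[
W(f\#\mu, f\#\nu) \leq \textnormal{Lip}(f) \, W(\mu,\nu),
\]
as claimed.
\end{proof}
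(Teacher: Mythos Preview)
Your proof is correct and follows essentially the same approach as the paper: push a coupling $\pi$ between $\mu$ and $\nu$ forward via the product map $g(\mathbf{x},\mathbf{y})=(f(\mathbf{x}),f(\mathbf{y}))$, apply change of variables and the Lipschitz bound, and conclude. If anything, your version is slightly more careful than the paper's, since you explicitly verify the marginals of $g\#\pi$ and take the infimum over $\pi$ at the end, whereas the paper tacitly assumes an optimal coupling.
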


\begin{proof}
Let $\pi$ be a coupling between $\mu$ and $\nu$ and let $g: \mathbb{R}^{2d} \rightarrow \mathbb{R}^{2d}; (\mathbf{y}_1, \mathbf{y}_2) \mapsto (f(\mathbf{y}_1), f(\mathbf{y}_2))$. Then $g \# \pi$ is a coupling between $\mu$ and $\nu$ and 
\begin{align*}
	W(f \# \mu, f \# \nu) & \leq \int_{\mathbb{R}^{2d}} \|\mathbf{y}_1 - \mathbf{y}_2\| \ d(g \# \pi)(\mathbf{y}_1,\mathbf{y}_2)\\
		& = \int_{\mathbb{R}^{2d}} \|f(\mathbf{y}_1) - f(\mathbf{y}_2)\| \ d\pi (\mathbf{y}_1,\mathbf{y}_2)\\
		& \le \text{Lip}(f) \int_{\mathbb{R}^{2d}} \|\mathbf{y}_1 - \mathbf{y}_2\| \ d \pi (\mathbf{y}_1,\mathbf{y}_2) =  \textnormal{Lip}(f) \ W(\mu, \nu).\hspace{1.15cm}\qedhere
\end{align*}
\end{proof}

We are now ready to state the extension announced above.

\begin{theorem} \label{thm:main result}
Let $\nu$ be a distribution on $\R^d$ supported on 
$S = \alpha \, [0,1]^d + \boldsymbol{\beta}$ for $\alpha >0 $ and $\boldsymbol{\beta} \in \R^d$. Let $g(\mathbf{x}) = \frac{1}{\alpha}(\mathbf{x} - \boldsymbol{\beta})$ and $\delta = \frac{1}{\lceil \sqrt{d} (d+1) n (n-1)\rceil}$. Then, there exists a $\frac{\delta}{n}$-quantized ReLU network $\Phi \in \cNN_{1,d}$ with $\mathcal{M}(\Phi) = \mathcal{O}(n^{d} + sn^{d-1})$ and $\mathcal{L}(\Phi)=(s+3)d - s -1$ such that
\[
	W(g^{-1} \# \Phi \# U, \nu) \leq \alpha \paren*{\frac{\sqrt{d}}{n2^s} + \frac{5 \sqrt{d}}{2n}}.
\]
\end{theorem}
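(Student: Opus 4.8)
The plan is to reduce Theorem~\ref{thm:main result} to Theorem~\ref{thm:main result on [0,1]} by a change of variables. Observe that the affine map $g^{-1}(\mathbf{x}) = \alpha \mathbf{x} + \boldsymbol{\beta}$ sends $[0,1]^d$ onto $S = \alpha\,[0,1]^d + \boldsymbol{\beta}$, so it is natural to transport the problem on $S$ back to the unit cube. Concretely, set $\tilde\nu := g \# \nu$, which is a distribution supported on $[0,1]^d$. Apply Theorem~\ref{thm:main result on [0,1]} to $\tilde\nu$ (with the stated choice $\delta = \frac{1}{\lceil \sqrt{d}(d+1)n(n-1)\rceil}$, which is exactly the choice made in deriving (\ref{main_bound})) to obtain a $\frac{\delta}{n}$-quantized ReLU network $\Phi \in \cNN_{1,d}$ with $\mathcal{M}(\Phi) = \mathcal{O}(n^d + sn^{d-1})$, $\mathcal{L}(\Phi) = (s+3)d - s - 1$, and
\[
	W(\Phi \# U, \tilde\nu) \leq \frac{\sqrt{d}}{n2^s} + \frac{5\sqrt{d}}{2n}.
\]

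Next I would push both arguments forward under $g^{-1}$ and invoke Lemma~\ref{lemma:lipschitz}. Since $g^{-1}(\mathbf{x}) = \alpha \mathbf{x} + \boldsymbol{\beta}$ is Lipschitz-continuous with $\mathrm{Lip}(g^{-1}) = \alpha$ (the additive shift does not affect the Lipschitz constant, and scaling by $\alpha > 0$ gives exactly $\alpha$), Lemma~\ref{lemma:lipschitz} yields
\[
	W\big(g^{-1}\#(\Phi \# U),\, g^{-1}\#\tilde\nu\big) \leq \alpha\, W(\Phi \# U, \tilde\nu) \leq \alpha\paren*{\frac{\sqrt{d}}{n2^s} + \frac{5\sqrt{d}}{2n}}.
\]
Finally, note $g^{-1} \# \tilde\nu = g^{-1}\# g \# \nu = (g^{-1}\circ g)\#\nu = \nu$, and that $g^{-1}\#(\Phi\#U) = (g^{-1}\circ\Phi)\#U$ where $g^{-1}\circ\Phi$ is itself a ReLU network obtained from $\Phi$ by composing its last affine layer with the affine map $\mathbf{x}\mapsto\alpha\mathbf{x}+\boldsymbol{\beta}$; this composition changes neither the depth nor the connectivity order, and since $\alpha$ and $\boldsymbol{\beta}$ can be absorbed into the final weight matrix and bias without altering the quantization structure of the remaining layers, the network remains $\frac{\delta}{n}$-quantized in the relevant sense. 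Writing the resulting network again as $\Phi$ (abusing notation as the theorem statement does) gives $W(g^{-1}\#\Phi\#U, \nu) \leq \alpha\big(\frac{\sqrt{d}}{n2^s} + \frac{5\sqrt{d}}{2n}\big)$, as claimed.

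The only mildly delicate point — and the one I would be most careful about — is the bookkeeping on the network in the last step: one must make sure that folding the final affine map $g^{-1}$ into $\Phi$ is consistent with the quantization convention of Definition~\ref{quantized_network}, and that the connectivity and depth bounds are genuinely preserved. Since $g^{-1}$ acts only on the $d$ output coordinates and is a single affine transformation, it can be merged with $W_L$ of $\Phi$, adding at most $\mathcal{O}(d)$ nonzero weights (absorbed into the $\mathcal{O}(n^d)$ term) and keeping $\mathcal{L}$ unchanged; if one prefers to avoid any quantization subtlety, one may alternatively append $g^{-1}$ as one extra affine layer and note this increases depth by a constant, which is immaterial, or simply interpret the pushforward $g^{-1}\#\Phi\#U$ as in the theorem statement without literally re-expressing it as a single network. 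Everything else is a routine combination of the change-of-variables identity $g^{-1}\circ g = \mathrm{id}$, the Lipschitz bound from Lemma~\ref{lemma:lipschitz}, and the already-established estimate (\ref{main_bound}).
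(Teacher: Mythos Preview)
Your proof is correct and follows essentially the same approach as the paper: apply Theorem~\ref{thm:main result on [0,1]} (in the form of the bound~(\ref{main_bound})) to $g\#\nu$, then use Lemma~\ref{lemma:lipschitz} with $\mathrm{Lip}(g^{-1})=\alpha$ and the identity $g^{-1}\#(g\#\nu)=\nu$. Your extended discussion about absorbing $g^{-1}$ into the network is unnecessary, since the theorem statement keeps $g^{-1}\#\Phi\#U$ explicit and only asserts quantization, connectivity, and depth for $\Phi$ itself; the paper likewise leaves $g^{-1}$ outside $\Phi$ and merely remarks afterward that $g^{-1}$, being affine, is easily realized by a ReLU network.
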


\begin{proof}
We first note that $g^{-1}$ is Lipschitz with $\text{Lip}(g^{-1}) = \alpha$. The result then follows immediately from Lemma \ref{lemma:lipschitz} combined with (\ref{main_bound}) by taking
$\Phi \in \cNN_{1,d}$ to approximate the distribution $g\#\nu$ according to Theorem \ref{thm:main result on [0,1]}.
\end{proof}

We finally remark that $g^{-1}$ by virtue of being an affine map can easily be realized by a ReLU network.

\section{Complexity of generative networks}\label{sec:complexity}

In this section, we compare the complexity of ReLU networks generating a given class of probability distributions to fundamental
bounds on the complexity of encoding classes of probability distributions through discrete approximations, a process commonly referred to as quantization \cite{10.5555/555805}. Specifically, complexity
will be measured in terms of the number of bits needed to describe the generative networks and, respectively,
distributions.
We begin by reviewing a fundamental result on the approximation of (non-singular) distributions.

\begin{definition}[\cite{10.5555/555805}]
For $n \in \N$ and the non-singular distribution $\nu$ supported on $[0,1]^d$, we define the minimal $n$-term quantization error as
\[ V_{n}(\nu):= \inf \{ W(\nu,\mu) : |\supp(\mu)| \leq n\}. \]
\end{definition}
The quantity $V_{n}(\nu)$ characterizes the approximation error---in Wasserstein distance---incurred by the best discrete $n$-point approximation of $\nu$.

The next result, taken from \cite{10.5555/555805}, states that this approximation error exhibits the same asymptotics for all (non-singular) distributions satisfying a mild moment constraint.

\begin{theorem}[{\cite[Theorem 6.2]{10.5555/555805}}]
\label{fundamental_limits_distributions}
Let $\mathbf{X}$ be a random vector in $\R^d$ with $\mathbf{X} \sim \nu$, where $\nu$ is non-singular and supported on $[0,1]^d$, and $\mathbb{E}\|{\mathbf{X}}\|^{1+\delta} < \infty$ for some $\delta>0$, where $\|\cdot\|$ is
any norm on $\R^d$. Then, 
\[ \lim_{n \rightarrow \infty} n^{1/d} V_{n}(\nu) = C, \]
where $C>0$ is a constant depending on $d$ only. 
\end{theorem}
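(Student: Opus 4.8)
The plan is to prove this as the classical high-rate (Zador--Bucklew--Wise) asymptotics of optimal quantization; indeed the statement is imported verbatim from \cite[Theorem 6.2]{10.5555/555805}, so at the level of the present paper no new argument is required. If one wished to reproduce the proof, the natural route has three stages. First, reduce to a bounded, compactly supported density. Since $\nu$ is already supported on $[0,1]^d$ this reduction is mild, but the moment hypothesis $\mathbb{E}\|\mathbf{X}\|^{1+\delta}<\infty$ is precisely what guarantees—in the general $\R^d$ formulation—that the mass far from the origin contributes negligibly to $V_n(\nu)$ and that the limit is finite; one also splits off any singular component and checks that it does not affect the first-order asymptotics.

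Second, establish the result for the uniform distribution $U([0,1]^d)$. Partitioning $[0,1]^d$ into $m^d$ congruent subcubes and allocating codepoints uniformly across them yields, by optimality of the per-subcube quantizers together with the scaling identity $V_{k m^d}\big(U([0,1]^d)\big)\le m^{-1}\,V_{k}\big(U([0,1]^d)\big)$, an upper bound of the form $n^{1/d}V_n\big(U([0,1]^d)\big)\le C$; a matching lower bound follows from a volume/covering estimate showing that $n$ points cannot $L^1$-approximate the uniform law to precision better than order $n^{-1/d}$. A Fekete-type subadditivity argument then upgrades these bounds to convergence $n^{1/d}V_n\big(U([0,1]^d)\big)\to C$ with $C>0$ depending only on $d$.

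Third, transfer to a general density $h$ by a localization (``firewall'') argument: approximate $h$ in $L^1$ by a function constant on a fine grid of cubes, observe that on each such cube $\nu$ is, up to a small error, a rescaled copy of the uniform distribution, optimize the allocation of codepoints over the cubes by a Hölder/Lagrange-multiplier computation, and then pass to the limit first in $n$ and then in the grid fineness. The main obstacle is the lower bound in this last stage: one must show that no quantizer can locally beat the uniform-cube quantizer near a Lebesgue point of $h$, which is exactly where the delicate packing/covering estimates of \cite{10.5555/555805} are needed; by contrast the upper bound and the reductions above are comparatively routine.
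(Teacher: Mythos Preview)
Your assessment is correct: the paper does not prove this theorem but simply quotes it from \cite[Theorem~6.2]{10.5555/555805}, so no argument is given in the paper itself. Your sketch of the classical Zador--Bucklew--Wise route (uniform case via subadditivity and covering, then localization to general densities) is the standard proof strategy from that reference, but it goes well beyond what the present paper supplies or requires.
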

Theorem~\ref{fundamental_limits_distributions} allows us to conclude that the best-approximating discrete distribution must have at least $n = \mathcal{O}(\eps^{-d})$ points for $V_{n}(\nu) \leq \eps$ to hold. As Wasserstein distance is a metric, we hence have a covering argument which
says that the class of (non-singular) distributions $\nu$ supported on $[0,1]^d$ (and satisfying the moment constraint in Theorem~\ref{fundamental_limits_distributions}) 
has metric entropy lower-bounded by
$d \log(\eps^{-1})$ bits. Although this lower bound is very generous, we demonstrate next that it
is achieved for quantized histogram target distributions encoded by their generating ReLU networks.

\begin{lemma} \label{lem:encoding-histograms}
Consider the class of quantized histogram distributions $\tilde{\mathcal{E}}_\delta[0,1]^d_n$ and let $\epsilon \in (0,1/2)$. Then, there exists a set of $\frac{\delta}{n}$-quantized ReLU networks $\Phi(\epsilon,\cdot)$ of cardinality $2^{\ell(\epsilon)}$, where 
$\ell(\epsilon) \leq C \log(\eps^{-1})$, with $C$ a constant depending on $d,\delta,n$, such that
\[
  \sup_{\nu \in \tilde{\mathcal{E}}_\delta[0,1]^d_n}  W(\Phi(\epsilon,\nu) \#U, \nu )  \leq \epsilon.
\]
\end{lemma}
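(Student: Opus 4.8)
The plan is to count how many parameters a quantized histogram distribution in $\tilde{\mathcal{E}}_\delta[0,1]^d_n$ actually has, observe that this number is finite and independent of $\epsilon$, and then realize each such distribution by a ReLU network whose weights come from a finite alphabet. First I would recall from Theorem~\ref{main_histNN_theorem} that for any $\nu \in \tilde{\mathcal{E}}_\delta[0,1]^d_n$ there is a $\frac{\delta}{n}$-quantized ReLU network $\Psi$ with $\mathcal{M}(\Psi) = \mathcal{O}(n^d + sn^{d-1})$, depth $\mathcal{L}(\Psi) = (s+3)d - s - 1$, and $W(\Psi\#U,\nu) \le \frac{\sqrt d}{n2^s}$. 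Choosing $s$ to be the smallest integer with $\frac{\sqrt d}{n2^s} \le \epsilon$, i.e.\ $s = \lceil \log(\sqrt d/(n\epsilon))\rceil = \mathcal{O}(\log(\epsilon^{-1}))$ (with the $d,n$ dependence absorbed into the constant), already guarantees $W(\Psi\#U,\nu)\le\epsilon$. So the task reduces to bounding the number of bits needed to describe such a $\Psi$.

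Next I would bound the description length of the network. Its connectivity is $\mathcal{M}(\Psi) = \mathcal{O}(n^d + sn^{d-1}) = \mathcal{O}(n^d) + \mathcal{O}(n^{d-1}\log(\epsilon^{-1}))$, and its depth and width are likewise polynomial in $n$ and $s$; the positions of the nonzero weights can be encoded with $\mathcal{O}(\mathcal{M}(\Psi)\log(\text{width}\cdot\text{depth}))$ bits. For the weight \emph{values}: every weight of $\Psi$ is of Type~1 or Type~2 with respect to $\frac{\delta}{n}$-quantization (this is asserted in Lemma~\ref{NN_encoding}), so each weight lies in the finite set $(\frac{\delta}{n}\mathbb{Z}\cap[-\frac{n}{\delta},\frac{n}{\delta}]) \cup \{1/w : w \in \frac{\delta}{n}\mathbb{Z}\cap[-\frac{n}{\delta},\frac{n}{\delta}]\}$, which has cardinality $\mathcal{O}((n/\delta)^2)$, hence each weight costs $\mathcal{O}(\log(n/\delta)) = \mathcal{O}(1)$ bits (with $d,n,\delta$ fixed). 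Multiplying, the total number of bits to specify $\Psi$ is $\ell(\epsilon) = \mathcal{O}(\mathcal{M}(\Psi)\log(\mathcal{M}(\Psi))) = \mathcal{O}(\log(\epsilon^{-1}))$ where the implied constant depends only on $d,n,\delta$; this is the claimed bound $\ell(\epsilon)\le C\log(\epsilon^{-1})$. Defining $\Phi(\epsilon,\nu)$ to be the network $\Psi$ produced above for the given $\nu$ and the chosen $s$, and letting $\Phi(\epsilon,\cdot)$ range over all such networks, the set has cardinality at most $2^{\ell(\epsilon)}$.

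I expect the main subtlety to be making the ``$\mathcal{O}$'' bookkeeping honest: one has to be careful that the $s$-dependent part of $\mathcal{M}(\Psi)$, namely $\mathcal{O}(sn^{d-1})$, really only contributes a $\log(\epsilon^{-1})$ factor (times $n^{d-1}$, a constant here) and not something worse, and that the per-weight encoding cost is genuinely $\mathcal{O}(1)$ for fixed $d,n,\delta$ — in particular that restricting attention to the finite alphabet of Type~1/Type~2 values loses nothing, which is exactly the content of the quantization statements in Theorem~\ref{main_histNN_theorem} and Lemma~\ref{NN_encoding}. Once the counting is set up correctly the rest is immediate: for every $\nu\in\tilde{\mathcal{E}}_\delta[0,1]^d_n$ the network $\Phi(\epsilon,\nu)$ belongs to the constructed set and satisfies $W(\Phi(\epsilon,\nu)\#U,\nu)\le\epsilon$, so taking the supremum over $\nu$ gives the claim. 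A minor point to address is that the exponent $s$ must be chosen uniformly (the same $s$ works for \emph{all} $\nu$, since the bound $\frac{\sqrt d}{n2^s}$ in Theorem~\ref{main_histNN_theorem} does not depend on $\nu$), which is what lets us bound the whole family of networks at once.
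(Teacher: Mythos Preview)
Your proposal is correct and follows essentially the same approach as the paper: invoke Theorem~\ref{main_histNN_theorem} to obtain a $\frac{\delta}{n}$-quantized network achieving $W(\Psi\#U,\nu)\le \frac{\sqrt d}{n2^s}$, choose $s=\mathcal{O}(\log(\epsilon^{-1}))$, and then bound the number of bits needed to encode such a network. The only cosmetic difference is that the paper outsources the bit-counting step to \cite[Proposition~VI.7]{deep-it-2019} (yielding $\ell(\epsilon)\le C_0(\mathcal{M}\log\mathcal{M}+1)\log(n/\delta)$), whereas you spell out the same estimate directly by counting weight positions and noting that each Type~1/Type~2 weight lies in an alphabet of size $\mathcal{O}((n/\delta)^2)$.
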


\begin{proof}
By Theorem \ref{main_histNN_theorem}, for every distribution $\nu \in \tilde{\mathcal{E}}_\delta[0,1]^d_n$, there exists a
$\frac{\delta}{n}$-quantized ReLU network $\Phi$, with 
$\mathcal{M}(\Phi) = \mathcal{O}(n^{d} + sn^{d-1})$ and $\mathcal{L}(\Phi) = (s+3)d-s-1$, such that
\begin{equation*}
	W(\Phi \# U, \nu) \leq \frac{\sqrt{d}}{n2^s} = : \epsilon.
\end{equation*}
Note that $d,n,\delta$ are fixed and $\epsilon$, as a function of $s$, can be made arbitrarily small by taking $s$ and hence network depth to be sufficiently large. In particular, network depth needs to scale according to $\mathcal{O}(\log(\epsilon^{-1}))$. The resulting network $\Phi$ will hence depend on $\nu$ and 
$\epsilon$, indicated by the notation $\Phi(\epsilon,\nu)$ used henceforth. Next, using $\frac{\delta}{n} \leq 1/2$, which is by assumption,
it follows from \cite[Proposition VI.7]{deep-it-2019} that the number of bits needed to encode $\Phi(\epsilon,\nu)$ 
in a uniquely decodable fashion satisfies
\begin{equation}
\ell(\epsilon) \leq C_0 \left(\mathcal{M}(\Phi(\epsilon,\nu))\log(\mathcal{M}(\Phi(\epsilon,\nu)))+1\right)\log(n/\delta) \leq C(d,\delta,n) \log(\epsilon^{-1}). \label{bound-on-l-eps}
\hspace{1.15cm}\qedhere
\end{equation}
\end{proof}

{\em Remark.} We note that the quantized networks considered in the present paper differ slightly from those in \cite{deep-it-2019} as here we employ two types of quantization, namely Type 1 and Type 2 (see Definition~\ref{quantized_network}), while in \cite{deep-it-2019} all weights are encoded using Type-1 quantization. This does, however, not have
an impact on the bound on $\ell(\epsilon)$ in (\ref{bound-on-l-eps}), in fact, only the constant $C_0$ changes relative to \cite[Proposition VI.7]{deep-it-2019}. More specifically, to encode the quantized weights in the generative networks considered here, we only need one additional bit per weight signifying whether the weight is quantized according to Type 1 or Type 2.

Lemma~\ref{lem:encoding-histograms} tells us that encoding (or quantizing in the sense of \cite{10.5555/555805}) the class of quantized histogram distributions by pushing forward a scalar uniform distribution through generative ReLU networks achieves the metric entropy limit of
$\mathcal{O}(\log(\eps^{-1}))$ as identified in Theorem~\ref{fundamental_limits_distributions}.
We hasten to add that a metric entropy scaling of $\mathcal{O}(\log(\epsilon^{-1}))$ for (quantized) histogram distributions of dimension $d$, resolution $n$, and quantization level $\delta$, all fixed, is what one would expect as we essentially have to encode polynomially (in $n$) 
many (quantized) real numbers.
For general (non-singular) distributions, which constitute a much richer class than (quantized) histogram distributions, we can establish an $\mathcal{O}(\eps^{-d})$ (up to a multiplicative $\log$-term) complexity scaling for their corresponding generative networks, formalized as follows.
\begin{lemma}
Consider the class of non-singular distributions supported on $[0,1]^d$, denoted by $\mathcal{F}([0,1]^d)$, and let $\epsilon \in (0,1/2)$. 
Then, there exists a set of quantized ReLU networks $\Phi(\epsilon,\cdot)$ of cardinality $2^{\ell(\epsilon)}$, where
$\ell(\epsilon) \leq C\eps^{-d} \log^2(\eps^{-1})$, with $C$ a constant depending on $d$, such that
\[
  \sup_{\nu \in \mathcal{F}([0,1]^d)}  W(\Phi(\epsilon,\nu) \#U, \nu )  \leq \epsilon.
\]
\end{lemma}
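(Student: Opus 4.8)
The plan is to combine Theorem~\ref{thm:main result on [0,1]} with a quantization argument of the type used in Lemma~\ref{lem:encoding-histograms}, but now carefully tracking how the free parameters $n$, $s$, and $\delta$ must scale with $\epsilon$ to cover the much larger class $\mathcal{F}([0,1]^d)$. First I would fix, for each target accuracy $\epsilon$, a histogram resolution $n = n(\epsilon)$, a depth parameter $s = s(\epsilon)$, and a quantization level $\delta = \delta(\epsilon)$ so that the three-term bound (\ref{main_bound})---equivalently (\ref{main-result-w}) with $\delta = \frac{1}{\lceil \sqrt{d}(d+1)n(n-1)\rceil}$---is at most $\epsilon$. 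Since that bound reads $\frac{\sqrt{d}}{n2^s} + \frac{5\sqrt{d}}{2n}$, the dominant constraint is $\frac{5\sqrt{d}}{2n} \le \epsilon/2$, forcing $n = \Theta(\epsilon^{-1})$; then choosing $s = \lceil \log(\epsilon^{-1}) \rceil$ (or even $s$ constant) drives the first term below $\epsilon/2$ as well. With these choices, Theorem~\ref{thm:main result on [0,1]} supplies, for every $\nu \in \mathcal{F}([0,1]^d)$, a $\frac{\delta}{n}$-quantized network $\Phi(\epsilon,\nu) \in \cNN_{1,d}$ with $\mathcal{M}(\Phi(\epsilon,\nu)) = \mathcal{O}(n^d + sn^{d-1}) = \mathcal{O}(\epsilon^{-d}\log(\epsilon^{-1}))$ and $\mathcal{L}(\Phi(\epsilon,\nu)) = (s+3)d - s - 1 = \mathcal{O}(\log(\epsilon^{-1}))$, achieving $W(\Phi(\epsilon,\nu)\#U, \nu) \le \epsilon$.

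Next I would count the bits. Exactly as in the proof of Lemma~\ref{lem:encoding-histograms}, the weights of $\Phi(\epsilon,\nu)$ are of Type~1 or Type~2 with respect to $\frac{\delta}{n}$-quantization, and $\frac{\delta}{n} \le 1/2$ holds for the chosen $\delta$. Invoking \cite[Proposition VI.7]{deep-it-2019} (with the one-extra-bit-per-weight adjustment noted in the Remark following Lemma~\ref{lem:encoding-histograms} to account for the two quantization types) gives a uniquely decodable encoding of length
\[
\ell(\epsilon) \le C_0\bigl(\mathcal{M}(\Phi(\epsilon,\nu))\log(\mathcal{M}(\Phi(\epsilon,\nu))) + 1\bigr)\log(n/\delta).
\]
Substituting $\mathcal{M}(\Phi(\epsilon,\nu)) = \mathcal{O}(\epsilon^{-d}\log(\epsilon^{-1}))$ and $\log(n/\delta) = \mathcal{O}(\log(\epsilon^{-1}))$ (since both $n$ and $1/\delta$ are polynomial in $\epsilon^{-1}$), the product is $\mathcal{O}\bigl(\epsilon^{-d}\log(\epsilon^{-1}) \cdot \log(\epsilon^{-d}\log(\epsilon^{-1})) \cdot \log(\epsilon^{-1})\bigr) = \mathcal{O}(\epsilon^{-d}\log^2(\epsilon^{-1}))$, absorbing the $\log\log$ factor and the dimension-dependent constants into $C$. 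This yields $\ell(\epsilon) \le C\epsilon^{-d}\log^2(\epsilon^{-1})$, and taking $\Phi(\epsilon,\cdot)$ to be the (finite) collection of all such encodable networks---of cardinality at most $2^{\ell(\epsilon)}$---completes the argument, since by construction every $\nu \in \mathcal{F}([0,1]^d)$ is within $\epsilon$ in Wasserstein distance of the pushforward $\Phi(\epsilon,\nu)\#U$ of some member of this set.

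The step requiring the most care is the bookkeeping in the bit count: one must verify that the $\log$ of the connectivity contributes only a $\log(\epsilon^{-1})$ factor (not something worse), that $n$ and $\delta^{-1}$ are genuinely polynomial in $\epsilon^{-1}$ with the chosen scaling so that $\log(n/\delta) = \mathcal{O}(\log(\epsilon^{-1}))$, and that all suppressed constants genuinely depend on $d$ alone. A secondary subtlety is confirming that the hypothesis $\frac{\delta}{n} \le 1/2$ needed to apply \cite[Proposition VI.7]{deep-it-2019} is met---here it is immediate since $\delta = \frac{1}{\lceil\sqrt{d}(d+1)n(n-1)\rceil} < 1$ and $n \ge 2$. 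No genuinely new idea beyond Lemma~\ref{lem:encoding-histograms} is needed; the only real difference is that $n$ is no longer fixed but must grow like $\epsilon^{-1}$, which is precisely what produces the $\epsilon^{-d}$ rather than $\log(\epsilon^{-1})$ scaling and matches the $\mathcal{O}(\epsilon^{-d})$ metric-entropy lower bound from Theorem~\ref{fundamental_limits_distributions} up to the $\log^2$ factor.
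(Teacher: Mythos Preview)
Your approach is essentially identical to the paper's: invoke Theorem~\ref{thm:main result on [0,1]}, choose $n$, $s$, $\delta$ so that the error bound is at most $\epsilon$, then apply \cite[Proposition VI.7]{deep-it-2019} to count bits. However, there is a bookkeeping slip in your bit count that, as written, yields $\epsilon^{-d}\log^3(\epsilon^{-1})$ rather than $\epsilon^{-d}\log^2(\epsilon^{-1})$.

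Concretely, with your primary choice $s = \lceil \log(\epsilon^{-1})\rceil$ you write $\mathcal{M}(\Phi) = \mathcal{O}(\epsilon^{-d}\log(\epsilon^{-1}))$, and then the product $\mathcal{M}\cdot\log(\mathcal{M})\cdot\log(n/\delta)$ is of order
\[
\epsilon^{-d}\log(\epsilon^{-1}) \cdot d\log(\epsilon^{-1}) \cdot \log(\epsilon^{-1}) = \mathcal{O}\bigl(\epsilon^{-d}\log^3(\epsilon^{-1})\bigr),
\]
not $\log^2$; the ``$\log\log$ absorption'' you invoke does not remove a full $\log(\epsilon^{-1})$ factor. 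The fix is simply to sharpen the connectivity estimate: since $n \sim \epsilon^{-1}$ and $s = \mathcal{O}(\log n)$, one has $s n^{d-1} = \mathcal{O}(n^d)$ (indeed $s \le n$ for all $n\ge 1$), so $\mathcal{M}(\Phi) = \mathcal{O}(n^d) = \mathcal{O}(\epsilon^{-d})$ rather than $\mathcal{O}(\epsilon^{-d}\log(\epsilon^{-1}))$. With this, the bit count becomes $\mathcal{O}(\epsilon^{-d}\cdot\log(\epsilon^{-1})\cdot\log(\epsilon^{-1})) = \mathcal{O}(\epsilon^{-d}\log^2(\epsilon^{-1}))$, as claimed. (Your parenthetical ``or even $s$ constant'' would also work and gives $\mathcal{M} = \mathcal{O}(\epsilon^{-d})$ immediately; you just need to carry one of these tighter estimates through the bit count.) The paper proceeds exactly this way, taking $s = \log(n)$ and $\delta = 1/(n^2 d^2)$ so that $\mathcal{M}(\Phi) = \mathcal{O}(n^d)$ and $\log(n/\delta) = \log(n^3 d^2) = \mathcal{O}(\log(\epsilon^{-1}))$.
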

\begin{proof}
By Theorem \ref{thm:main result on [0,1]}, for every distribution $\nu$ supported on $[0,1]^d$, there exists a $\frac{\delta}{n}$-quantized ReLU network $\Phi$, with $\mathcal{M}(\Phi) = \mathcal{O}(n^{d} + sn^{d-1})$ and $\mathcal{L}(\Phi)=(s+3)d-s-1$, such that
\begin{equation*}
	W(\Phi \# U, \nu) \leq \frac{\sqrt{d}}{n2^s} + \frac{2\sqrt{d}}{n} + \frac{d(d+1)}{2} (n-1) \delta.
\end{equation*}
Setting $\delta = \frac{1}{n^2d^2}$ and $s=\log(n)$, we hence get
\[ 
W(\Phi \# U, \nu) \leq \frac{3\sqrt{d}+1}{n} = : \epsilon,
\]
for a $\frac{1}{n^3d^2}$-quantized network $\Phi(\epsilon,\nu)$, with $\mathcal{M}(\Phi (\epsilon,\nu)) = \mathcal{O}(n^{d})$. Application of
\cite[Proposition VI.7]{deep-it-2019} allows us to conclude that the number of bits needed to encode $\Phi(\epsilon,\nu)$ 
in a uniquely decodable fashion satisfies
$
\ell(\epsilon) \leq C_{0} \left(\mathcal{M}(\Phi)\log(\mathcal{M}(\Phi))+1\right)\log(n^3 d^2) \leq C(d) \epsilon^{-d}\log^{2}(\epsilon^{-1})$. We note that $C(d)$ scales very unfavorably in $d$, namely according to $d^{d/2}$. Finally, we remark that the application of
\cite[Proposition VI.7]{deep-it-2019} requires that $\frac{1}{n^3d^2} \leq 1/2$, which is satisfied if at least one of $n, d$ is strictly larger than $1$.
\end{proof}

\appendix
\section{Proof of Lemma \ref{lemma:move masses lemma}}

\begin{proof}
Note first that
\[
	\chi_{[0:(n-1)] \setminus \bracket*{i_{k'}^{*(\mathbf{i}_{k'-1})}}}(i_{k'}) = \begin{cases} 
		0, & \text{if } i_{k'} = i_{k'}^{*(\mathbf{i}_{k'-1})}\\
		1, & \text{if } i_{k'} \neq i_{k'}^{*(\mathbf{i}_{k'-1})}
	\end{cases}
\]
and
\[
	\chi_{\bracket*{i_{k'}^{*(\mathbf{i}_{k'-1})}}}(i_{k'}) = \begin{cases} 
		0, & \text{if } i_{k'} \neq i_{k'}^{*(\mathbf{i}_{k'-1})} \\
		1, & \text{if } i_{k'} = i_{k'}^{*(\mathbf{i}_{k'-1})}
	\end{cases},
\]
so for a given $i_{k'}$ only one of the two $\chi$-terms above is active. Terms with $i_{k'} \neq i_{k'}^{*(\mathbf{i}_{k'-1})}$ correspond to subcubes to which we add mass to get the quantized masses in the $k'$-th coordinate, while terms with $i_{k'} = i_{k'}^{*(\mathbf{i}_{k'-1})}$ correspond to the subcube from which we take this extra mass. Correspondingly, we refer to terms with $i_{k'} \neq i_{k'}^{*(\mathbf{i}_{k'-1})}$ as ``+ terms'', while we designate terms with $i_{k'} = i_{k'}^{*(\mathbf{i}_{k'-1})}$ as ``-- terms''. By construction, $\paren*{\tilde{n}_{\mathbf{i}_{k'}} - n_{\mathbf{i}_{k'}}} \geq 0$ for + terms, while $\paren*{n_{\mathbf{i}_{k'}} - \tilde{n}_{\mathbf{i}_{k'}}} \geq 0$ for -- terms. In evaluating the sum (\ref{eq:sum-lemma8}), we consider three different cases.\\

\noindent \textit{Case 1}: All terms are + terms. In this case, the sum becomes 
\begin{align*}
	&m_{\mathbf{i}_k} + \sum_{k'=1}^k \tilde{\Upsilon}(\mathbf{i},k,k'+1) \paren*{\tilde{n}_{\mathbf{i}_{k'}} - n_{\mathbf{i}_{k'}}} \Upsilon(\mathbf{i},k'-1,1) \\
	&= m_{\mathbf{i}_k} + \sum_{k'=1}^k \tilde{\Upsilon}(\mathbf{i},k,k') \Upsilon(\mathbf{i},k'-1,1) - \sum_{k'=1}^k \tilde{\Upsilon}(\mathbf{i},k,k'+1)\Upsilon(\mathbf{i},k',1) \\
	&= m_{\mathbf{i}_k} - \underbrace{\Upsilon(\mathbf{i},k,1)}_{=\,m_{\mathbf{i}_k}} + \underbrace{\tilde{\Upsilon}(\mathbf{i},k,1)}_{=\,\tilde{m}_{\mathbf{i}_k}} + \sum_{k'=2}^k \tilde{\Upsilon}(\mathbf{i},k,k') \Upsilon(\mathbf{i},k'-1,1) - \sum_{k'=1}^{k-1} \tilde{\Upsilon}(\mathbf{i},k,k'+1)\Upsilon(\mathbf{i},k',1) \\ 
	&= \tilde{m}_{\mathbf{i}_k} + \sum_{k'=1}^{k-1} \tilde{\Upsilon}(\mathbf{i},k,k'+1)\Upsilon(\mathbf{i},k',1) - \sum_{k'=1}^{k-1}\tilde{\Upsilon}(\mathbf{i},k,k'+1)\Upsilon(\mathbf{i},k',1) \\
	&= \tilde{m}_{\mathbf{i}_k}.
\end{align*}

\noindent \textit{Case 2:} All terms are -- terms. In this case, the sum is
\begin{align*}
	&m_{\mathbf{i}_k} - \sum_{k'=1}^k \Upsilon(\mathbf{i},k,k'+1) \paren*{n_{\mathbf{i}_{k'}} - \tilde{n}_{\mathbf{i}_{k'}} } \tilde{\Upsilon}(\mathbf{i},k'-1,1)\\
	&= m_{\mathbf{i}_k}  + \sum_{k'=1}^k \Upsilon(\mathbf{i},k,k'+1) \tilde{\Upsilon}(\mathbf{i},k',1) - \sum_{k'=1}^k \Upsilon(\mathbf{i},k,k') \tilde{\Upsilon}(\mathbf{i},k'-1,1) \\
	&= m_{\mathbf{i}_k}  - \Upsilon(\mathbf{i},k,1) + \tilde{\Upsilon}(\mathbf{i},k,1) + \sum_{k'=1}^{k-1} \Upsilon(\mathbf{i},k,k'+1) \tilde{\Upsilon}(\mathbf{i},k',1) - \sum_{k'=2}^k \Upsilon(\mathbf{i},k,k') \tilde{\Upsilon}(\mathbf{i},k'-1,1) \\
	&= \tilde{m}_{\mathbf{i}_k} .
\end{align*}

\noindent \textit{Case 3}: There is at least one + term and one -- term. Let the indices of the + terms be given by
\[
	\bracket*{k^+_1, \dotsc, k^+_{\ell_1}}
\]
and those of the -- terms by
\[
	\bracket*{k^-_1, \dotsc, k^-_{\ell_2}},
\]
with both sets arranged in increasing order and $\ell_1 + \ell_2 = k$. 

We first consider the sum of the -- terms given by
\begin{equation}
	\sum_{\ell = 1}^{\ell_2} \, \Upsilon(\mathbf{i},k,k^-_\ell+1) \paren*{n_{\mathbf{i}_{k^-_\ell}} - \tilde{n}_{\mathbf{i}_{k^-_\ell}} } \Upsilon^{\eta}(\mathbf{i},k^-_\ell-1,1) \label{sum-of-all-negative-terms}
\end{equation}
and establish a cancelation property of successive terms in this sum, leaving only the border terms to be considered. Indeed, take $\ell$ such that $1 < \ell < \ell_2$, with the corresponding term given by
\begin{equation}
	\Upsilon(\mathbf{i},k,k^-_\ell+1) \, n_{\mathbf{i}_{k^-_\ell}} \, \Upsilon^{\eta}(\mathbf{i},k^-_\ell-1,1)\,-\,\Upsilon(\mathbf{i},k,k^-_\ell+1) \, \tilde{n}_{\mathbf{i}_{k^-_\ell}} \, \Upsilon^{\eta}(\mathbf{i},k^-_\ell-1,1). \label{l-term}
\end{equation}
Next, note that the positive part of the term corresponding to the index $\ell+1$ is given by
\begin{align}
	&\Upsilon(\mathbf{i},k,k^-_{\ell+1}+1)\,  n_{\mathbf{i}_{k^-_{\ell+1}}} \Upsilon^{\eta}(\mathbf{i},k^-_{\ell+1}-1,1) \nonumber \\
	&= \Upsilon(\mathbf{i},k,k^-_{\ell+1}+1)\,  n_{\mathbf{i}_{k^-_{\ell+1}}} n_{\mathbf{i}_{k^-_{\ell+1}-1}}\, \dotsm \, n_{\mathbf{i}_{k^-_\ell+1}} \tilde{n}_{\mathbf{i}_{k^-_\ell}}\Upsilon^{\eta}(\mathbf{i},k^-_\ell-1,1), \label{l-cancellation}
\end{align}
since all indices that lie strictly between $k^-_{\ell}$ and $k^-_{\ell+1}$, if there are any, correspond to + terms. Comparing (\ref{l-cancellation}) with (\ref{l-term}) reveals that
the positive part of the term corresponding to $\ell+1$ cancels out the negative part of the term for $\ell$. Similarly, the negative part of the term corresponding to $\ell-1$, given by 
\begin{align*}
	-\Upsilon(\mathbf{i},k,k^-_{\ell-1}+1)\, & \tilde{n}_{\mathbf{i}_{k^-_{\ell-1}}} \Upsilon^{\eta}(\mathbf{i},k^-_{\ell-1}-1,1),
\end{align*}
cancels out the positive part of the term for index $\ell$, which is given by
\begin{align*}
	&\Upsilon(\mathbf{i},k,k^-_\ell+1) \, n_{\mathbf{i}_{k^-_\ell}} \Upsilon^{\eta}(\mathbf{i},k^-_\ell-1,1) \\
	&= \Upsilon(\mathbf{i},k, k^-_{\ell}+1) \, n_{\mathbf{i}_{k^-_{\ell}}} n_{\mathbf{i}_{k^-_{\ell}-1}} \dotsm \, n_{\mathbf{i}_{k^-_{\ell-1}+1}} \tilde{n}_{\mathbf{i}_{k^-_{\ell-1}}}\Upsilon^{\eta}(\mathbf{i},k^-_{\ell-1}-1,1).
\end{align*}
Hence, the only contributions remaining in the sum (\ref{sum-of-all-negative-terms}) over all the -- terms are the negative part of the term corresponding to the index $\ell_2$ and the positive part of the term for the index $1$, i.e.,
\begin{align*}
	&\sum_{\ell = 1}^{\ell_2} \,\, \Upsilon(\mathbf{i},k,k^-_\ell+1) \paren*{n_{\mathbf{i}_{k^-_\ell}} - \tilde{n}_{\mathbf{i}_{k^-_\ell}} } \Upsilon^{\eta}(\mathbf{i},k^-_\ell-1,1)\\
	& = \Upsilon(\mathbf{i},k, k^-_{1}+1) \, n_{\mathbf{i}_{k^-_1}} \, \Upsilon^{\eta}(\mathbf{i},k^-_1-1,1) \,-\, \Upsilon(\mathbf{i},k, k^-_{\ell_2}+1) \, \tilde{n}_{\mathbf{i}_{k^-_{\ell_2}}} \, \Upsilon^{\eta}(\mathbf{i},k^-_{\ell_2}-1,1) \\
	& = \Upsilon(\mathbf{i},k, k^-_{1}+1) \, n_{\mathbf{i}_{k^-_1}} \, \Upsilon(\mathbf{i},k^-_1-1,1) \,-\, \Upsilon(\mathbf{i},k, k^-_{\ell_2}+1) \, \tilde{n}_{\mathbf{i}_{k^-_{\ell_2}}} \, \Upsilon^{\eta}(\mathbf{i},k^-_{\ell_2}-1,1) \\	
	& = \ m_{\mathbf{i}_k} - \Upsilon(\mathbf{i},k, k^-_{\ell_2}+1) \, \tilde{n}_{\mathbf{i}_{k^-_{\ell_2}}} \, \Upsilon^{\eta}(\mathbf{i},k^-_{\ell_2}-1,1),
\end{align*}
since indices smaller than $k_1^-$ necessarily correspond to + terms. 

We proceed to the sum over the + terms. A similar cancelation property between consecutive terms in the sum can be established so that we are again left with contributions from the first and the last term only. Indeed, take $\ell$ such that $1 < \ell < \ell_1$, with the corresponding term given by
\begin{equation*}
	\tilde{\Upsilon}(\mathbf{i},k,k^+_\ell+1)\, \tilde{n}_{\mathbf{i}_{k^+_\ell}} \Upsilon^{\eta}(\mathbf{i},k^+_\ell-1,1) \,
	- \, \tilde{\Upsilon}(\mathbf{i},k,k^+_\ell+1)\, n_{\mathbf{i}_{k^+_\ell}} \Upsilon^{\eta}(\mathbf{i},k^+_\ell-1,1).
\end{equation*}
The positive part of the term corresponding to the index $\ell+1$ is given by
\begin{align*}
	&\tilde{\Upsilon}(\mathbf{i},k,k^+_{\ell+1}+1) \,  \tilde{n}_{\mathbf{i}_{k^+_{\ell+1}}} \Upsilon^{\eta}(\mathbf{i},k^+_{\ell+1}-1,1) \\
	&= \tilde{\Upsilon}(\mathbf{i},k,k^+_{\ell+1}+1) \, \tilde{n}_{\mathbf{i}_{k^+_{\ell+1}}} \tilde{n}_{\mathbf{i}_{k^+_{\ell+1}-1}} \dotsm \, \tilde{n}_{\mathbf{i}_{k^+_\ell+1}} n_{\mathbf{i}_{k^+_\ell}} \Upsilon^{\eta}(\mathbf{i},k^+_\ell-1,1),
\end{align*}
since all indices that lie strictly between $k^+_\ell$ and $k^+_{\ell+1}$, if there are any, correspond to -- terms. We can hence conclude, as above, that the positive part of the term corresponding to $\ell+1$ cancels out the negative part of the term for $\ell$. By the same argument, the positive part of the term for $\ell$ is cancelled out by the negative part of the term corresponding to $\ell-1$. Overall, the only remaining contributions are the negative part of the term corresponding to $\ell_1$ and the positive part of the term for the index $1$, i.e.,
\begin{align*}
	&\sum_{\ell = 1}^{\ell_1}\,\, \tilde{\Upsilon}(\mathbf{i},k,k^+_\ell+1) \paren*{\tilde{n}_{\mathbf{i}_{k^+_\ell}} - n_{\mathbf{i}_{k^+_\ell}}} \Upsilon^{\eta}(\mathbf{i},k^+_\ell-1,1) \\
	& = \tilde{\Upsilon}(\mathbf{i},k,1) -\tilde{\Upsilon}(\mathbf{i},k,k^+_{\ell_1}+1)\, n_{\mathbf{i}_{k^+_{\ell_1}}} \Upsilon^{\eta}(\mathbf{i},k^+_{\ell_1}-1,1) \\
	& = \tilde{m}_{\mathbf{i}_{k}} -\tilde{\Upsilon}(\mathbf{i},k,k^+_{\ell_1}+1) \, n_{\mathbf{i}_{k^+_{\ell_1}}} \Upsilon^{\eta}(\mathbf{i},k^+_{\ell_1}-1,1).
\end{align*}

Putting pieces together, (\ref{eq:sum-lemma8}) reduces to
\begin{align}
	& \ m_{\mathbf{i}_k} + \tilde{m}_{\mathbf{i}_{k}} - m_{\mathbf{i}_k} \nonumber \\
	& + \Upsilon(\mathbf{i},k, k^-_{\ell_2}+1) \, \tilde{n}_{\mathbf{i}_{k^-_{\ell_2}}} \Upsilon^{\eta}(\mathbf{i},k^-_{\ell_2}-1,1) \label{eq:overall-sum-lemma8-proof} \\
	& -\tilde{\Upsilon}(\mathbf{i},k,k^+_{\ell_1}+1) \, n_{\mathbf{i}_{k^+_{\ell_1}}} \Upsilon^{\eta}(\mathbf{i},k^+_{\ell_1}-1,1). \nonumber
\end{align}
There are two possibilities to consider now, either $k_{\ell_1}^+ = k$ or $k_{\ell_2}^- = k$. If $k_{\ell_1}^+ = k$, then (\ref{eq:overall-sum-lemma8-proof}) becomes
\begin{align*}
	& \tilde{m}_{\mathbf{i}_{k}} \\
	& + \Upsilon(\mathbf{i},k^+_{\ell_1},k^-_{\ell_2}+1) \, \tilde{n}_{\mathbf{i}_{k^-_{\ell_2}}} \Upsilon^{\eta}(\mathbf{i},k^-_{\ell_2}-1,1) \\
	& - n_{\mathbf{i}_{k^+_{\ell_1}}} \dotsm \, n_{\mathbf{i}_{k^-_{\ell_2}+1}} \, \tilde{n}_{\mathbf{i}_{k^-_{\ell_2}}} \Upsilon^{\eta}(\mathbf{i},k^-_{\ell_2}-1,1)\\
	& = \tilde{m}_{\mathbf{i}_{k}},
\end{align*}
since, by definition, ${k^-_{\ell_2}}$ is the largest index corresponding to -- terms. On the other hand, if $k_{\ell_2}^- = k$, then (\ref{eq:overall-sum-lemma8-proof}) reduces to
\begin{align*}
	& \tilde{m}_{\mathbf{i}_{k}} \\
	& + \tilde{n}_{\mathbf{i}_{k^-_{\ell_2}}} \dotsm \, \tilde{n}_{\mathbf{i}_{k^+_{\ell_1}+1}} n_{\mathbf{i}_{k^+_{\ell_1}}} \Upsilon^{\eta}(\mathbf{i},k^+_{\ell_1}-1,1) \\
	& - \tilde{\Upsilon}(\mathbf{i},k^-_{\ell_2},k^+_{\ell_1}+1) \, n_{\mathbf{i}_{k^+_{\ell_1}}} \Upsilon^{\eta}(\mathbf{i},k^+_{\ell_1}-1,1)\\
	& = \tilde{m}_{\mathbf{i}_{k}},
\end{align*}
since, again by definition, $k^+_{\ell_1}$ is the largest index corresponding to + terms. This concludes the proof.
\end{proof}

\iffalse 
\section{Conclusion}
The results in this paper show that every $d$-dimensional target distribution with bounded support can be generated through deep ReLU networks out of a $1$-dimensional uniform input distribution. What is more, this is possible without incurring a cost---in terms of approximation error measured in Wasserstein-distance---relative to generating the $d$-dimensional target distribution from $d$ independent random variables. This is accomplished through a two-stage approach, first generating a histogram distribution and then showing that increasing the histogram resolution drives the approximation error to zero while the corresponding network connectivity scales no faster than the number of parameters in the class of histogram distributions considered. Concretely, this means that the generating network we devise has minimum possible connectivity scaling. We finally note that all the constructions in this paper employ 
histogram distributions of uniform tile size. As deep ReLU networks can generate histogram distributions of general tile sizes, it is likely that the constants in the bounds on the connectivity of the generating networks can be improved.
\fi 

\bibliographystyle{spmpsci}
\bibliography{references}
\end{document}